\newtheorem{theorem}{Theorem}
\newtheorem{lemma}{Lemma}
\newtheorem{corollary}{Corollary}
\newtheorem{proposition}{Proposition}
\newtheorem{problem}{Problem}
\newcommand{\trans}{T}
\newcommand{\bsym}[1]{\boldsymbol{#1}}
\newcommand{\hbsym}[1]{\hat{\boldsymbol{#1}}}
\newcommand{\norm}[1]{\left\lVert#1\right\rVert}
\newcommand{\pmat}[1]{\begin{pmatrix}#1\end{pmatrix}}
\newcommand{\pbrac}[1]{\left({#1}\right)}
\newcommand{\abs}[1]{\left|{#1}\right|}
\newcommand{\diff}[2]{\frac{\mathrm{d}#1}{\mathrm{d}#2}}
\newcommand{\pdiff}[2]{\frac{\partial #1}{\partial #2}}
\begin{document}

\title{An Efficient Multi-solution Solver for the Inverse Kinematics of 3-Section Constant-Curvature Robots}


\author{
\authorblockN{Ke Qiu, Jingyu Zhang, Danying Sun, Rong Xiong, Haojian Lu, Yue Wang}
\authorblockA{State Key Laboratory of Industrial Control and Technology,\\
Zhejiang University, China}
}



%

\maketitle

\begin{abstract}
Piecewise constant curvature is a popular kinematics framework for continuum robots. Computing the model parameters from the desired end pose, known as the inverse kinematics problem, is fundamental in manipulation, tracking and planning tasks. In this paper, we propose an efficient multi-solution solver to address the inverse kinematics problem of 3-section constant-curvature robots by bridging both the theoretical reduction and numerical correction. We derive analytical conditions to simplify the original problem into a one-dimensional problem. Further, the equivalence of the two problems is formalised. In addition, we introduce an approximation with bounded error so that the one dimension becomes traversable while the remaining parameters analytically solvable. With the theoretical results, the global search and numerical correction are employed to implement the solver. The experiments validate the better efficiency and higher success rate of our solver than the numerical methods when one solution is required, and demonstrate the ability of obtaining multiple solutions with optimal path planning in a space with obstacles.
\end{abstract}

\IEEEpeerreviewmaketitle

\renewcommand{\thefootnote}{}
\footnotetext{Yue Wang and Haojian Lu are corresponding authors.}
\footnotetext{Email: ywang24@zju.edu.cn; luhaojian@zju.edu.cn.}
\renewcommand{\thefootnote}{\arabic{footnote}}

\section{Introduction}

Continuum robots are able to band, elongate or twist when actuated \citep{amanov2021tendon,wang2019geometric}, granting applications in various tasks like bioinspired grasping \citep{li2016progressive} and minimally invasive surgery \citep{alambeigi2020scade,bajo2016hybrid,brij2010ultrasound}. The piecewise constant curvature is a well-known kinematic framework by modelling the robot with concatenated constant-curvature arcs \citep{burgner2015survey}. In manipulation \citep{katzschmann2015autonomous}, tracking \citep{alatorre2022continuum,gonthina2020mechanics}, and planning tasks \citep{duindam2010motionplanning}, the set of model parameters should be computed from the desired end pose to guide the motion, which is the inverse kinematics problem \citep{rus2015design}. Since many continuum robots composed of an inextensible central backbone have fixed section lengths \citep{alatorre2022continuum,lai2022verticalized,ba2021design}, they need at least 3 sections to reach a specified end pose using the piecewise constant curvature model. Nevertheless, the multi-section inverse kinematics remains an open problem.

An analytical method for the inverse kinematics is always preferred because it is fast and able to find all solutions. In \citep{neppalli2009closedform}, a closed-form geometric method is proposed for specified end translations, but it does not consider the tip orientations. Explicit analytical expressions for 2 extensible sections are derived in \citep{garriga2019kinematics}. However, This method cannot be directly generalised to robots with inextensible sections \citep{garriga2019kinematics}.

The inverse kinematics is often formulated as a local optimisation problem over model parameters when analytical expressions are absent, such as the Newton-Raphson method \citep{gonthina2020mechanics} and damped least square method \citep{singh2017performances}. Control schemes can also be implemented to drive the parameters until the tip arrives at the target position \citep{alatorre2022continuum}. However, numerical inverse kinematics is to find only one solution close to the initial value and, additionally, its iteration process depends sensitively on the initial guess, causing inefficient performance \citep{lynch2017modern}.

\begin{figure}[t]
  \centering
  \includegraphics[width = 1.0 \linewidth]{./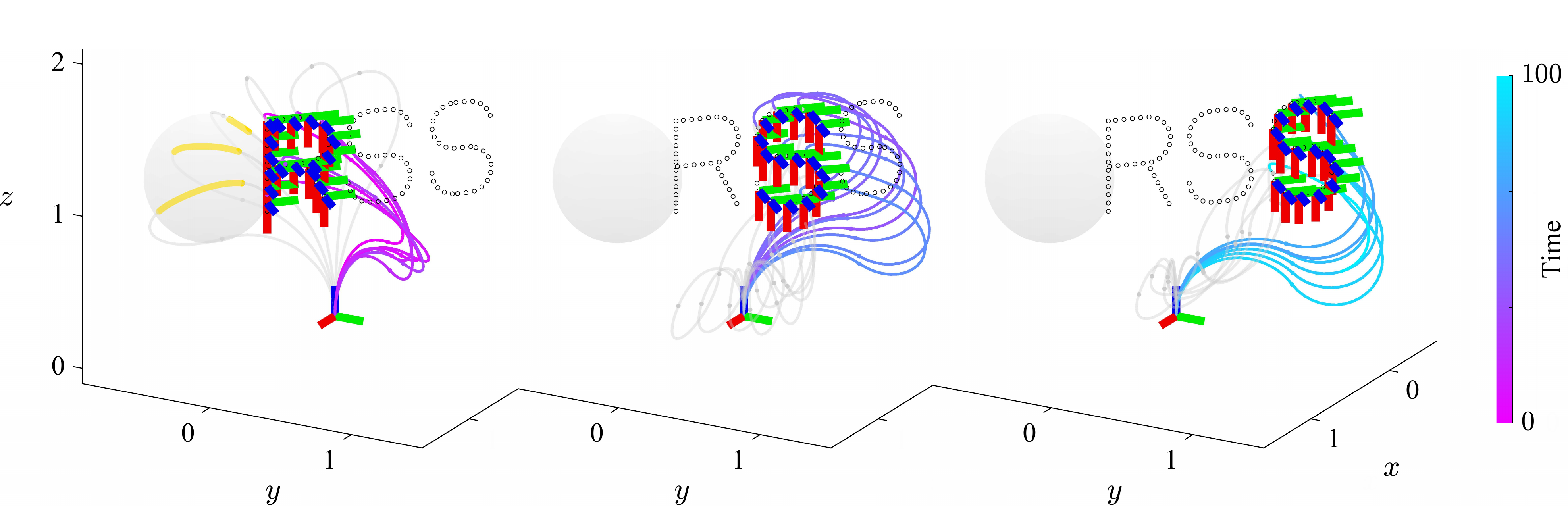}
  \caption{We achieve the optimal RSS-shaped path planning in the parameter space with our multi-solution inverse kinematics solver. The solid line indicates the resultant motion after dynamic programming, while the dashed line indicates the multiple solutions. Collisions are painted yellow.}
  \label{fig_path_rss}
\end{figure}

\begin{figure*}[t]
  \centering
  \includegraphics[width = 0.95 \textwidth]{./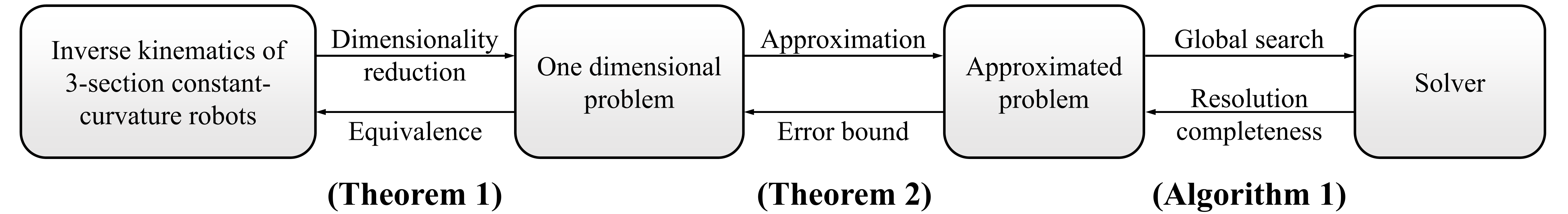}
  \caption{The flowchart of this work. Rightwards arrows are the procedures. Leftwards arrows are the theoretical guarantees.}
  \label{figa}
\end{figure*}

In this paper, we propose an efficient multi-solution solver to address the inverse kinematics problem of 3-section constant-curvature robots. The idea is to simplify the numerical search by theoretical reduction. Specifically, in a theoretical aspect, we derive the analytical conditions to reduce the dimensionality of the 3-section inverse kinematics and finally arrive at an equivalent one-dimensional problem. By further introducing an approximation, the only dimension becomes traversable and the remaining parameters are derived analytically. We show that there exists an error bound of the approximation.
In a numerical aspect, we employ a global search on the one dimension to ensure the resolution completeness of the approximated problem without hurting the efficiency. With a few steps of numerical iterations, we are able to find multiple solutions to the original inverse kinematics problem. The performance in experiments validates the better efficiency and higher success rate of our solver than the pure numerical methods when one solution is required. Besides, the ability to obtain multiple solutions allows us to achieve optimal path planning in a space with obstacles. As shown in Figure~\ref{fig_path_rss}, we solve the configuration at each via point, yielding a path where we later allocate time. In summary, the contributions are presented as follows.

\begin{enumerate}
  \item We propose two analytical conditions for the 2-section robot when the end rotation or translation is specified. We further propose the necessary conditions for the 3-section robot when the end pose is specified.
  \item We reduce the dimensionality of the original inverse kinematics problem from six to one with these conditions. We formalise the one-dimensional problem and show their equivalence.
  \item We introduce an approximation to bring traversability to the reduced problem. An error bound is derived for the approximation. 
  \item We present the solver, which applies a global search and a few steps of iterations. The computational efficiency and multi-solution accessibility are validated by different tasks in experiments. The code is released.\footnotemark
\end{enumerate}
\footnotetext{Homepage: \texttt{https://sites.google.com/view/micsolver}.}

\section{Related Work}\label{related_work}

\textbf{Modelling and Applications.} In order to better manoeuvre a particular continuum manipulator, the strategy employed for modelling the kinematics is essential, concerning the mechanical design, material properties and geometrical characteristics. It is also indispensable for fitting sensor observation \citep{mahoney2016inseparable} or state estimation \citep{lilge2022se3}. Three popular techniques of modelling include (1) The piecewise constant curvature model (see \citet{webster2010design} for a review), (2) The Cosserat rod model \citep{boyer2021dynamics,campisano2021closed,till2020dynamic}, and (3) The finite element model \citep{oliver2022concentric}. Among them, the piecewise constant curvature model is perhaps the most well-known and widely used kinematic framework for continuum robots \citep{burgner2015survey,lilge2022se3,renda2018discrete}. In practice, many tasks such as motion planning \citep{duindam2010motionplanning} and trajectory tracking \citep{alatorre2022continuum,gonthina2020mechanics,lai2022verticalized} require the solutions to the inverse kinematics. The Cosserat rod model and the finite element model solve forward partial differential equations, considering the mechanics nature of robots and usually taking seconds to minutes for an accurate trajectory \citep{renda2018discrete}, while the piecewise constant curvature model is a pure geometric model solving the configuration at each via point, yielding a path where we should later allocate time and interpolate. It is usually computed in milliseconds.

\textbf{Analytical Methods.} A closed-form geometric method is given for multiple sections by treating each section as a straight rigid link together with a spherical joint and then using trigonometric relationships \citep{neppalli2009closedform}. Tip orientations are not accounted for in this method, and approximations are unavoidable when robots have fixed section lengths.
Applying the quaternion description yields a simple formulation of the constant-curvature model, and this leads to explicit analytical expressions for the inverse kinematics \citep{garriga2019kinematics}.
Unfortunately, this work is for constant-curvature continuum robots with 2 extensible sections and cannot be directly generalised to those with inextensible sections \citep{garriga2019kinematics}.

\textbf{Numerical Methods.} It is natural to resort to numerical methods when there are no analytical approaches available. Nonlinear optimisations can be used to find arc parameters that minimise the pose error. The Newton-Raphson method is a common choice with the pseudoinverse of the Jacobian \citep{gonthina2020mechanics,singh2017performances,godage2011novel}. Alternatively, the damped least square method is usually employed for better convergence \citep{singh2017performances}. However, these methods are likely to be influenced by the nonlinearity of the problem, converging to local minima or thrashing about. Multiple initial guesses are often required to get a solution or try to find an alternative solution.
A solution to the problem can also be obtained in the sight of control. Recently, a proportional-integral-derivative control scheme is applied and achieves accurate tracking \citep{alatorre2022continuum}. The steady-state values of parameters are the solution to the inverse kinematics. However, control constants for each parameter need to be tuned individually to account for differences in response and steady-state error caused by coupling effects.
Though numerical methods make it possible to find proper model parameters, they rely on the selection of initial guesses, require plenty of computations, and are unable to find multiple solutions. Therefore, a new solver to handle the inverse kinematics problem of the piecewise constant-curvature model is desired.

\section{Inverse Kinematics of 3-Section Constant-\\Curvature Robots}\label{problem_statement}

\textbf{Notations.} The quaternion algebra $\mathbb{H}$ is the four-dimensional associative algebra over $\mathbb{R}$, and $\mathbb{S}^n$ denotes the $n$-dimensional unit sphere sitting inside $\mathbb{R}^{n+1}$ \citep{hall2013lie}. A unit quaternion $q \in \mathbb{S}^3 \subset \mathbb{H}$ can be written as a sum of real and imaginary parts, i.e.,
\begin{equation}
  \label{qabcd}
  q = a + b \bsym{i} + c \bsym{j} + d \bsym{k},
\end{equation}
where
\begin{equation}
  a, b, c, d \in \mathbb{R}, \quad \sqrt{a^2 + b^2 + c^2 + d^2} = 1.
\end{equation}
The quaternion multiplication is associative,
$$
\begin{aligned}
  \otimes: \mathbb{H} \times \mathbb{H} &\to \mathbb{H}, \\
  q_1, q_2 &\leadsto q_1 \otimes q_2.
\end{aligned}
$$
For a unit quaternion $q$, the multiplicative inverse $q^{-1}$ is identical to its conjugate $q^*$, which is defined by
\begin{equation}
  \label{upstar}
  q^* = a - b \bsym{i} - c \bsym{j} - d \bsym{k}.
\end{equation}
We use $\pbrac{\cdot}^\trans$ to denote the transpose operation on matrices. For a more elaborate elucidation, see Appendix~\ref{appendix_catalog_of_common_formulae}.

\textbf{Forward Kinematics of Piecewise Constant-Curvature Robots.} We follow the conventional assumptions of the piecewise constant curvature model \citep{webster2010design}: (1) sections are modelled as a series of circular arcs, whose bending directions and angles are independent; (2) attachments between two adjacent sections are negligible; (3) adjacent circular arcs are mutually tangential.

Consider a robot modelled in $N$ arc sections. The Cartesian coordinate systems are denoted by $\{F_1\}, \{F_2\}, \dots, \{F_N\}$, respectively, with the origin located at the proximal endpoint of each arc section. There is also a Cartesian coordinate system $\{F_{N+1}\}$ attached to the distal endpoint of the $N$-th arc. For $\lambda = 1, \dots, N$, the relative rotation and translation from $\{F_\lambda\}$ to $\{F_{\lambda+1}\}$ is denoted by a unit quaternion $q_\lambda$ and a vector $^{\lambda+1}_\lambda\bsym{r} = \bsym{r}_\lambda$ with respect to $\{F_\lambda\}$, and section lengths $L_\lambda$ is known. Then the forward kinematics can be written as
\begin{equation}
  \label{sNrot}
  q = q_1 \otimes q_2 \otimes \cdots \otimes q_N,
\end{equation}
\begin{equation}
  \label{sNtrans}
  \bsym{r} = {}^{N+1}_{1}\bsym{r},
\end{equation}
where the translation $\bsym{r}$ is calculated by
\begin{equation}
  \label{sNr}
  {}^{N+1}_{\lambda}\bsym{r} = {}^{\lambda+1}_{\lambda}\bsym{r} + q_\lambda \otimes {}^{N+1}_{\lambda+1}\bsym{r} \otimes q_\lambda^*,
\end{equation}
for $\lambda$ from $N$ to $1$.

\textbf{Model Parameterisation.} Figure~\ref{sketch} presents the parameterisation of a 1-section robot using the constant curvature model, where $\kappa_1$ is the curvature and $\phi_1$ indicates the bending plane. Assume $\kappa_1 L_1 \in [0, \pi]$ and $\phi_1 \in [0, 2 \pi]$ because in most cases a continuum robot is unable to exceed this limitation, and according to the geometric relationships, we obtain
\begin{equation}
  \label{r}
  \bsym{r}_1 = \frac{1}{\kappa_1} \pmat{\cos{\phi_1} - \cos{\kappa_1 L_1} \cos{\phi_1} \\
  \sin{\phi_1} - \cos{\kappa_1 L_1} \sin{\phi_1} \\ \sin{\kappa_1 L_1}}
\end{equation}
Because the rotational axis lies exactly on the $\pbrac{\bsym{i}, \bsym{j}}$-plane,
the element in the last dimension of the corresponding quaternion must be zero.
In fact, we have
\begin{equation}
  \label{q}
  q_1 = \cos{\frac{\kappa_1 L_1}{2}} + \sin{\frac{\kappa_1 L_1}{2}} \pbrac{-\sin{\phi_1} \bsym{i} + \cos{\phi_1} \bsym{j}}.
\end{equation}
Comparing it with~(\ref{qabcd}) we can get
\begin{equation}
  \label{abcd}
  \begin{aligned}
    &a_1 = \cos{\frac{\kappa_1 L_1}{2}}, &b_1& = -\sin{\frac{\kappa_1 L_1}{2}} \sin{\phi_1}, \\
    &c_1 = \sin{\frac{\kappa_1 L_1}{2}} \cos{\phi_1},\quad &d_1& = 0.
  \end{aligned}
\end{equation}
We have $a_1 \ge 0$ since the bending angle $\kappa_1 L_1 \le \pi$. From~(\ref{abcd}) we can see that
\begin{equation}
  \label{prop1pfeq1}
  \kappa_1 = \frac{2}{L_1} \arccos{a_1},\quad \phi_1 = \mathrm{arctan2}{\pbrac{-b_1, c_1}}.
\end{equation}
As a result, the parameters $q_1, \bsym{r}_1$ are equivalent to the widely used arc parameters $\kappa_1, \phi_1$. Furthermore, we note that
\begin{equation}
  \hbsym{r}_1 = \bsym{r}_1 / \norm{\bsym{r}_1} = \pmat{c_1 & -b_1 & a_1}^\trans,
\end{equation}
and
\begin{equation}
  \label{r1norm}
  \norm{\bsym{r}_1} = \frac{L_1 \sqrt{1 - a_1^2}}{\arccos{a_1}}.
\end{equation}
So $q_1$ and $\bsym{r}_1$ can be computed from the three entries of $\hbsym{r}_1$ and a constant length $L_1$. Therefore, $\hbsym{r}_1$ parameterises the 1-section robot \textit{alone}. The same argument shows that if $\hbsym{r}_\lambda$ is known, then both $q_\lambda$ and $\bsym{r}_\lambda$ are uniquely determined. Consequently, we use $\hbsym{r}_\lambda$ as the model parameter.

\begin{figure}[t]
  \centering
  \includegraphics[width=0.8\linewidth]{./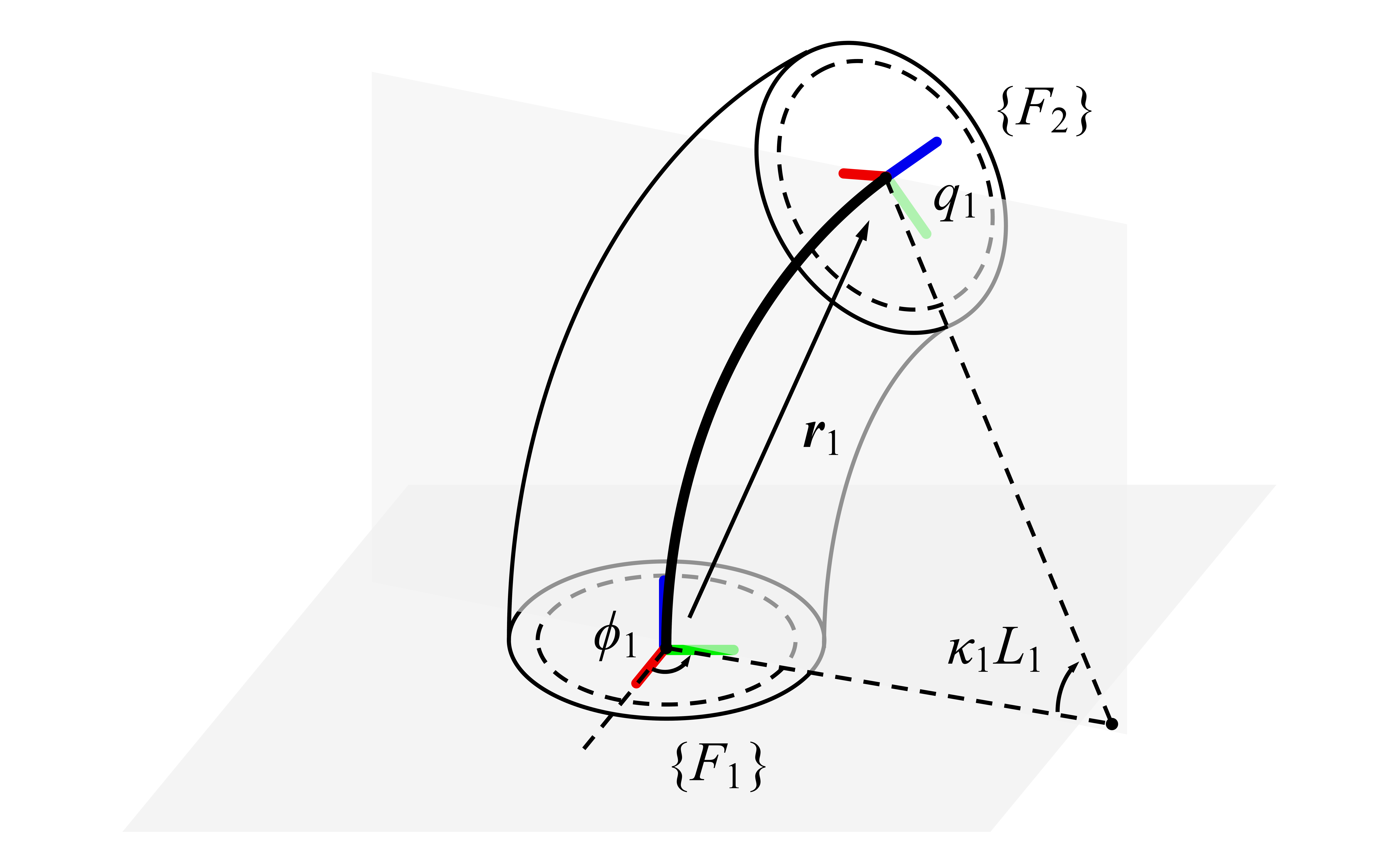}
  \caption{Continuum robots modelled in constant curvature kinematics. The bold line with length $L_1$ indicates the circular backbone curve. The parameters describe the angle $\phi_1$ of the bending plane containing the arc, curvature $\kappa_1$, end rotation $q_1$ and translation $\bsym{r}_1$.}
  \label{sketch}
\end{figure}

\textbf{Problem Statement.} Since $\hbsym{r}_\lambda \in \mathbb{S}^2$, each section offers 2 degrees of freedom. When $N = 2$, whether given an end translation or rotation, the robot will always have one redundant degree of freedom. When $N = 3$, the robot composed of 3 sections offers 6 degrees of freedom at the end. The number matches that of a pose in the task space, hence the robot is able to satisfy both rotational and translational constraints without redundancy. This is the reason that the 3-section model is widely used. Expand~(\ref{sNrot}) and~(\ref{sNtrans}) and we have the problem of the inverse kinematics of 3-section constant-curvature robots.

\begin{problem}[Inverse Kinematics for 3-Section Robots]\label{prob1}
  Given the end rotation $q = \pmat{a & b & c & d}^\trans$ and translation $\bsym{r}$, for $\lambda = 1, 2, 3$, let $\hbsym{r}_\lambda = \pmat{c_\lambda & -b_\lambda & a_\lambda}^\trans$, then $q_\lambda$ and $\bsym{r}_\lambda$ are determined by $\hbsym{r}_\lambda$. Find $\hbsym{r}_1, \hbsym{r}_2, \hbsym{r}_3$ such that
  \begin{equation}
    \label{s3rot}
    q = q_1 \otimes q_2 \otimes q_3,
  \end{equation}
  \begin{equation}
    \label{s3trans}
    \bsym{r} = \bsym{r}_1
    + q_1 \otimes \pbrac{\bsym{r}_2 + q_2 \otimes \bsym{r}_3 \otimes q_2^*} \otimes q_1^*.
  \end{equation}
\end{problem}

\begin{figure}[t]
\centering
\includegraphics[width=0.8\linewidth]{./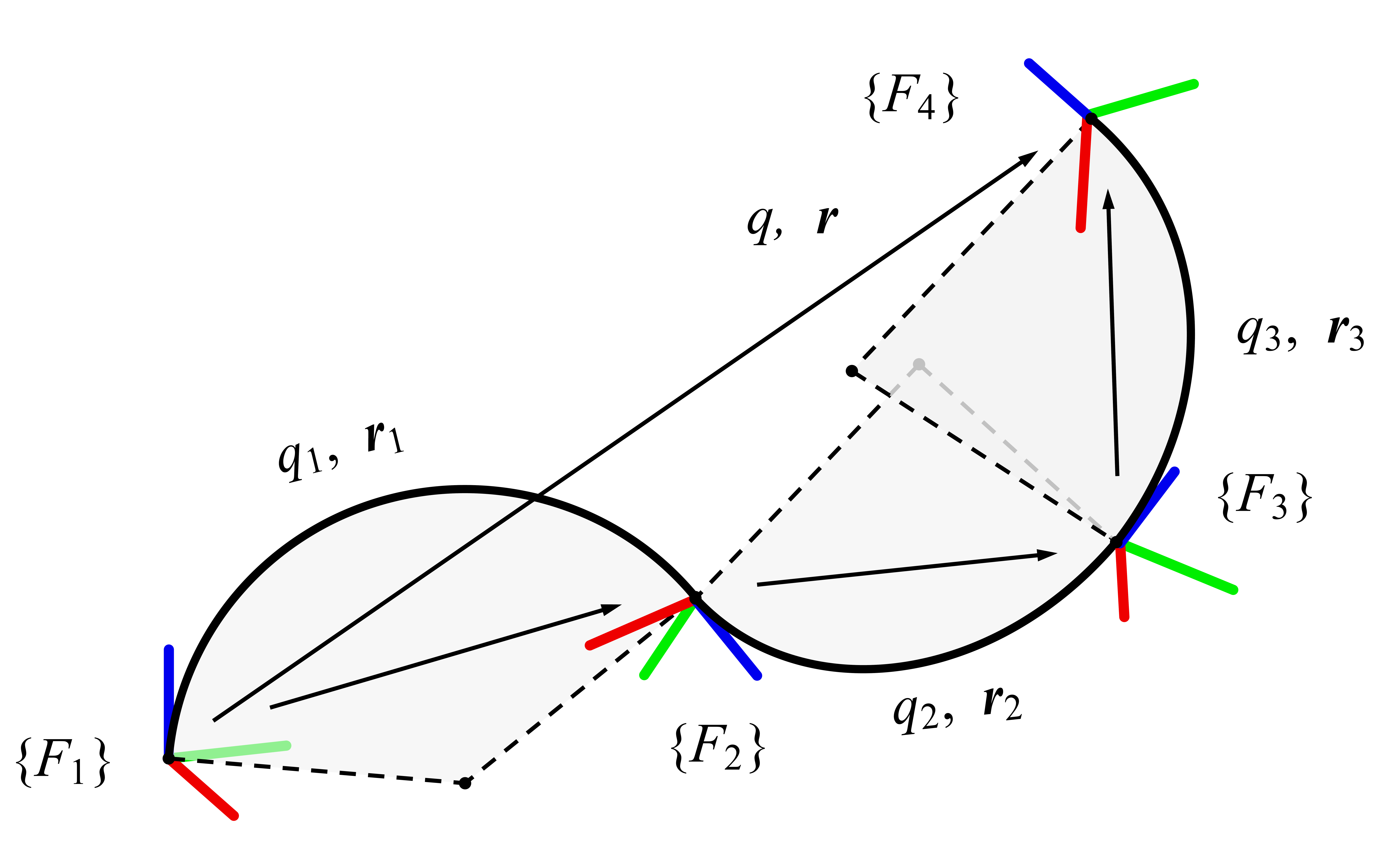}
\caption{An illustration of the variables $q_\lambda, \bsym{r}_\lambda$ and coordinate systems $\{F_\lambda\}$ in a robot composed of 3 circular arcs with a total of six degrees of freedom.}
\label{sketch3}
\end{figure}

\section{Theoretical Results and the Solver}\label{theory_and_the_solver}

As shown in Figure~\ref{figa}, theoretical results are derived so that the range of solutions to Problem~\ref{prob1} can be restricted on a one-dimensional implicit curve (Section~\ref{dimensionality_reduction}). By approximating the curve as a circle on the unit sphere with bounded error, we are able to traverse the circle and solve the remaining variables analytically (Section~\ref{approximation}). A traversal on that circle ensures the resolution completeness. Then numerical correction is employed to arrive at the more accurate solution to Problem~\ref{prob1}. Bridging the analytical and numerical approaches, the algorithm of our solver is accomplished (Section~\ref{the_solver}).


\subsection{Dimensionality Reduction}\label{dimensionality_reduction}
\textbf{2-Section Robots.} We begin from the inverse kinematics of 2-section robots, which is formulated as
\begin{equation}
  \label{s2rot}
  q = q_1 \otimes q_2,
\end{equation}
\begin{equation}
  \label{s2trans}
  \bsym{r} = \bsym{r}_1 + q_1 \otimes \bsym{r}_2 \otimes q_1^*.
\end{equation}
The next lemma gives a property of the 2-section constant curvature inverse kinematics when the end rotation is specified.

\begin{lemma}[Rotational Constraint]\label{lemma1}
  Given an end rotation $q = \pmat{a & b & c & d}^\trans$, the solutions to~(\ref{s2rot}) are those $\hbsym{r}_1 \in \mathbb{S}^2$ satisfying a linear constraint
  \begin{equation}
    \label{lemma1eq1}
    \bsym{n} \cdot \hbsym{r}_1 = 0,
  \end{equation}
  where $\bsym{n} = \pmat{b & c & d}^\trans$.
  When $\hbsym{r}_1$ is specified, $\hbsym{r}_2$ can also be obtained linearly through
  \begin{equation}
    \label{lemma1eq2}
    \hbsym{r}_2 = \bsym{A} \hbsym{r}_1,
  \end{equation}
  where
  \begin{equation}\label{lemma1eq3}
  \bsym{A} = \pmat{-a & -d & c \\ d & -a & -b \\ c & -b & a}.
  \end{equation}
\end{lemma}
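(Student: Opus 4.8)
The plan is to work directly with the quaternion representations. From~(\ref{s2rot}) we have $q_2 = q_1^* \otimes q$, so the whole lemma reduces to translating the single scalar equation "$d_2 = 0$" (the constraint that the rotational axis of section~2 lies in the $(\bsym{i},\bsym{j})$-plane, established in~(\ref{abcd})) into a statement about $\hbsym{r}_1$. Write $q_1 = a_1 + b_1\bsym{i} + c_1\bsym{j} + d_1\bsym{k}$ with $d_1 = 0$, i.e. $q_1 = a_1 + b_1\bsym{i} + c_1\bsym{j}$ where $(c_1, -b_1, a_1)^\trans = \hbsym{r}_1$. First I would expand $q_2 = q_1^* \otimes q$ componentwise using the quaternion multiplication rules, treating $q = a + b\bsym{i} + c\bsym{j} + d\bsym{k}$ as given, and extract the $\bsym{k}$-component $d_2$ as a linear form in $(a_1, b_1, c_1)$ with coefficients built from $(a,b,c,d)$.

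The key computation: with $d_1 = 0$, the conjugate is $q_1^* = a_1 - b_1\bsym{i} - c_1\bsym{j}$. Multiplying out $q_1^* \otimes q$ and collecting the coefficient of $\bsym{k}$ should give something of the form $d_2 = a_1 d - b_1 c + c_1 b$ (up to a global sign, which I will fix by a careful bookkeeping of $\bsym{i}\bsym{j} = \bsym{k}$, $\bsym{j}\bsym{i} = -\bsym{k}$, etc.). Setting $d_2 = 0$ and substituting $a_1 = (\hbsym{r}_1)_3$, $-b_1 = (\hbsym{r}_1)_2$, $c_1 = (\hbsym{r}_1)_1$ should collapse exactly to $\bsym{n}\cdot\hbsym{r}_1 = 0$ with $\bsym{n} = (b,c,d)^\trans$. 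I should double-check the sign pattern here, since the identification $\hbsym{r}_1 = (c_1, -b_1, a_1)^\trans$ has a built-in sign flip on the second coordinate that must be tracked consistently.

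For the second part, once $\hbsym{r}_1$ (equivalently $q_1$) is fixed, $q_2 = q_1^* \otimes q$ is fully determined, and then $\hbsym{r}_2 = (c_2, -b_2, a_2)^\trans$ is read off from the real, $\bsym{i}$- and $\bsym{j}$-components of $q_2$. Each of $a_2, b_2, c_2$ is a linear combination of $a_1, b_1, c_1$ (the coefficients of $q$ being constants), so $\hbsym{r}_2$ depends linearly on $\hbsym{r}_1$; assembling these three linear forms into a matrix and re-expressing the inputs/outputs in terms of $\hbsym{r}$-coordinates rather than quaternion coordinates should yield exactly the matrix $\bsym{A}$ in~(\ref{lemma1eq3}). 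As a sanity check I would verify that $\bsym{A}$ is orthogonal (it should be, since $\norm{\hbsym{r}_2} = \norm{\hbsym{r}_1} = 1$ and the map is linear) — indeed $\bsym{A}\bsym{A}^\trans = (a^2+b^2+c^2+d^2)\bsym{I} = \bsym{I}$ — and that $\bsym{n}$ is (up to scale) an eigenvector-type direction annihilated appropriately, consistent with the first part.

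The main obstacle is purely the sign/index bookkeeping: the interplay between the quaternion conjugation, the non-commutative products $\bsym{i}\bsym{j}\bsym{k}$, and the nonstandard identification $\hbsym{r}_\lambda = (c_\lambda, -b_\lambda, a_\lambda)^\trans$ makes it easy to land on $\bsym{A}$ with a wrong sign in one entry or on $\bsym{n} = (b, c, -d)^\trans$. There is no conceptual difficulty — everything is linear algebra over $\mathbb{H}$ — so the proof is essentially a careful expansion of $q_1^* \otimes q$ followed by the coordinate relabeling, with the orthogonality of $\bsym{A}$ serving as a built-in consistency check.
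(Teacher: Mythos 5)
Your overall route is the same as the paper's: rewrite~(\ref{s2rot}) as $q_2 = q_1^*\otimes q$, expand the product (the paper does this via the compound-operator matrix $q^\oplus$), read the constraint~(\ref{lemma1eq1}) off the $\bsym{k}$-component $d_2 = d a_1 - c b_1 + b c_1 = \bsym{n}\cdot\hbsym{r}_1$, and read $\bsym{A}$ off the remaining three components. Your sign bookkeeping for $d_2$ and $\bsym{n}$ is consistent with the paper's.

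The one concrete error is your ``sanity check'': $\bsym{A}$ is \emph{not} orthogonal. Direct computation gives $\bsym{A}^\trans\bsym{A} = \bsym{I} - \bsym{n}\bsym{n}^\trans$ (e.g.\ the first column of $\bsym{A}$ has squared norm $a^2+d^2+c^2 = 1-b^2 \neq 1$ in general), so your identity $\bsym{A}\bsym{A}^\trans = (a^2+b^2+c^2+d^2)\bsym{I}$ is false, and the inference ``linear and norm-preserving on some unit vectors, hence orthogonal'' does not hold. The map $\hbsym{r}_1 \mapsto \bsym{A}\hbsym{r}_1$ preserves norm \emph{only} on the great circle $\bsym{n}\cdot\hbsym{r}_1 = 0$; this is precisely the sufficiency step the lemma needs (that the constructed $\hbsym{r}_2$ actually lies in $\mathbb{S}^2$), and it is the part of the paper's proof you cannot replace with global orthogonality. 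The paper verifies it by computing $\norm{\bsym{A}\hbsym{r}_1}^2 = \norm{\hbsym{r}_1}^2 - (\bsym{n}\cdot\hbsym{r}_1)^2$ and invoking the constraint. Your own setup actually offers a cleaner fix: since $q_2 = q_1^*\otimes q$ is a unit quaternion, $a_2^2+b_2^2+c_2^2 = 1 - d_2^2$, which equals $1$ exactly when the constraint $d_2 = \bsym{n}\cdot\hbsym{r}_1 = 0$ holds, so $\norm{\hbsym{r}_2}=1$ comes for free on the constraint circle. State that instead of the orthogonality claim and the argument closes.
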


\begin{proof}
  See Appendix~\ref{appendix_proof_of_lemma1}.
\end{proof}

Next we analyse the constraint of $\hbsym{r}_1$ when the end translation is given. Following~(\ref{r1norm}) we define
\begin{equation}
  \label{rho}
  \rho\pbrac{a_\lambda, L_\lambda} = \norm{\bsym{r}_\lambda} = \frac{L_\lambda \sqrt{1 - a_\lambda^2}}{\arccos{a_\lambda}},
\end{equation}
then we have the next lemma.

\begin{lemma}[Translational Constraint]\label{lemma2}
  Given an end translation $\bsym{r}$, the solutions to~(\ref{s2trans}) are those $\hbsym{r}_1$ satisfying the constraint
  \begin{equation}
    \label{lemma2eq1}
    \norm{\bsym{v}} = \rho\pbrac{\frac{\bsym{u} \cdot \bsym{v}}{\norm{\bsym{v}}}, L_2},
  \end{equation}
  where
  \begin{equation}
    \label{lemma2eq2}
      \bsym{u} = \pmat{2 a_1 c_1 & -2 a_1 b_1 & 2 a_1^2 - 1}^\trans,
  \end{equation}
  and
  \begin{equation}
    \label{lemma2eq3}
      \bsym{v} = \bsym{r} - \rho\pbrac{a_1, L_1} \hbsym{r}_1.
  \end{equation}
  After choosing an $\hbsym{r}_1$, we can calculate $\hbsym{r}_2$ through
  \begin{equation}
    \label{lemma2eq4}
    \hbsym{r}_2 = \pmat{-1 & 0 & 0 \\ 0 & -1 & 0 \\ 0 & 0 & 1}
    \frac{\pbrac{2 \hbsym{r}_1 \hbsym{r}_1^\trans - \bsym{I}} \bsym{v}}{\norm{\bsym{v}}}.
  \end{equation}
\end{lemma}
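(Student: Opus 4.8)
The plan is to read the translational forward‑kinematics equation~(\ref{s2trans}) as the assertion that $\bsym{v}:=\bsym{r}-\bsym{r}_1$ is the image of $\bsym{r}_2$ under the rotation matrix $\bsym{R}_1$ associated with the unit quaternion $q_1$, i.e.\ $\bsym{v}=q_1\otimes\bsym{r}_2\otimes q_1^{*}=\bsym{R}_1\bsym{r}_2$. Here $\bsym{r}_1=\norm{\bsym{r}_1}\hbsym{r}_1=\rho(a_1,L_1)\hbsym{r}_1$ by the definition~(\ref{rho}) of $\rho$, so this $\bsym{v}$ is exactly~(\ref{lemma2eq3}). Since $\bsym{R}_1$ is orthogonal it preserves norms, whence $\norm{\bsym{v}}=\norm{\bsym{r}_2}=\rho(a_2,L_2)$, which already has the shape of~(\ref{lemma2eq1}); the remaining work is to pin down $a_2$ in terms of the knowns $\bsym{v}$ and $\hbsym{r}_1$.

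First I would compute $\bsym{R}_1$ from the quaternion‑to‑rotation formula. Because $d_1=0$ and $a_1^{2}+b_1^{2}+c_1^{2}=1$, the matrix simplifies, and in particular its third column is precisely $\bsym{u}$ of~(\ref{lemma2eq2}). Inverting the relation above gives $\bsym{r}_2=\bsym{R}_1^{\trans}\bsym{v}$; writing $\bsym{r}_2=\rho(a_2,L_2)\hbsym{r}_2$ with $\hbsym{r}_2=\pmat{c_2&-b_2&a_2}^{\trans}$ and reading off the third entry, $\rho(a_2,L_2)\,a_2=\pbrac{\bsym{R}_1\bsym{e}_3}^{\trans}\bsym{v}=\bsym{u}\cdot\bsym{v}$. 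Together with $\rho(a_2,L_2)=\norm{\bsym{v}}$ this yields $a_2=(\bsym{u}\cdot\bsym{v})/\norm{\bsym{v}}$, and substituting back into $\norm{\bsym{v}}=\rho(a_2,L_2)$ is exactly~(\ref{lemma2eq1}). For~(\ref{lemma2eq4}) I would verify the $3\times3$ identity
\begin{equation*}
  \pmat{-1&0&0\\0&-1&0\\0&0&1}\pbrac{2\hbsym{r}_1\hbsym{r}_1^{\trans}-\bsym{I}}=\bsym{R}_1^{\trans}
\end{equation*}
by direct multiplication, once more using $a_1^{2}+b_1^{2}+c_1^{2}=1$; then $\hbsym{r}_2=\bsym{r}_2/\norm{\bsym{r}_2}=\bsym{R}_1^{\trans}\bsym{v}/\norm{\bsym{v}}$ coincides with~(\ref{lemma2eq4}). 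One can also see this identity without computation: both left‑hand factors are half‑turns, about $\hbsym{r}_1$ and about $\bsym{e}_3$, and since $\hbsym{r}_1$ makes the angle $\tfrac12\kappa_1L_1$ with $\bsym{e}_3$, their product is the rotation by $-\kappa_1L_1$ about the axis of $q_1$.

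For the converse direction I would start from any $\hbsym{r}_1$ satisfying~(\ref{lemma2eq1}), define $\hbsym{r}_2$ by~(\ref{lemma2eq4}) — a unit vector, being an orthogonal matrix applied to $\bsym{v}/\norm{\bsym{v}}$ — check via the third‑column computation that its third entry equals $(\bsym{u}\cdot\bsym{v})/\norm{\bsym{v}}=a_2$, so that $\bsym{r}_2=\rho(a_2,L_2)\hbsym{r}_2=\norm{\bsym{v}}\,\bsym{R}_1^{\trans}\bsym{v}/\norm{\bsym{v}}=\bsym{R}_1^{\trans}\bsym{v}$, and conclude $\bsym{r}_1+q_1\otimes\bsym{r}_2\otimes q_1^{*}=\bsym{r}_1+\bsym{R}_1\bsym{r}_2=\bsym{r}_1+\bsym{v}=\bsym{r}$. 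I expect the genuine care to be needed in two places: the routine but error‑prone bookkeeping of the $3\times3$ identity and the third‑column evaluation; and the degenerate/feasibility situations — $\bsym{v}=\bsym{0}$, and ensuring that the $a_2$ delivered by the constraint lies in $[0,1]$ so that $\kappa_2L_2=2\arccos a_2\in[0,\pi]$, which should follow from monotonicity of $\rho(\cdot,L_2)$ together with the standing range assumptions on each section.
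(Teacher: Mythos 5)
Your proposal is correct and follows essentially the same route as the paper's proof: the identity you verify, $\pmat{-1&0&0\\0&-1&0\\0&0&1}\pbrac{2\hbsym{r}_1\hbsym{r}_1^{\trans}-\bsym{I}}=\bsym{R}_1^{\trans}$, is exactly the content of the paper's computation of $q_1^{+}\pbrac{q_1^{*}}^{\oplus}\hbsym{r}_2$ in terms of the involutory matrix $2\hbsym{r}_1\hbsym{r}_1^{\trans}-\bsym{I}$ and the flipped vector $\bsym{w}_2$, and the rest (norm preservation giving $\norm{\bsym{v}}=\rho(a_2,L_2)$, reading $a_2$ off the third component, and inverting the derivation for sufficiency) matches the paper step for step. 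Your half-turn interpretation and the explicit attention to feasibility ($\bsym{v}=\bsym{0}$, $a_2\in[0,1]$) are welcome additions the paper leaves implicit.
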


\begin{proof}
  See Appendix~\ref{appendix_proof_of_lemma2}.
\end{proof}

\begin{figure}[t]
  \centering
  \subfigure[]{
  \includegraphics[width = 0.4 \linewidth]{./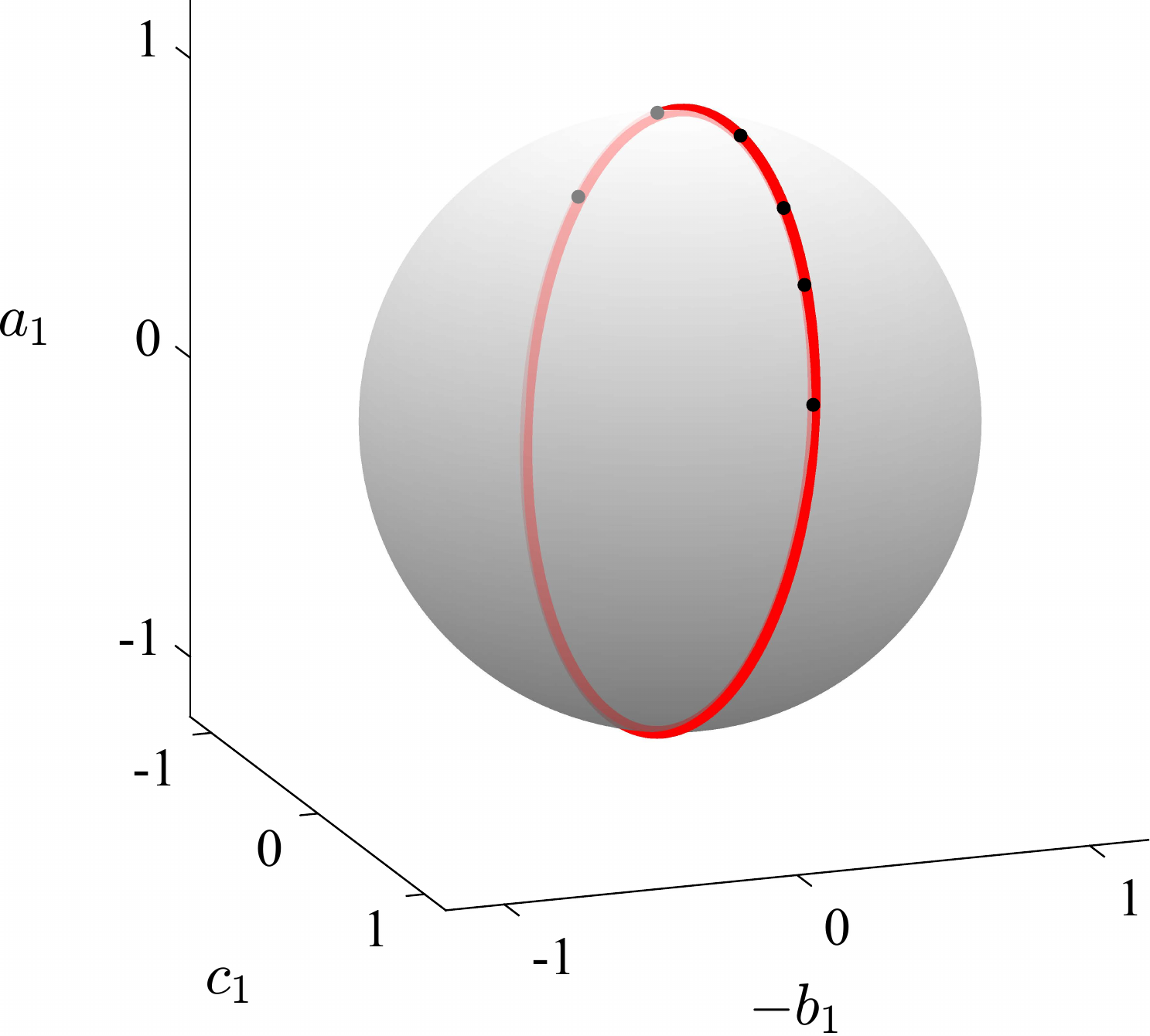}}
  \subfigure[]{
  \includegraphics[width = 0.4 \linewidth]{./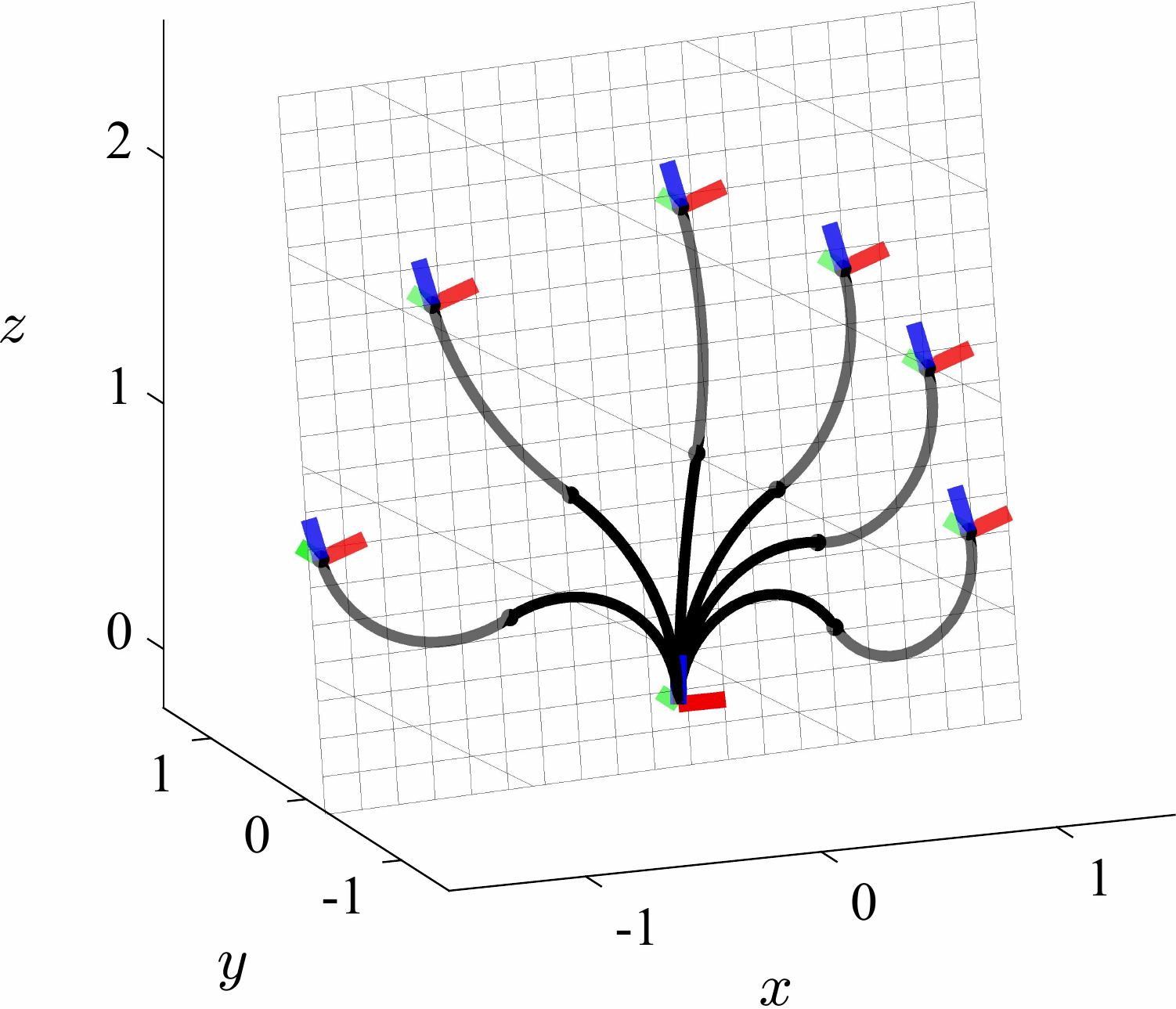}}
  \subfigure[]{
  \includegraphics[width = 0.4 \linewidth]{./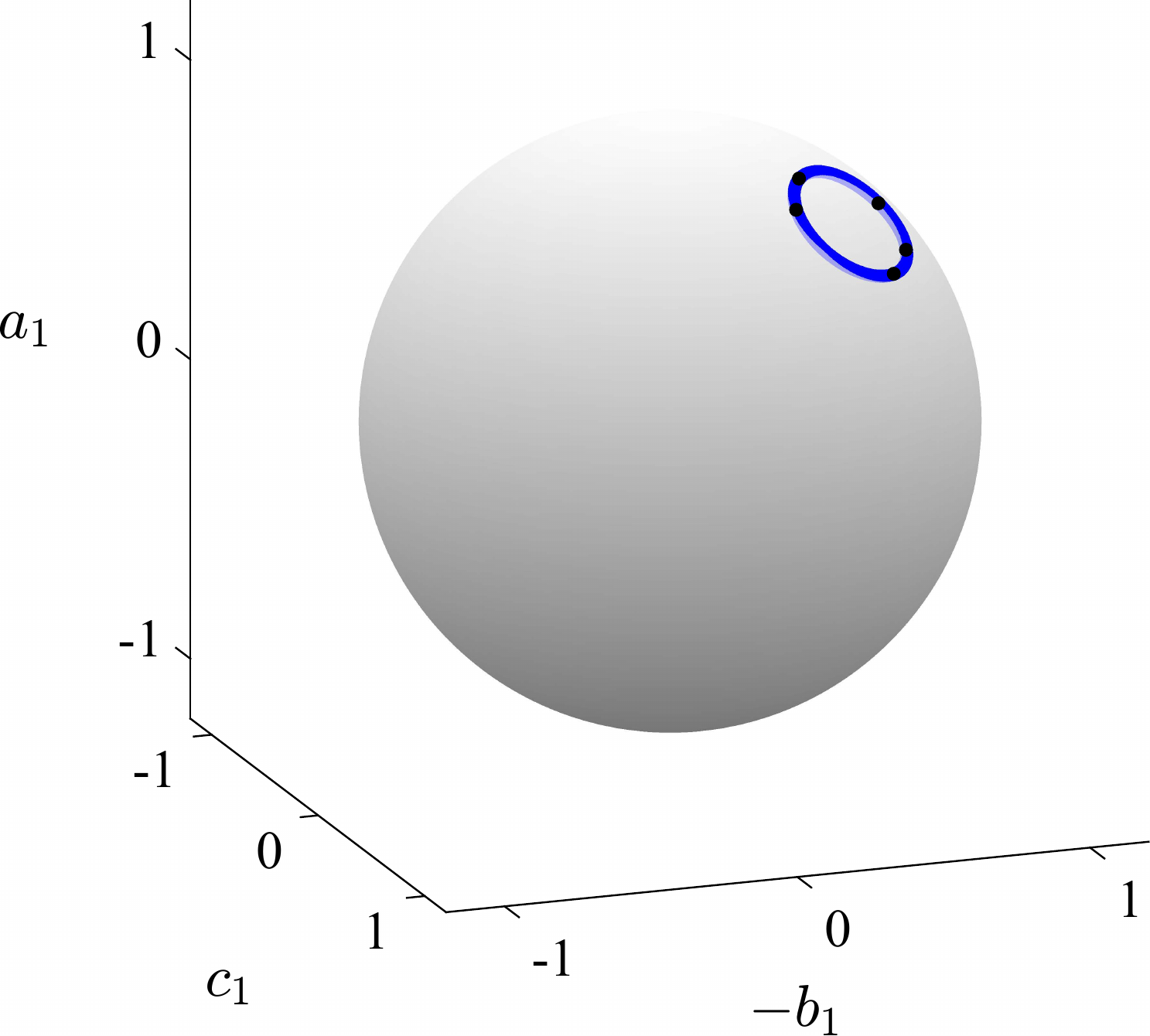}}
  \subfigure[]{
  \includegraphics[width = 0.4 \linewidth]{./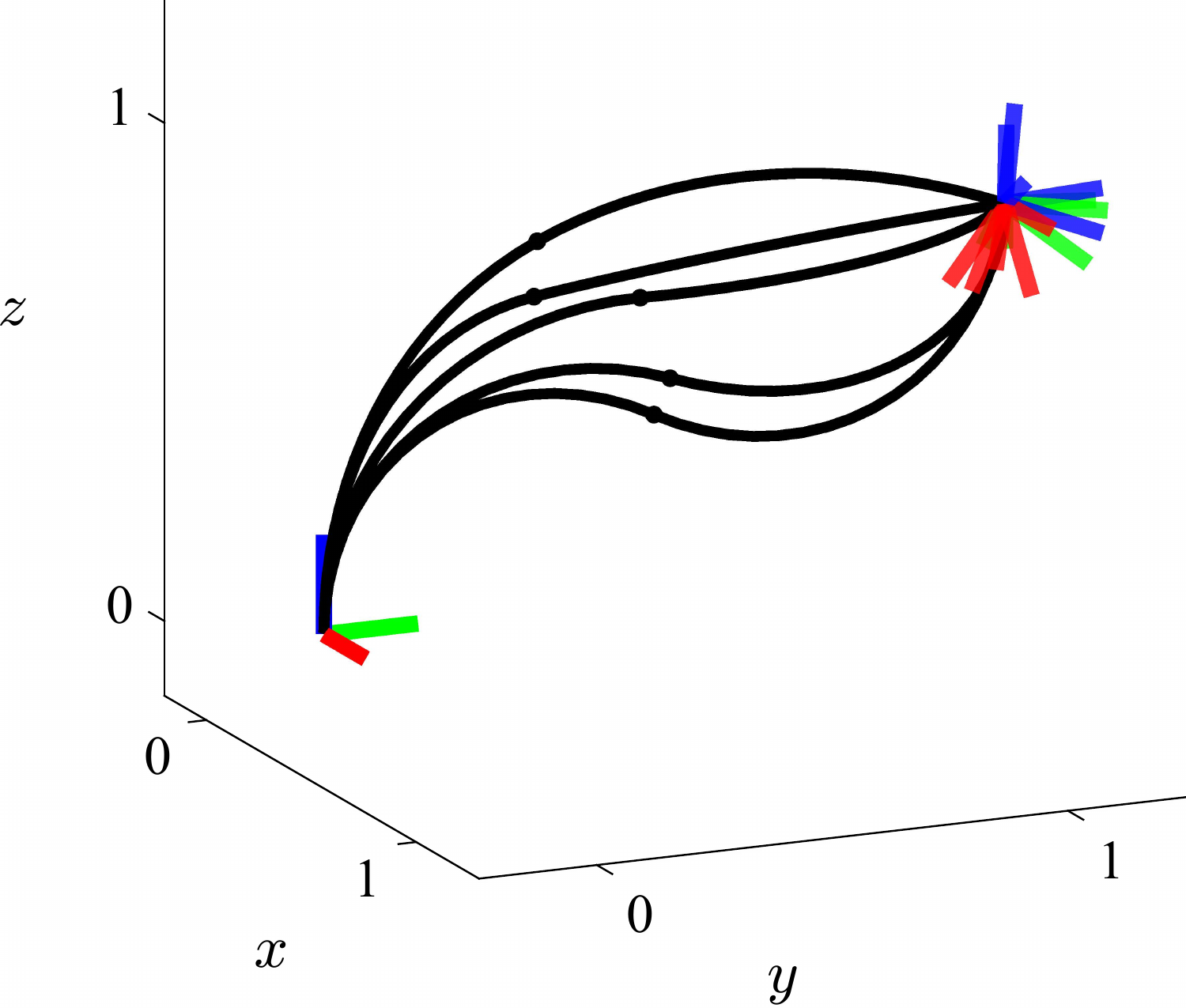}}
  \caption{With selected end rotation $q = \cos\pbrac{\alpha / 2} + \sin\pbrac{\alpha / 2} \bsym{\omega}$, where $\alpha = \pi / 10, \bsym{\omega} = \pbrac{\bsym{i} -4 \sqrt{6} \bsym{j} + \sqrt{3} \bsym{k}} / 10$, we have (a) $\hbsym{r}_1$ satisfies~(\ref{lemma1eq1}), indicated by the red line (Lemma~\ref{lemma1}); (b) for these $\hbsym{r}_1$, end rotations are identical, and the endpoints fall on the plane~(\ref{cor1eq1}) (Corollary~\ref{cor1} in Appendix~\ref{appendix_corollary}). With selected end translation $\bsym{r} = \pmat{1 & 1 & 1}^\trans$, we have (c) $\hbsym{r}_1$ satisfies~(\ref{lemma2eq1}), indicated by the blue line, and (d) for these $\hbsym{r}_1$, end translations are identical (Lemma~\ref{lemma2}). Black dots on the left and robot configurations on the right are in a one-to-one correspondence.}
  \label{fig_rot_trans}
\end{figure}

Note that $\norm{\bsym{u}} = 1$, if we approximate $\rho\pbrac{a_1, L_1} \approx L_1$ and $\rho\pbrac{\bsym{u}^\trans \bsym{v} / \norm{\bsym{v}}, L_2} \approx L_2$, then the constraint~(\ref{lemma2eq1}) becomes the one derived in \citet{neppalli2009closedform}.

\textbf{Illustrative Example.} The behaviour of $\hbsym{r}_1$ that predicted by Lemma~\ref{lemma2}, Corollary~\ref{cor1} (in Appendix~\ref{appendix_corollary}), and Lemma~\ref{lemma3} are illustrated in Figure~\ref{fig_rot_trans}. When fixing an end rotation $q$, the rotational constraint indicates that $\hbsym{r}_1$ lies on a great circle. When fixing an end translation $\bsym{r}$, the translational constraint shows that $\hbsym{r}_1$ lies approximately on a circle.

\textbf{3-Section Robots.} We now move to the 3-section inverse kinematics. Based on our previous results, we find a constraint for both $\hbsym{r}_3$ and $\hbsym{r}_1$, which is formalised in the next lemma.

\begin{lemma}\label{lemma3}
  Given an end rotation $q$ and translation $\bsym{r}$, suppose $\{\hbsym{r}_1, \hbsym{r}_2, \hbsym{r}_3\}$ is a solution to Problem~\ref{prob1}, then for $\lambda = 1, 3$, $\hbsym{r}_\lambda$ satisfies the constraint
  \begin{equation}
    \label{lemma3eq1}
    \bsym{r}^\trans \bsym{B} \hbsym{r}_\lambda - \rho\pbrac{a_\lambda, L_\lambda} d = 0.
  \end{equation}
  where
  \begin{equation}
    \label{lemma3eq2}
    \bsym{B} = \pmat{d & a & b \\ -a & d & c \\ -b & -c & d}.
  \end{equation}
\end{lemma}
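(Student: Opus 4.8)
The plan is to turn the claimed identity into a single real‑part statement about one quaternion product and then let the non‑commutative bookkeeping collapse it. Writing $\mathrm{Re}$ and $\mathrm{Vec}$ for the real and vector parts and regarding a $3$‑vector as a pure quaternion, I would first record a handful of elementary consequences of $d_\lambda=0$: that $\hbsym{r}_\lambda = q_\lambda\otimes\bsym{k}$; that $\bsym{k}\otimes p = p^*\otimes\bsym{k}$, hence $\bsym{k}\otimes p\otimes\bsym{k} = -p^*$, whenever $p\in\mathbb{H}$ has vanishing $\bsym{k}$‑component; and that the $\bsym{k}$‑component of any $y\in\mathbb{H}$ equals $-\,\mathrm{Re}\pbrac{y\otimes\bsym{k}}$. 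I would also split $\bsym{B}$ of~(\ref{lemma3eq2}) as $\bsym{B}=d\,\bsym{I}+\bsym{S}$ with $\bsym{S}$ skew‑symmetric, so that $\bsym{B}\bsym{x}=d\,\bsym{x}-\hbsym{q}\times\bsym{x}$ with $\hbsym{q}=\pmat{c & -b & a}^\trans=\mathrm{Vec}\pbrac{q\otimes\bsym{k}}$ and $-d=\mathrm{Re}\pbrac{q\otimes\bsym{k}}$. Two reductions follow. First, $\hbsym{r}_\lambda^\trans\bsym{B}\hbsym{r}_\lambda=d$ (the skew part drops, $\norm{\hbsym{r}_\lambda}=1$), and since $\bsym{r}_\lambda=\rho\pbrac{a_\lambda,L_\lambda}\hbsym{r}_\lambda$ the constraint~(\ref{lemma3eq1}) is equivalent to $\pbrac{\bsym{r}-\bsym{r}_\lambda}^\trans\bsym{B}\hbsym{r}_\lambda=0$. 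Second, for a pure quaternion $\bsym{v}$ the identity $\mathrm{Re}\pbrac{A\otimes B}=\mathrm{Re}(A)\mathrm{Re}(B)-\mathrm{Vec}(A)\cdot\mathrm{Vec}(B)$ gives $\bsym{v}^\trans\bsym{B}\hbsym{r}_\lambda=\mathrm{Re}\pbrac{q\otimes\bsym{k}\otimes\hbsym{r}_\lambda\otimes\bsym{v}}=\mathrm{Re}\pbrac{q\otimes\bsym{k}\otimes q_\lambda\otimes\bsym{k}\otimes\bsym{v}}=-\,\mathrm{Re}\pbrac{q\otimes q_\lambda^*\otimes\bsym{v}}$. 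So it suffices to show $\mathrm{Re}\pbrac{q\otimes q_\lambda^*\otimes(\bsym{r}-\bsym{r}_\lambda)}=0$ for $\lambda=1$ and $\lambda=3$.

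For $\lambda=1$ I would use $q\otimes q_1^*=q_1\otimes q_2\otimes q_3\otimes q_1^*$ together with $\bsym{r}-\bsym{r}_1=q_1\otimes\bsym{w}\otimes q_1^*$, where $\bsym{w}=\bsym{r}_2+q_2\otimes\bsym{r}_3\otimes q_2^*$ is the translation of the $\{2,3\}$ sub‑chain; the inner $q_1^*\otimes q_1$ cancels and, by cyclicity of $\mathrm{Re}$, the target collapses to $\mathrm{Re}\pbrac{q_2\otimes q_3\otimes\bsym{w}}$. Substituting $\bsym{r}_\lambda=\rho\pbrac{a_\lambda,L_\lambda}\,q_\lambda\otimes\bsym{k}$ and reshuffling cyclically, its two summands become $\mathrm{Re}\pbrac{q_2\otimes q_3\otimes q_2\otimes\bsym{k}}$ and $\mathrm{Re}\pbrac{q_3\otimes q_2\otimes q_3\otimes\bsym{k}}$, i.e.\ $\bsym{k}$‑components of the palindromic triples $q_2\otimes q_3\otimes q_2$ and $q_3\otimes q_2\otimes q_3$; this is precisely where the two‑section constant‑curvature structure enters. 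For $\lambda=3$ I would instead use $q\otimes q_3^*=q_1\otimes q_2$ and expand $\bsym{r}-\bsym{r}_3=\bsym{r}_1+q_1\otimes\bsym{r}_2\otimes q_1^*+q_1\otimes q_2\otimes\bsym{r}_3\otimes q_2^*\otimes q_1^*-\bsym{r}_3$ into four terms; after the same cyclic reductions the first two become $\bsym{k}$‑components of $q_1\otimes q_2\otimes q_1$ and $q_2\otimes q_1\otimes q_2$, while the last two reduce to $-\rho\pbrac{a_3,L_3}\,d$ and $+\rho\pbrac{a_3,L_3}\,d$ and cancel outright.

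Everything therefore rests on one auxiliary fact: if $p,p'\in\mathbb{H}$ both have vanishing $\bsym{k}$‑component, then so does the palindromic product $p\otimes p'\otimes p$. I would prove it by noting that the inner automorphism $\tau(x)=\bsym{k}\otimes x\otimes\bsym{k}^{-1}$ negates the $\bsym{i}$‑ and $\bsym{j}$‑components of $x$ while fixing the $1$‑ and $\bsym{k}$‑components, so $\tau$ agrees with the anti‑automorphic conjugation $x\mapsto x^*$ on the set of quaternions with zero $\bsym{k}$‑component. Hence $\tau\pbrac{p\otimes p'\otimes p}=p^*\otimes(p')^*\otimes p^*=\pbrac{p\otimes p'\otimes p}^*$, and since $\tau(y)=y^*$ forces the $\bsym{k}$‑component of $y$ to vanish, the claim follows. (Consistently, $q=q_1\otimes q_2\otimes q_3$ is \emph{not} palindromic, which is why $d$ is generically nonzero.)

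I expect the only genuine friction to be clerical: keeping the five‑ and six‑fold quaternion products in the correct order and tracking which factor annihilates which under cyclic permutation inside $\mathrm{Re}$, especially in the four‑term expansion for $\lambda=3$; conceptually nothing is left once the palindrome fact is in hand. A tempting alternative for the $\lambda=3$ case is to run the $\lambda=1$ argument on the chain traversed backwards (sections $3,2,1$), but putting each reversed section back into standard constant‑curvature form costs more care than it saves, so I would keep the direct computation.
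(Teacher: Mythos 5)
Your proposal is correct; I checked each reduction and both cases close. It is, however, a genuinely different argument from the paper's. The paper proves the lemma by partitioning the chain into the sub-chains $\{1,2\}\cup\{3\}$ and $\{1\}\cup\{2,3\}$, invoking the two-section results already established (Lemma~\ref{lemma1} and Corollary~\ref{cor1}, which place the sub-chain endpoint on a plane through the origin), writing the residual rotation in the matrix form $q\otimes q_3^* = \pmat{\bsym{m}^\trans\hbsym{r}_3 \\ \bsym{B}\hbsym{r}_3}$, and then discharging the remaining inner products with the identities of Proposition~\ref{prop2}; the $\lambda=1$ case additionally requires transporting the normal vector $\bsym{B}'\hbsym{r}_1$ through the rotation $\bsym{R}_1$ to identify it with $\bsym{B}\hbsym{r}_1$. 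You instead recast~(\ref{lemma3eq1}) as the vanishing of $\mathrm{Re}\pbrac{q\otimes q_\lambda^*\otimes\pbrac{\bsym{r}-\bsym{r}_\lambda}}$ via the identifications $\hbsym{r}_\lambda = q_\lambda\otimes\bsym{k}$ and $\bsym{B}\bsym{x}=d\bsym{x}-\hbsym{q}\times\bsym{x}$, and then let cyclicity of the real part plus your palindrome lemma (proved cleanly through the inner automorphism $x\mapsto\bsym{k}\otimes x\otimes\bsym{k}^{-1}$ agreeing with conjugation on quaternions with zero $\bsym{k}$-component) kill every term. What your route buys is self-containedness and symmetry: you do not need Corollary~\ref{cor1}, Proposition~\ref{prop2}(b), or the rotation-matrix transport step, and the $\lambda=1$ and $\lambda=3$ cases are handled by the same mechanism. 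What the paper's route buys is that it reuses the two-section machinery that the rest of the paper (in particular the construction of $\bsym{n}_e$ and Problem~\ref{prob2}) depends on anyway, so the intermediate objects $q_e$, $\bsym{n}_e=\bsym{B}\hbsym{r}_3$ are not wasted. One small caution if you write yours up in full: the cyclicity $\mathrm{Re}(xy)=\mathrm{Re}(yx)$ licenses only cyclic shifts, not arbitrary reorderings, and all unit-norm cancellations $q_\lambda^*\otimes q_\lambda=1$ must be kept adjacent before shifting -- your sketch respects this, but it is exactly the clerical friction you anticipated.
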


\begin{proof}
  See Appendix~\ref{appendix_proof_of_lemma3}.
\end{proof}

From~(\ref{lemma3eq1}) we know that $\hbsym{r}_1$ and $\hbsym{r}_3$ are restricted to \textit{closed} spherical curves with the same form, but no explicit expression has been found yet due to the nonlinearity. We parameterise the curve of $\hbsym{r}_3$ by one variable $t$ formally to help the subsequent analysis,
\begin{equation}
  \hbsym{r}_3 = \hbsym{r}_3\pbrac{t}, t \in [0, 1).
\end{equation}
For any $t$, we specify the end rotation and translation of the first two sections by $\hbsym{r}_3\pbrac{t}$, formulating a 2-section robot inverse kinematics problem,
\begin{equation}
  q_e\pbrac{t} = q \otimes q_3^{-1}\pbrac{t} = q \otimes q_3^*\pbrac{t},
\end{equation}
\begin{equation}
  \bsym{r}_e\pbrac{t} = \bsym{r} - q_e\pbrac{t} \otimes \bsym{r}_3\pbrac{t} \otimes q_e^*\pbrac{t}.
\end{equation}
We have $q_e\pbrac{t} = \pmat{a_e & b_e & c_e & d_e}^\trans$ and we denote
\begin{equation}\label{ne}
  \bsym{n}_e\pbrac{t} = \pmat{b_e & c_e & d_e}^\trans,
\end{equation}
and
\begin{equation}\label{ve}
  \bsym{v}_e\pbrac{t} = \bsym{r}_e\pbrac{t} - \rho\pbrac{a_1, L_1} \hbsym{r}_1.
\end{equation}
We now present the main results of this paper: the statement of the one variable problem and the theorem showing its equivalence to Problem~\ref{prob1}.

\begin{problem}\label{prob2}
  Given the end rotation $q$ and translation $\bsym{r}$, denote $\bsym{u}, \bsym{n}_e, \bsym{v}_e$ as in~(\ref{lemma2eq2}),~(\ref{ne}),~(\ref{ve}), respectively. Let $S$ be the set of $\hbsym{r}_1 \in \mathbb{S}^2$ that satisfies
  \begin{equation}\label{prob2eq1}
    \bsym{n}_e(t) \cdot \hbsym{r}_1 = 0,
  \end{equation}
  \begin{equation}\label{prob2eq2}
    \norm{\bsym{v}_e(t)} = \rho\pbrac{\frac{\bsym{u} \cdot \bsym{v}_e(t)}{\norm{\bsym{v}_e(t)}}, L_2},
  \end{equation}
  \begin{equation}\label{prob2eq3}
    \bsym{A} \hbsym{r}_1 = \pmat{-1 & 0 & 0 \\ 0 & -1 & 0 \\ 0 & 0 & 1}
    \frac{\pbrac{2 \hbsym{r}_1 \hbsym{r}_1^\trans - \bsym{I}} \bsym{v}_e}{\norm{\bsym{v}_e}},
  \end{equation}
  find $t$ that makes $S$ an nonempty set.
\end{problem}

\begin{theorem}\label{theorem1}
  The following statements are equivalent:
  \begin{enumerate}[\rm\bfseries(a)]
    \item Finding $\hbsym{r}_\lambda, \lambda = 1, 2, 3$ in Problem~\ref{prob1}.
    \item Finding the one variable $t$ in Problem~\ref{prob2}.
  \end{enumerate}
\end{theorem}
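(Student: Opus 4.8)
The plan is to establish the two implications by \emph{peeling off} the third section, reducing what remains to a 2-section inverse kinematics governed by Lemmas~\ref{lemma1} and~\ref{lemma2}. First I would record the algebraic identity that drives everything. Fix any $\hbsym{r}_3 \in \mathbb{S}^2$, which determines $q_3$ and $\bsym{r}_3$, and set $q_e = q \otimes q_3^*$ and $\bsym{r}_e = \bsym{r} - q_e \otimes \bsym{r}_3 \otimes q_e^*$. Using associativity of $\otimes$, the identity $q_3^{-1} = q_3^*$, and $(q_1 \otimes q_2)^* = q_2^* \otimes q_1^*$, equations~(\ref{s3rot}) and~(\ref{s3trans}) rearrange as follows: $q_1 \otimes q_2 \otimes q_3 = q$ is equivalent to $q_1 \otimes q_2 = q_e$, and once this holds the term $q_1 \otimes q_2 \otimes \bsym{r}_3 \otimes q_2^* \otimes q_1^*$ in~(\ref{s3trans}) collapses to $q_e \otimes \bsym{r}_3 \otimes q_e^*$, so~(\ref{s3trans}) becomes $\bsym{r}_1 + q_1 \otimes \bsym{r}_2 \otimes q_1^* = \bsym{r}_e$. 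Hence $\{\hbsym{r}_1, \hbsym{r}_2, \hbsym{r}_3\}$ solves Problem~\ref{prob1} \emph{if and only if} $(\hbsym{r}_1, \hbsym{r}_2)$ solves the 2-section problem~(\ref{s2rot}),~(\ref{s2trans}) with end pose $(q_e, \bsym{r}_e)$; when $\hbsym{r}_3 = \hbsym{r}_3(t)$ this pose is exactly $(q_e(t), \bsym{r}_e(t))$. This is a purely algebraic claim, valid for every $\hbsym{r}_3$, so no completeness of any curve is needed at this stage.

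Next I would characterise the solution set of that 2-section problem. By Lemma~\ref{lemma1}, $(\hbsym{r}_1, \hbsym{r}_2)$ satisfies~(\ref{s2rot}) with rotation $q_e(t)$ exactly when $\bsym{n}_e(t) \cdot \hbsym{r}_1 = 0$ and $\hbsym{r}_2 = \bsym{A} \hbsym{r}_1$, with $\bsym{A}$ formed from $q_e(t)$ as in~(\ref{lemma1eq3}); by Lemma~\ref{lemma2} (applied with end translation $\bsym{r}_e(t)$, so that its $\bsym{v}$ becomes $\bsym{v}_e$), it satisfies~(\ref{s2trans}) exactly when~(\ref{prob2eq2}) holds and $\hbsym{r}_2$ equals the right-hand side of~(\ref{lemma2eq4}) with $\bsym{v}$ replaced by $\bsym{v}_e$. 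Crucially, in each lemma the choice of $\hbsym{r}_1$ fixes $q_1$ and $\bsym{r}_1$, and hence the matching $\hbsym{r}_2$, \emph{uniquely}. Therefore both~(\ref{s2rot}) and~(\ref{s2trans}) can be met by a common $\hbsym{r}_2$ if and only if $\hbsym{r}_1$ obeys~(\ref{prob2eq1}) and~(\ref{prob2eq2}) and the two forced expressions for $\hbsym{r}_2$ agree --- which is precisely~(\ref{prob2eq3}) --- in which case $\hbsym{r}_2$ is that common value. In short, the 2-section problem attached to $\hbsym{r}_3(t)$ is solvable if and only if $S \neq \emptyset$.

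The theorem then follows by combining the two reductions. For (a)$\Rightarrow$(b): let $\{\hbsym{r}_1, \hbsym{r}_2, \hbsym{r}_3\}$ solve Problem~\ref{prob1}. Lemma~\ref{lemma3} (with $\lambda = 3$) puts $\hbsym{r}_3$ on the closed spherical curve~(\ref{lemma3eq1}), so $\hbsym{r}_3 = \hbsym{r}_3(t_0)$ for some $t_0 \in [0,1)$; by the peeling claim $(\hbsym{r}_1, \hbsym{r}_2)$ solves the 2-section problem at $t_0$, so by the characterisation $\hbsym{r}_1 \in S$ at $t_0$, i.e.\ $t_0$ solves Problem~\ref{prob2}. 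For (b)$\Rightarrow$(a): let $t$ solve Problem~\ref{prob2}, pick $\hbsym{r}_1 \in S$, put $\hbsym{r}_3 = \hbsym{r}_3(t)$ and let $\hbsym{r}_2$ be the common value from~(\ref{prob2eq3}); the characterisation yields a 2-section solution at $t$, and the peeling claim upgrades $\{\hbsym{r}_1, \hbsym{r}_2, \hbsym{r}_3\}$ to a solution of Problem~\ref{prob1} --- which can also be confirmed directly from $q_e(t) \otimes q_3(t) = q$ and $\bsym{r}_e(t) + q_e(t) \otimes \bsym{r}_3(t) \otimes q_e^*(t) = \bsym{r}$.

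The step needing the most care is the 2-section characterisation: one must verify that Lemmas~\ref{lemma1} and~\ref{lemma2} are genuine ``if and only if'' statements and that each pins down $\hbsym{r}_2$ uniquely from $\hbsym{r}_1$, so that the $\hbsym{r}_2$ delivered by the translational constraint does not in general already satisfy the rotational equation --- this is what makes~(\ref{prob2eq3}) the true coupling condition rather than a consequence of~(\ref{prob2eq1}) and~(\ref{prob2eq2}). A smaller point to settle is that the formal parameterisation $t \mapsto \hbsym{r}_3(t)$, $t \in [0,1)$, is onto the curve cut out by~(\ref{lemma3eq1}) for $\lambda = 3$; this surjectivity is exactly what licenses the choice of $t_0$ in the forward direction, and follows from the curve being closed.
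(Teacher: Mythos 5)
Your proposal is correct and follows essentially the same route as the paper: reduce to a 2-section problem with end pose $(q_e(t), \bsym{r}_e(t))$, use Lemmas~\ref{lemma1} and~\ref{lemma2} to force two candidate values of $\hbsym{r}_2$, and observe that~(\ref{prob2eq3}) is exactly the condition that they coincide, with Lemma~\ref{lemma3} supplying the parameter $t_0$ in the forward direction. Your version merely makes explicit two steps the paper leaves implicit --- the algebraic ``peeling'' identity justifying the 2-section reduction, and the surjectivity of the formal parameterisation $t \mapsto \hbsym{r}_3(t)$ onto the closed curve~(\ref{lemma3eq1}) --- both of which are worthwhile clarifications but not a different argument.
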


\begin{proof}
  Given $q, \bsym{r}$. Suppose $\{\hbsym{r}_1, \hbsym{r}_2, \hbsym{r}_3\}$ is the solution to Problem~{\ref{prob1}}, then we can first obtain $t$ from the parametrisation, and according to Lemma~\ref{lemma1},~\ref{lemma2},~\ref{lemma3}, this $\hbsym{r}_1$ satisfies~(\ref{prob2eq1})-(\ref{prob2eq3}) simultaneously. Therefore, $S$ is not empty and $t$ is the solution to Probelm~\ref{prob2}. This shows that \textbf{(a)} implies \textbf{(b)}. Inversely, if we know $t$, then $\hbsym{r}_3$ is specified, and since $S \neq \emptyset$, we take $\hbsym{r}_1 \in S$. From~(\ref{prob2eq1}) and Lemma~\ref{lemma1} we can get $\hbsym{r}_2'$ such that $\{\hbsym{r}_1, \hbsym{r}_2', \hbsym{r}_3\}$ satisfies~(\ref{s3rot}). From~(\ref{prob2eq2}) and Lemma~\ref{lemma2} we can also get $\hbsym{r}_2''$ such that $\{\hbsym{r}_1, \hbsym{r}_2'', \hbsym{r}_3\}$ satisfies~(\ref{s3trans}). We have $\hbsym{r}_2' = \hbsym{r}_2'' := \hbsym{r}_2$ because of~(\ref{prob2eq3}). Therefore $\{\hbsym{r}_1, \hbsym{r}_2, \hbsym{r}_3\}$ is the solution to Problem~\ref{prob1}. This shows that \textbf{(b)} implies \textbf{(a)}.
\end{proof}

\textbf{Illustrative Example.} In Figure~\ref{fig_three_section}, we visualise~(\ref{lemma3eq1}) for both $\hbsym{r}_3$ and $\hbsym{r}_1$ in magenta. When $\hbsym{r}_3$ is at $P_1$, the rotational and translational constraints of $\hbsym{r}_1$ are presented in red and blue, respectively. In this example, the point $P_2$ is the only intersection that satisfies all three constraints. Consequently, we have to check if the condition~(\ref{prob2eq3}) holds for $P_1$ and $P_2$. If it does, then by Theorem~\ref{theorem1} we have found a solution of Problem~\ref{prob1}, namely, the 3-section inverse kinematics problem.

\begin{figure}[t]
  \centering
  \subfigure[]{
  \includegraphics[width = 0.4 \linewidth]{./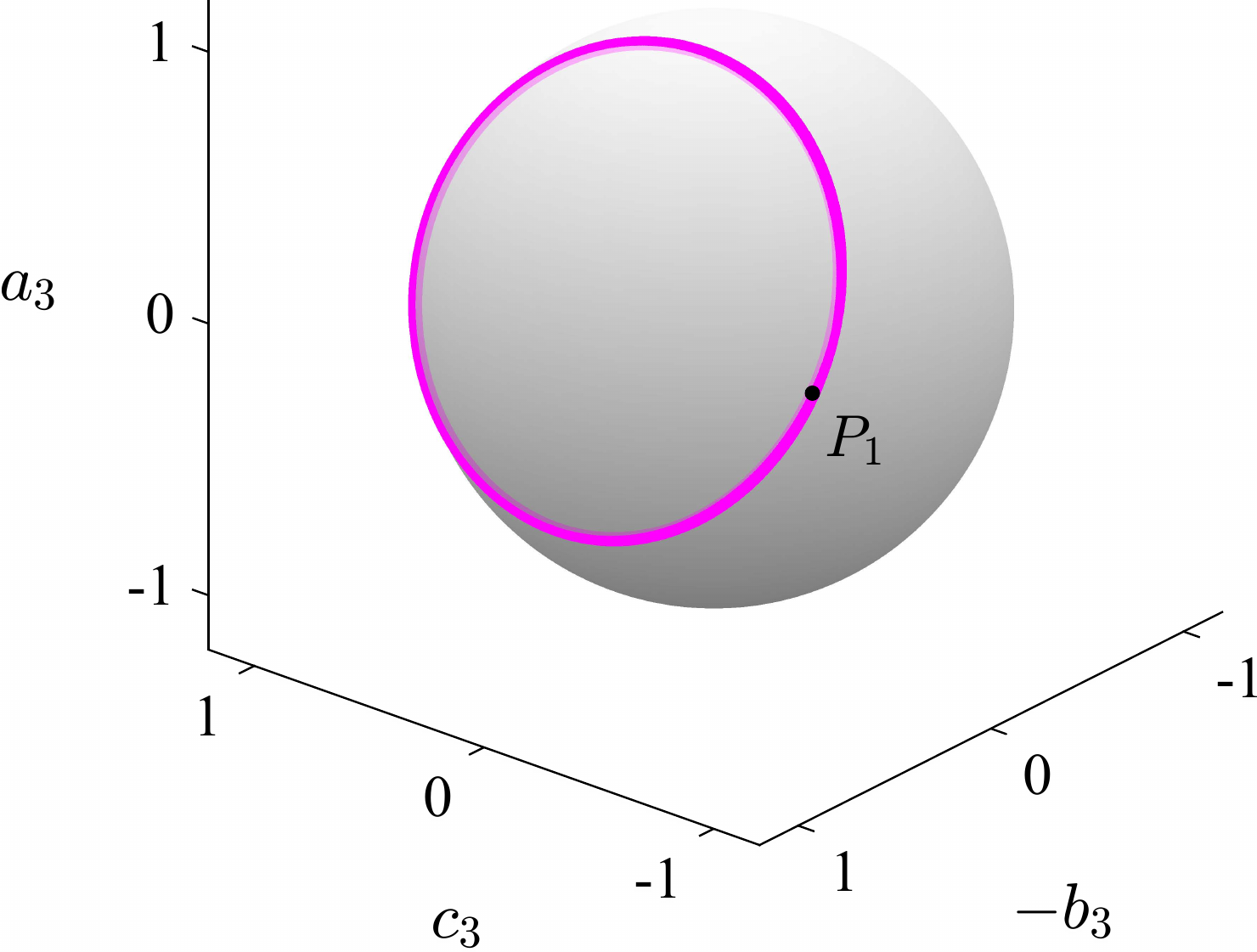}
  \label{fig_three_section_a}}
  \subfigure[]{
  \includegraphics[width = 0.4 \linewidth]{./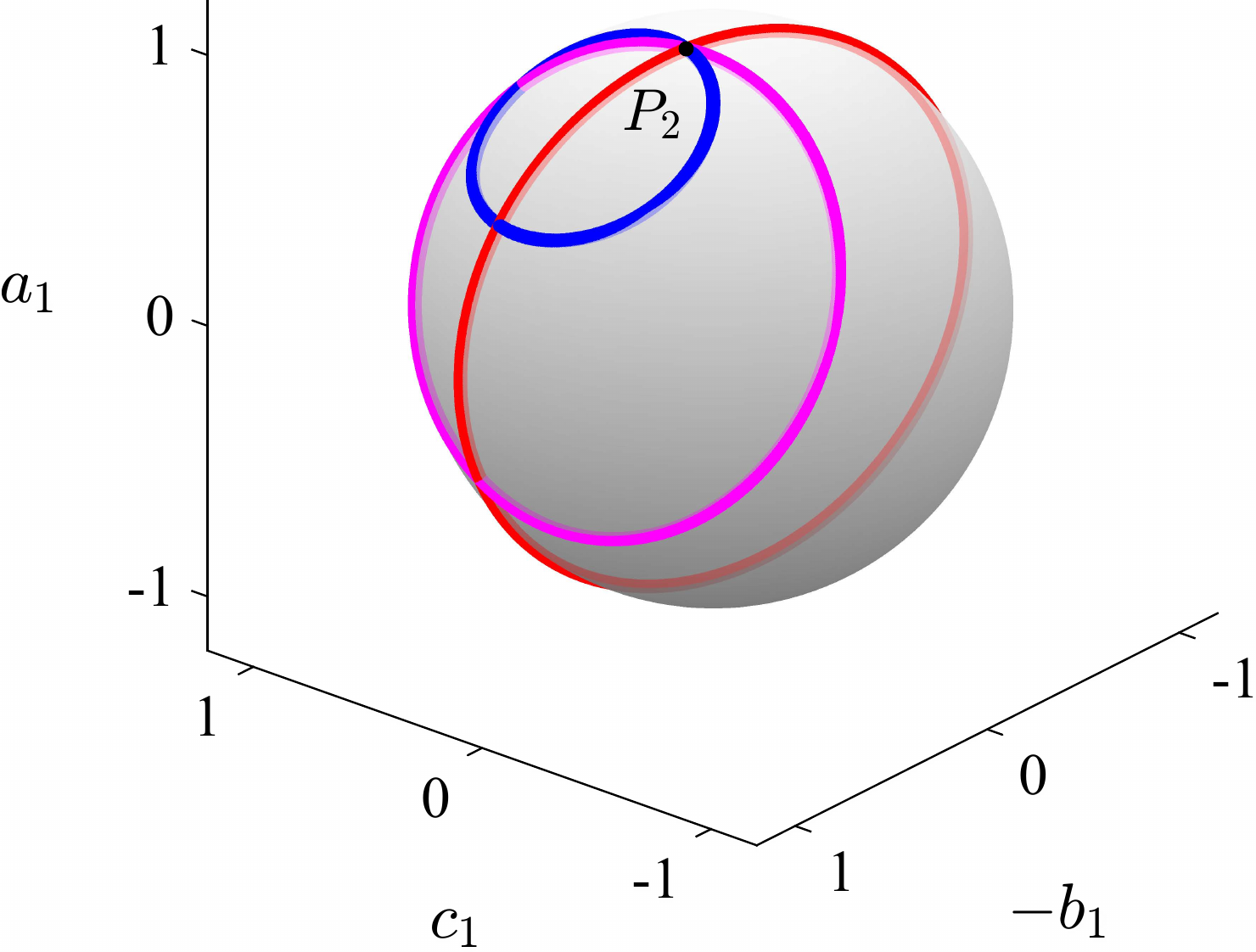}
  \label{fig_three_section_b}}
  \caption{In (a) and (b), the closed spherical curves are visualised in magenta (Lemma~\ref{lemma3}). When $\hbsym{r}_3$ is at the point $P_1$, the rotational constraint of $\hbsym{r}_1$ is indicated with the red line in (b), as well as the translational constraint with the blue line. Three constraints intersect at $P_2$, which is a candidate solution.}
  \label{fig_three_section}
\end{figure}

\subsection{Approximation}\label{approximation}

\textbf{Approximated Constraint of $\hbsym{r}_3$.} The constraint of $\hbsym{r}_3$ is given by~(\ref{lemma3eq1}). Let $\bsym{n}_0 = \bsym{B}^\trans \bsym{r}$, and we approximate~(\ref{lemma3eq1}) as
\begin{equation}\label{r3approx}
  \bsym{n}_0 \cdot \pbrac{\hbsym{r}_3^{(0)} - \gamma L_3 \bsym{r}_0} = 0
\end{equation}
by relaxing the second term in~(\ref{lemma3eq1}) to $\gamma L_3 d$. It refers to a circle on the unit sphere so that $\hbsym{r}_3$ can be analytically traversed by one variable $t$. In~(\ref{r3approx}) the factor $\gamma \in \pbrac{0, 1}$ is a constant determined later to minimise the approximation error throughout the domain $a_3 \in [0, 1]$, and
\begin{equation}
\bsym{r}_0 = \frac{d \bsym{n}_0}{\norm{\bsym{n}_0}^2}.
\end{equation}

\textbf{Approximated Constraint of $\hbsym{r}_1$.} We have known that $\hbsym{r}_1$ satisfies~(\ref{prob2eq1}), which is a linear equation. Since the solution to Problem~\ref{prob1} satisfies~(\ref{lemma3eq1}) according to Lemma~\ref{lemma3}, the solution to Problem~\ref{prob2} must also satisfies~(\ref{lemma3eq1}) by Theorem~\ref{theorem1}, hence we make an approximation analogous to~(\ref{r3approx}) and combine it with~(\ref{prob2eq1}), which leads to the approximation of $\hbsym{r}_1$,
\begin{equation}\label{r1approx}
  \begin{cases}
    \bsym{n}_e \cdot \hbsym{r}_1 = 0,\\
    \bsym{n}_0 \cdot \pbrac{\hbsym{r}_1^{(0)} - \gamma L_1 \bsym{r}_0} = 0.
  \end{cases}
\end{equation}

To evaluate the approximation, we have the following theorem.
\begin{theorem}\label{theorem2}
  The error of approximation~(\ref{r3approx}) is bounded, i.e.,
  \begin{equation}
    \label{r3bound}
    \abs{\hbsym{n}_0 \cdot \pbrac{\hbsym{r}_3^{(0)} - \hbsym{r}_3}} \le \frac{\pbrac{\pi - 2} L_3 \abs{d}}{2 \pi \norm{\bsym{B}^\trans \bsym{r}}},
  \end{equation}
  and when $\norm{\bsym{r}} > L_1$, the error of approximation~(\ref{r1approx}) is also bounded, i.e.,
  \begin{equation}
    \label{r1bound}
    \norm{\hbsym{r}_1^{(0)} - \hbsym{r}_1} \le \arccos{\frac{2L_1\abs{d}}{\pi\norm{\bsym{B}^\trans \bsym{r}}}} - \arccos{\frac{L_1\abs{d}}{\norm{\bsym{B}^\trans \bsym{r}}}}.
  \end{equation}
\end{theorem}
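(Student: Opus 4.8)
The plan is to collapse both inequalities onto a single one-dimensional estimate: a tight bound on $\rho(a_\lambda,L_\lambda)$, which is the only part of the exact constraint~(\ref{lemma3eq1}) that depends on the unknown. First I would rewrite~(\ref{lemma3eq1}) through $\bsym{n}_0=\bsym{B}^\trans\bsym{r}$: since $\bsym{r}^\trans\bsym{B}\hbsym{r}_\lambda=\bsym{n}_0\cdot\hbsym{r}_\lambda$ and $\bsym{n}_0\cdot\bsym{r}_0=d$ (because $\bsym{r}_0=d\bsym{n}_0/\norm{\bsym{n}_0}^2$), Lemma~\ref{lemma3} says the exact parameters obey $\bsym{n}_0\cdot\hbsym{r}_\lambda=\rho(a_\lambda,L_\lambda)\,d$, while the approximations~(\ref{r3approx}) and~(\ref{r1approx}) impose $\bsym{n}_0\cdot\hbsym{r}_\lambda^{(0)}=\gamma L_\lambda\,d$. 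So, measured along $\bsym{n}_0$, the whole approximation error is carried by $\gamma L_\lambda-\rho(a_\lambda,L_\lambda)$, and everything reduces to controlling $\rho$ over its admissible range.

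Putting $a=\cos\theta$ with $\theta=\arccos a\in[0,\pi/2]$ gives $\rho(a,L)/L=\sin\theta/\theta$, which is decreasing, so $\rho(a_\lambda,L_\lambda)/L_\lambda\in[2/\pi,1]$ for every admissible $a_\lambda\in[0,1]$. Choosing $\gamma$ as the midpoint of that interval, $\gamma=\tfrac12(1+2/\pi)=(\pi+2)/(2\pi)\in(0,1)$ — the constant promised after~(\ref{r3approx}) — makes $\gamma L_\lambda$ the centre of $[2L_\lambda/\pi,L_\lambda]$, hence $\abs{\gamma L_\lambda-\rho(a_\lambda,L_\lambda)}\le(\pi-2)L_\lambda/(2\pi)$. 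For~(\ref{r3bound}) this finishes the job: $\abs{\bsym{n}_0\cdot(\hbsym{r}_3^{(0)}-\hbsym{r}_3)}=\abs{d}\,\abs{\gamma L_3-\rho(a_3,L_3)}\le(\pi-2)L_3\abs{d}/(2\pi)$, and dividing by $\norm{\bsym{n}_0}=\norm{\bsym{B}^\trans\bsym{r}}$ turns $\bsym{n}_0$ into $\hbsym{n}_0$ and yields the stated bound. (The left side is unambiguous because $\bsym{n}_0\cdot\hbsym{r}_3^{(0)}$ is constant along the approximating circle.)

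For~(\ref{r1bound}) the exact $\hbsym{r}_1$ and the approximate $\hbsym{r}_1^{(0)}$ share the \emph{exact} linear constraint $\bsym{n}_e\cdot\hbsym{x}=0$ (Lemma~\ref{lemma1}), which introduces no error; the discrepancy is only between the true latitude curve $\{\bsym{n}_0\cdot\hbsym{x}=\rho(a_1,L_1)d\}$ and the circle $\{\bsym{n}_0\cdot\hbsym{x}=\gamma L_1 d\}$. I would take $\hbsym{r}_1^{(0)}$ to be the point of that circle nearest to $\hbsym{r}_1$, reached along the meridian of the $\bsym{n}_0$-axis; then their geodesic distance is exactly the difference of the two $\bsym{n}_0$-colatitudes, $\abs{\arccos\frac{\rho(a_1,L_1)d}{\norm{\bsym{B}^\trans\bsym{r}}}-\arccos\frac{\gamma L_1 d}{\norm{\bsym{B}^\trans\bsym{r}}}}$, and the Euclidean distance is no larger (chord $\le$ arc). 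Since $\rho(a_1,L_1)$ and $\gamma L_1$ both lie in $[2L_1/\pi,L_1]$ and $\arccos$ is monotone, this is at most $\arccos\frac{2L_1\abs{d}}{\pi\norm{\bsym{B}^\trans\bsym{r}}}-\arccos\frac{L_1\abs{d}}{\norm{\bsym{B}^\trans\bsym{r}}}$, i.e.~(\ref{r1bound}). These $\arccos$ are well defined precisely because $\norm{\bsym{r}}>L_1$: from $\bsym{B}\bsym{B}^\trans=\bsym{I}-\bsym{w}\bsym{w}^\trans$ with $\bsym{w}=\pmat{-c&b&-a}^\trans$ and $\norm{\bsym{w}}^2=1-d^2$, one obtains $\norm{\bsym{B}^\trans\bsym{r}}^2=\norm{\bsym{r}}^2-(\bsym{w}\cdot\bsym{r})^2\ge d^2\norm{\bsym{r}}^2>d^2L_1^2$.

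I expect the second bound to be the main obstacle: pinning down the correspondence between $\hbsym{r}_1$ and $\hbsym{r}_1^{(0)}$ so that the geodesic estimate comes out in terms of $\norm{\bsym{B}^\trans\bsym{r}}$ rather than the smaller radius of an in-plane projection, and confirming that the shared great-circle constraint does not enlarge the gap; after that, the norm identity $\bsym{B}\bsym{B}^\trans=\bsym{I}-\bsym{w}\bsym{w}^\trans$ combined with $\norm{\bsym{r}}>L_1$ closes the argument. The decreasing monotonicity of $\sin\theta/\theta$ and of $\arccos$ are the only other ingredients, and the remaining work is routine algebra.
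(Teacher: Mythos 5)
Your treatment of the first bound~(\ref{r3bound}) is correct and is exactly the paper's argument: rewrite~(\ref{lemma3eq1}) as $\bsym{n}_0\cdot\hbsym{r}_3=\rho\pbrac{a_3,L_3}d$, observe that $\rho/L=\sin\theta/\theta\in[2/\pi,1]$ on $a_3\in[0,1]$, take $\gamma$ to be the midpoint $\tfrac12+\tfrac1\pi$ so that the minimax deviation is $\tfrac12-\tfrac1\pi$, and normalise by $\norm{\bsym{n}_0}$. Your identity $\bsym{B}\bsym{B}^\trans=\bsym{I}-\bsym{w}\bsym{w}^\trans$ is also a cleaner route to the paper's estimate $\norm{\bsym{B}^\trans\bsym{r}}\ge\norm{\bsym{r}}\abs{d}>L_1\abs{d}$.

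The gap is in the second bound, and it is the one you yourself flag. You bound the distance from $\hbsym{r}_1$ to \emph{the nearest point of the circle} $\{\bsym{n}_0\cdot\bsym{x}=\gamma L_1 d\}$ along the meridian of the $\bsym{n}_0$-axis. But $\hbsym{r}_1^{(0)}$ is not that point: it is pinned by the second equation of~(\ref{r1approx}) to lie on the great circle $\bsym{n}_e\cdot\bsym{x}=0$ as well, and in general it sits elsewhere on the latitude circle. The difference of $\bsym{n}_0$-colatitudes is therefore only a \emph{lower} bound on the true separation $\norm{\hbsym{r}_1^{(0)}-\hbsym{r}_1}$, so as written the argument does not establish~(\ref{r1bound}); "confirming that the shared great-circle constraint does not enlarge the gap" is precisely the step that is missing, not a formality. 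The paper instead keeps both points on the common great circle from the start by writing $\hbsym{r}_1=\exp\pbrac{\bsym{n}_e^\wedge\beta}\hbsym{r}_1^{(0)}$ with $\norm{\hbsym{r}_1^{(0)}-\hbsym{r}_1}\le\abs{\beta}$, reuses the minimax argument to bound the projection error $\abs{\hbsym{n}_0\cdot\pbrac{\hbsym{r}_1^{(0)}-\hbsym{r}_1}}$ by $\pbrac{\pi-2}L_1\abs{d}/\pbrac{2\pi\norm{\bsym{B}^\trans\bsym{r}}}$, and then converts the resulting height interval $\left[2L_1\abs{d}/\pbrac{\pi\norm{\bsym{B}^\trans\bsym{r}}},\,L_1\abs{d}/\norm{\bsym{B}^\trans\bsym{r}}\right]$ into the angle $\beta$ geometrically via Figure~\ref{figestr1v}, which yields~(\ref{r1bound}) directly. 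You should adopt that parametrisation (or otherwise quantify how the height difference translates into arc length \emph{along the great circle} $\bsym{n}_e\cdot\bsym{x}=0$); the rest of your write-up, including the monotonicity of $\arccos$ and the use of $\norm{\bsym{r}}>L_1$ to keep the arccosines well defined, then goes through.
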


\begin{proof}
  See Appendix~\ref{appendix_proof_of_theorem2}.
\end{proof}

In Figure~\ref{figest}, the black dotted lines show the error bound~(\ref{r3bound}) when we traverse $\hbsym{r}_3$ using~(\ref{r3approx}) instead of~(\ref{lemma3eq1}). From~(\ref{r1approx}) we know that $\hbsym{r}_1$ is the intersection of two circles on the unit sphere. We denote the angle between $\hbsym{r}_1$ and $\hbsym{r}_1^{(0)}$ by $\beta$, then $\max\abs{\beta}$ is an error bound for $\hbsym{r}_1$, geometrically.

\begin{figure}[t]
  \centering
  \subfigure[]{
  \includegraphics[width = 0.4 \linewidth]{./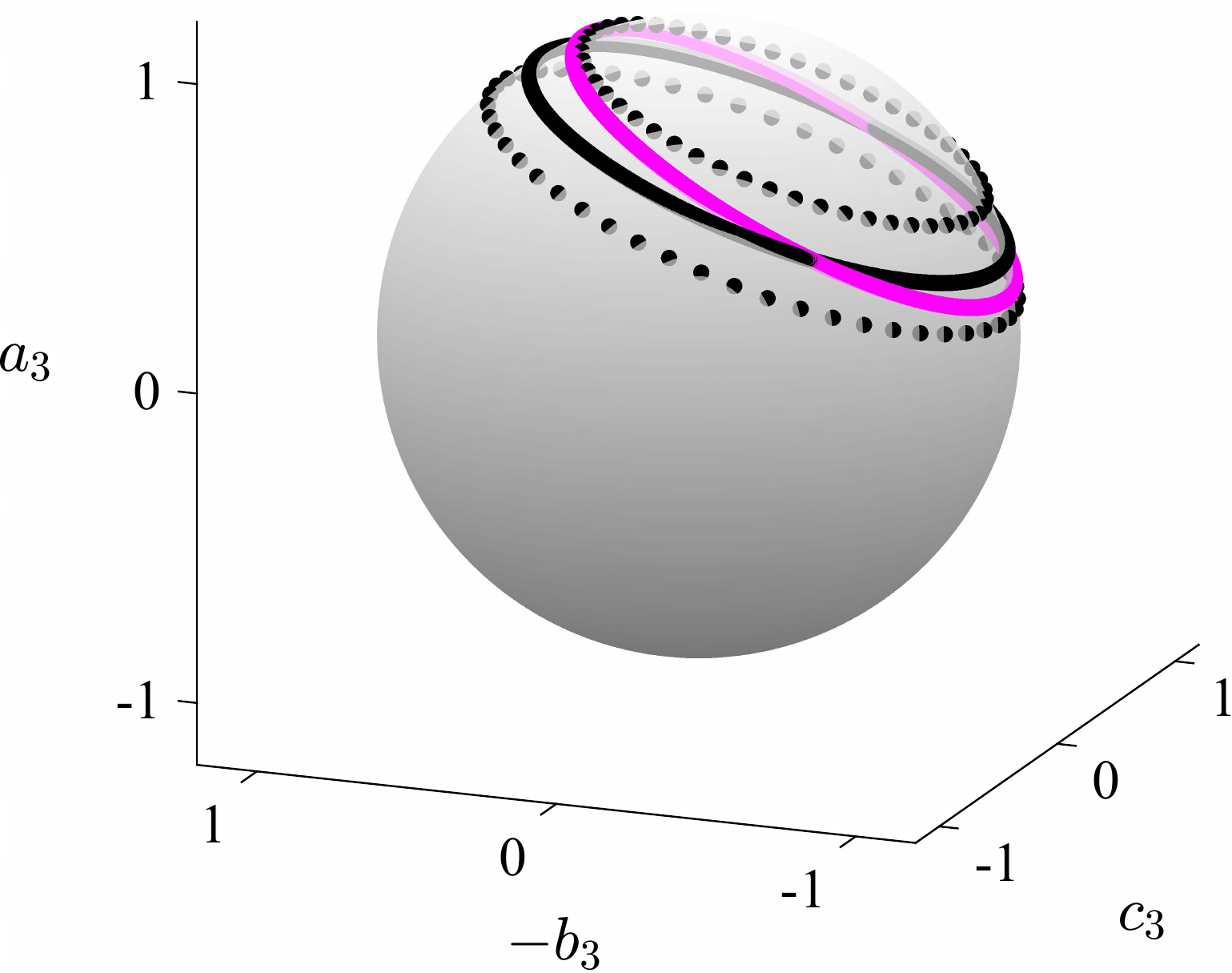}}
  \subfigure[]{
  \includegraphics[width = 0.4 \linewidth]{./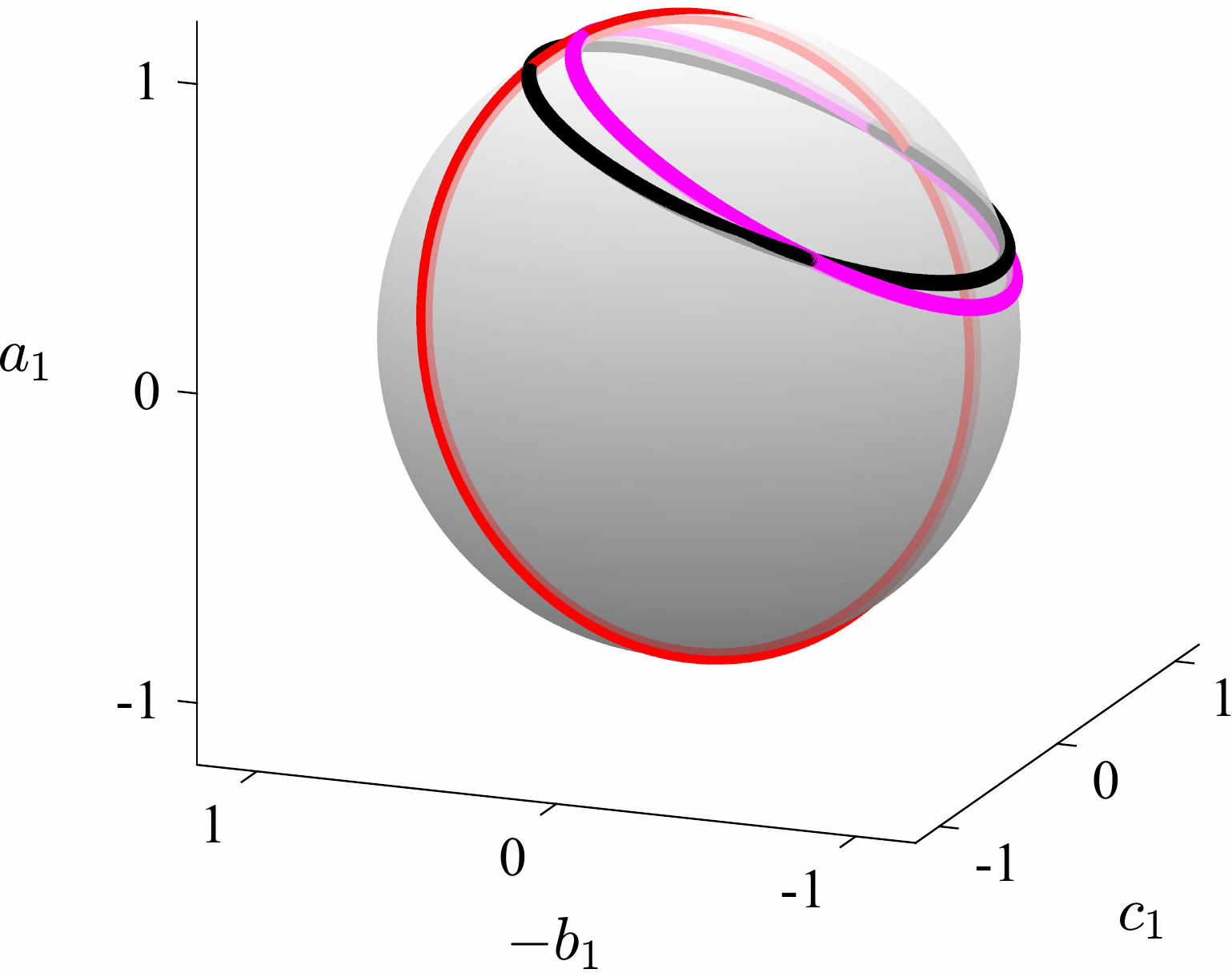}}
  \subfigure[]{
  \includegraphics[width = 0.55 \linewidth]{./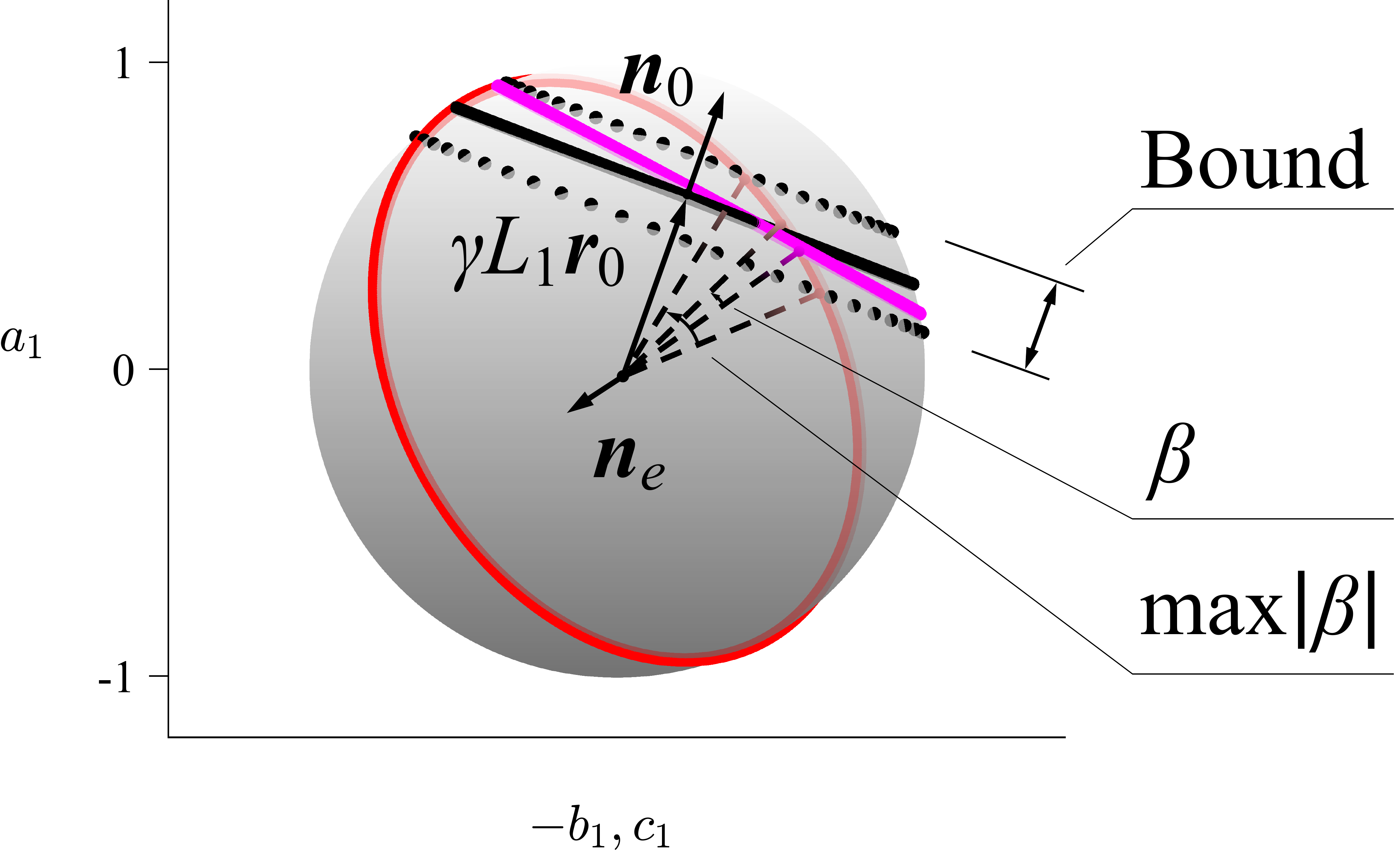}\label{figestr1v}}
  \caption{The black solid lines suggest the approximation of the magenta lines, which represent~(\ref{lemma3eq1}) as previously, while the black dotted lines indicate the error bound. (a) The error is bounded when we traverse $\hbsym{r}_3$ along the black solid line instead of the magenta. (b) We use the intersections of the black and red lines to estimate that of the magenta and red line. (c) Viewing from a line of sight that goes along the direction perpendicular to $\bsym{n}_0$, we see that $\beta$ is the angle between exact $\hbsym{r}_1$ and approximated $\hbsym{r}_1^{(0)}$, so an error bound can be given by $\max\abs{\beta}$.}
  \label{figest}
\end{figure}

\subsection{Inverse Kinematics Solver}\label{the_solver}

\textbf{Circular Traversal.} To make sure resolution completeness, we need to go through all possible discretised $\hbsym{r}_3$. Luckily, the approximation in the previous subsection allows us to make an analytical traversal. It follows from~(\ref{r3approx}) that when $\bsym{n}_0$ is not parallel with the $z$-axis, we construct
\begin{equation}\label{traversal}
  \hbsym{r}_3^{(0)}(t) = \gamma L_3 \bsym{r}_0 + \sqrt{1-\norm{\gamma L_3 \bsym{r}_0}^2} \cdot \bsym{P} \pmat{\sin {2\pi t}\\\cos{2\pi t}\\0},
\end{equation}
where $\hbsym{n}_1 = \hbsym{n}_0 \times \pmat{0&0&1}^\trans$, $\hbsym{n}_2 = \hbsym{n}_0 \times \hbsym{n}_1$, and the orthogonal matrix $\bsym{P} = \pmat{\hbsym{n}_1&\hbsym{n}_2&\hbsym{n}_0}$.

\textbf{Error Evaluation.} For any $t \in [0, 1)$, a unique $\hbsym{r}_3$ is specified, then with~(\ref{r1approx}), $\hbsym{r}_1$ can be found through analytical expressions when $\bsym{n}_0$ is not parallel with $\bsym{n}_e$. We compute $\hbsym{r}_2'$ by Lemma~\ref{lemma1} and $\hbsym{r}_2''$ by Lemma~\ref{lemma2}. At this point we have two sets of model parameters $\{\hbsym{r}_1, \hbsym{r}_2', \hbsym{r}_3\}$ and $\{\hbsym{r}_1, \hbsym{r}_2'', \hbsym{r}_3\}$, we choose the one with a smaller error. To illustrate the calculation of error, we denote the actual end pose built from $\hbsym{r}_\lambda$ by $\bsym{T}$, the desired end pose built from $q, \bsym{r}$ by $\bsym{T}_d$. Define a map from $\hbsym{r}_\lambda \in \mathbb{S}^2$ to $\bsym{\xi}_\lambda \in \mathsf{se}(3)$ as
$$
\begin{aligned}
  \sigma: \mathbb{S}^2 \times \mathbb{S}^2 \times \mathbb{S}^2 &\to \mathsf{se}(3) \times
  \mathsf{se}(3) \times \mathsf{se}(3), \\
  \hbsym{r}_1, \hbsym{r}_2, \hbsym{r}_3 &\leadsto \bsym{\xi}_1, \bsym{\xi}_2, \bsym{\xi}_3,
\end{aligned}
$$
where $\bsym{\xi}_\lambda = L_\lambda \pmat{-\kappa_\lambda \sin{\phi_\lambda} & \kappa_\lambda \cos{\phi_\lambda} & 0 & 0 & 0 & 1}^\trans$ is the twist coordinates \citep{webster2010design}.
From~(\ref{prop1pfeq1}) we know that the map $\sigma$ exists. With the product of exponentials formula \citep{murray1994mathematical}, the end pose $\bsym{T} \in \mathsf{SE}(3)$ has the form
\begin{equation}
  \label{poe}
  \bsym{T} = \bsym{T}\pbrac{\bsym{\xi}_1, \bsym{\xi}_2, \bsym{\xi}_3} = \exp{\bsym{\xi}_1^\wedge}
  \exp{\bsym{\xi}_2^\wedge} \exp{\bsym{\xi}_3^\wedge}.
\end{equation}
We evaluate the error regarding the norm of the body twist from $\bsym{T}$ to $\bsym{T}_d$, i.e.,
\begin{equation}
  \label{err}
  e = e\pbrac{\bsym{\xi}_1, \bsym{\xi}_2, \bsym{\xi}_3}
  = \norm{\pbrac{\log{\bsym{T}^{-1} \bsym{T}_d}}^\vee}.
\end{equation}

\textbf{Candidate Selection.} By traversing $t$ in $[0, 1)$, we have a series of model parameter $\{\hbsym{r}_1, \hbsym{r}_2, \hbsym{r}_3\}$, for which we can build an error function $e\pbrac{t}$. Suppose $\bar{t}$ is a solution to Problem~\ref{prob2}, then the derivative of the error function at $\bar{t}$ must be zero, i.e., $\left.\diff{}{t}\right\vert_{t = \bar{t}}e\pbrac{t} = 0$. Therefore, when $t$ is being traversed, we can collect all the local minima by checking
\begin{equation}
  e\pbrac{\bar{t}} < e\pbrac{\bar{t} + \Delta t},\quad e\pbrac{\bar{t}} \ge e\pbrac{\bar{t} - \Delta t}.
\end{equation}
We calculate the model parameters $\{\hbsym{r}_1, \hbsym{r}_2, \hbsym{r}_3\}$ from $\bar{t}$ as candidate solutions to Problem~\ref{prob1}.

\textbf{Numerical Correction.} The exponential coordinates of the candidates are regarded as the initial values for the optimisation problem below:
\begin{equation}
  \begin{aligned}
    &\mathop{\mathrm{minimise}}\limits_{\bsym{\xi}_1, \bsym{\xi}_2, \bsym{\xi}_3}\quad \norm{\pbrac{\log{\bsym{T}\pbrac{\bsym{\xi}_1, \bsym{\xi}_2, \bsym{\xi}_3}^{-1} \bsym{T}_d}}^\vee}^2 \\
    &\mathrm{subject~to}\quad \bsym{\xi}_1, \bsym{\xi}_2, \bsym{\xi}_3 \in \mathsf{se}(3).
  \end{aligned}
\end{equation}  
We employ the Newton-Raphson or damped least square method to reach the accurate solution of Problem~\ref{prob1}. Filtering the candidates unable to converge to the accuracy threshold, we form successful ones as our final result.

Because of the error bound in Theorem~\ref{theorem2}, the initial values are actually very close to the real solutions, therefore only a few steps of iterations are required in the numerical correction. In summary, we present Algorithm~\ref{alg3}.

\begin{algorithm}
  \caption{Inverse Kinematics Solver.}\label{alg3}
  \begin{algorithmic}[1]
  \Require $q \in \mathbb{S}^3, \bsym{r} \in \mathbb{R}^3, L_1, L_2, L_3 > 0$
  \Procedure{Solver}{$q, \bsym{r}$}
    \State $\Delta t \gets$ A search resolution
    \For{$t = 0, t \gets t + \Delta t, t < 1$}
    \State $\hbsym{r}_3, \hbsym{r}_1 \gets \hbsym{r}_3^{(0)}\pbrac{t}, \hbsym{r}_1^{(0)}\pbrac{t}$ \Comment{(\ref{traversal}),~(\ref{r1approx})}
    \State $\hbsym{r}_2', \hbsym{r}_2'' \gets$ Compute~(\ref{lemma1eq2}) and~(\ref{lemma2eq4})
    \State $e, \hbsym{r}_2 \gets$ min\{$e(\hbsym{r}_2'), e(\hbsym{r}_2'')$\} \Comment{(\ref{err})}
    \If{$t$ is a local minimum}
    \State Collect $\{\hbsym{r}_1\pbrac{t}, \hbsym{r}_2\pbrac{t}, \hbsym{r}_3\pbrac{t}\}$ in $S_c$.
    \EndIf
    \EndFor
    \For{each elements in $S_c$}
    \If{numerical correction converges}
    \State \textbf{output} $\hbsym{r}_1, \hbsym{r}_2, \hbsym{r}_3$
    \EndIf
    \EndFor
    \If{no solutions are found}
    \State \textbf{goto} line 3 with $\Delta t$ a finer search resolution
    \EndIf
  \EndProcedure
  \end{algorithmic}
\end{algorithm}

\section{Simulation Results}\label{experimental_results}

We showcase our 3-section inverse kinematics solver in two scenarios: one collision-free solution for a randomly given pose in a free task space or with obstacles (Section~\ref{one_solution},~\ref{an_example}), and path planning under the given sequence of via points (Section~\ref{path_planning}).

\subsection{Setup}
The robot used in our experiments is composed of 3 sections with an identical length 1. The evaluation of benchmark algorithms and our solver contains the following steps: (1) Sample random arc parameters $\{\pbrac{\kappa_\lambda, \phi_\lambda}: \lambda = 1, 2, 3.\}$ with uniform distributions and compute $\bsym{\xi}_1$, $\bsym{\xi}_2$, $\bsym{\xi}_3$. (2) Generate the desired pose $\bsym{T}_d$ using the forward kinematics~(\ref{poe}), then compute $q$, $\bsym{r}$ from $\bsym{T}_d$. (3) Solve the inverse kinematics problem with different algorithms. (4) Repeat the above steps for 2000 times and evaluate the statistics on the results.

For the numerical correction, the criterion of success is that the error~(\ref{err}) is less than 0.01 and the total number of iterations does not exceed the maximum allowed value.

Since the overall average runtime is strongly influenced by the unsuccessful runs that require the maximum number of iterations, we pick out the successful portion to provide a more qualitative indicator of the performance for comparison.

The code for simulations is implemented in MATLAB R2022b and carried on a laptop computer with a 2.10 GHz Intel Core i7-1260P processor.

\subsection{Benchmarks}

As mentioned, solving the inverse kinematics problem always relies on numerical methods. It is essentially a root-finding problem for quaternion equations~(\ref{s3rot}) and~(\ref{s3trans}). Especially, among various root-locating methods, we select several typically representative ones. The Newton-Raphson method is commonly used \citep{gonthina2020mechanics,singh2017performances,godage2011novel}, so we choose it as one of the benchmarks. Additionally, we implement the gradient descending algorithm for this problem. A built-in MATLAB optimiser \texttt{fminsearch}, which is a derivative-free method using the Nelder-Mead simplex algorithm \citep{lagarias1998convergence}, is also adopted as a benchmark.

Our solver does not require an initial guess, while for benchmarks we provide random arc parameters as the initial guess because the straight configuration is essentially a singularity \citep{santina2020improved}. The Jacobian matrix in the iterative procedures has been derived in \citep{chirikjian2011stochastic}. We put the expressions in Appendix~\ref{appendix_jacobian}.

\subsection{One Solution in the Task Space}\label{one_solution}

\textbf{Free Space.} We start from a free task space. Since the target pose is computed from the forward kinematics with random model parameters, the inverse kinematics problem is ensured to be solvable. Different methods may find either the parameters generating the desired pose, or an alternative solution. Regardless of which one, all the methods stop as soon as one solution is found. The result is shown in Table~\ref{tab_free_space}.

\textbf{With Obstacles.} Structured obstacles are introduced in this scenario. The obstacles are spheres with a diameter of 0.4, and their centre points form a square lattice in the task space with 0.8 in $x$- and $y$-axis, 1.0 in $z$-axis. In each repeat, we generate parameters whose corresponding configuration should be collision-free at the same time, so every inverse kinematics problem is still solvable here. The methods stop as soon as one arbitrary collision-free solution is found. For pure numerical methods that execute only once, the success rate would definitely decrease due to the influence of obstacles. So the Newton-Raphson method is allowed to execute for at most 5 times, labelled as Newton-Raphson (5). The result is presented in Table~\ref{tab_with_obstacles}.

\textbf{Comparison on Efficiency.} We judge the computational efficiency by the runtime. When there are no obstacles, our solver takes about 15\% runtime relative to the Newton-Raphson method, which is the best among all benchmarks. Our solver performs additional computations before starting the numerical correction, which is not needed in benchmark algorithms. Yet the result shows that the operations benefit the overall efficiency because the provided initial value decreases plenty of iterations in the following numerical correction.

When obstacles exist, less than 19\% runtime is achieved relative to the Newton-Raphson (5) method. We note that in this experiment our solver takes almost double the time. The increase in runtime is caused by two reasons. On the one hand, when a solution satisfying the desired pose but colliding with obstacles is found, the numerical correction is applied to the next candidate in order to find an alternative collision-free solution. This proportion is 9.35\%. On the other hand, when no solutions are found, the solver would employ a retraversal with a finer resolution. The proportion of retraversal is 0.50\% in free space, and increases to 2.90\% when considering obstacles.

Notably, with our solver, we observe that some initial values are directly output as solutions \textit{without} any iterations. The proportions are 17.55\% and 21.75\% in the two experiments, respectively. This strongly reduces the computations and meanwhile validates our theoretical results.

\textbf{Comparison on Success Rate.} The success rate reflects the dependency of a numerical algorithm on initial values and the ability to obtain multiple solutions when obstacles exist. The derivative-free method and gradient method are more likely to be trapped by local minima. The Newton-Raphson method is sometimes able to escape from local minima yet is more likely to thrash about until running out of all iterations.

Our solver maintains the full percentage of success rate in both experiments, which owes to the efficient multi-solution finding. The results validate the resolution completeness for the approximated problem, as well as the error bound that allows the numerical correction to converge in a few steps inside the bound.

\begin{table}[t]
  \centering
  \caption{Performance of Different Methods for Solving the Inverse Kinematics Problem.\label{tab_free_space}}
  \begin{tabular}{p{0.3\linewidth}p{0.18\linewidth}p{0.18\linewidth}p{0.14\linewidth}}
  \toprule
  Method & \multirow{2}{*}{\makecell[tl]{Success\\Rate (\%)}} & \multicolumn{2}{c}{Runtime (ms)}\\
  \cmidrule(r){3-4}
  & & Successful & Total\\
  \midrule
Derivative-free 	 & 47.10 	 	 	 & 59.44 	 	 	 & 68.16\\
Gradient 	 	 & 51.05 	 	 	 & 95.23 	 	 	 & 178.41\\
Newton-Raphson 	 & 94.20 	 	 	 & 10.54 	 	 	 & 12.18\\
Our Solver 	 	 & \textbf{100.00} 	 & \textbf{1.59} 	 & \textbf{1.59}\\
  \bottomrule
  \end{tabular}
\end{table}

\begin{table}[t]
  \centering
  \caption{Performance of Different Methods for Solving the Inverse Kinematics Problem with Lattice Obstacles.\label{tab_with_obstacles}}
  \begin{tabular}{p{0.3\linewidth}p{0.18\linewidth}p{0.18\linewidth}p{0.14\linewidth}}
  \toprule
  Method    & \multirow{2}{*}{\makecell[tl]{Success\\Rate (\%)}} & \multicolumn{2}{c}{Runtime (ms)}\\\cmidrule(r){3-4}
  & & Successful        & Total\\
  \midrule
Derivative-free 	 & 40.00 	 	 	 & 57.28 	 	 	 & 67.40\\
Gradient       	 & 45.65 	 	 	 & 95.22 	 	 	 & 181.11\\
Newton-Raphson     	 & 82.75 	 	 	 & 10.83 	 	 	 & 12.09\\
Newton-Raphson (5) 	 & 98.75 	 	 	 & 16.15 	 	 	 & 17.00\\
Our Solver 	 	 & \textbf{100.00} 	 & \textbf{3.05} 	 & \textbf{3.05}\\
  \bottomrule
  \end{tabular}
\end{table}

\subsection{An Example of Multiple Solutions}\label{an_example}

We present an example of multiple solution finding of our solver in a space with obstacles. For a selected end rotation $q$ and translation $\bsym{r}$, Figure~\ref{fig_4_solutions_err} shows the error curve $e\pbrac{t}$, $t \in [0, 1)$ obtained from traversing $\hbsym{r}_3$ with a search resolution $\Delta t = 0.01$. We box the local minima as candidates for the following corrections. Finally, a total of 4 candidates all converge in less than 2 steps of iterations. As a result, our solver gets 4 multiple solutions for the desired end pose, as presented in Figure~\ref{fig_4_solutions_1}-\ref{fig_4_solutions_4}.

Interestingly, 2 to 4 solutions are the most frequently observed results during our experiments. We speculate that the number of solutions of the 3-section inverse kinematics problem belongs to this range.

\begin{figure}[t]
  \centering
  \subfigure[]{
  \includegraphics[width = 0.88 \linewidth]{./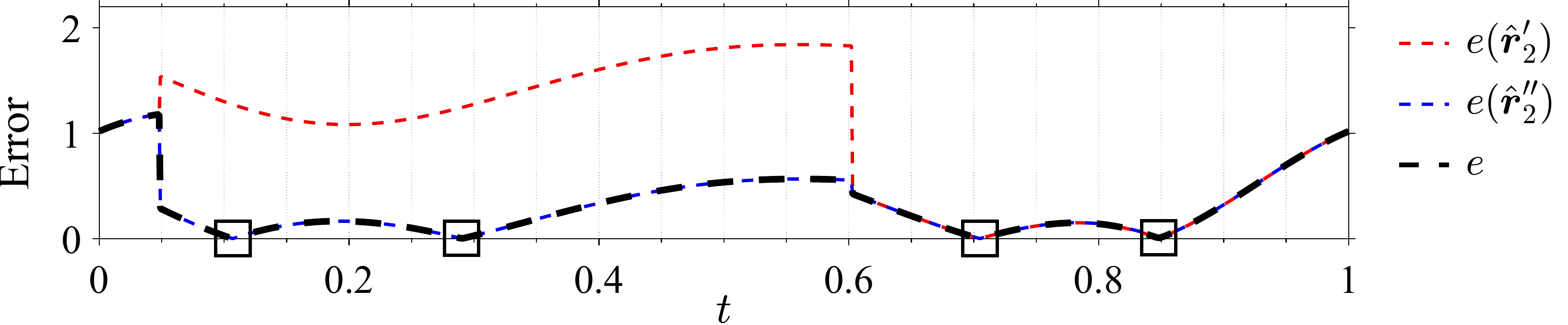}\label{fig_4_solutions_err}}
  \subfigure[]{
  \includegraphics[width = 0.42 \linewidth]{./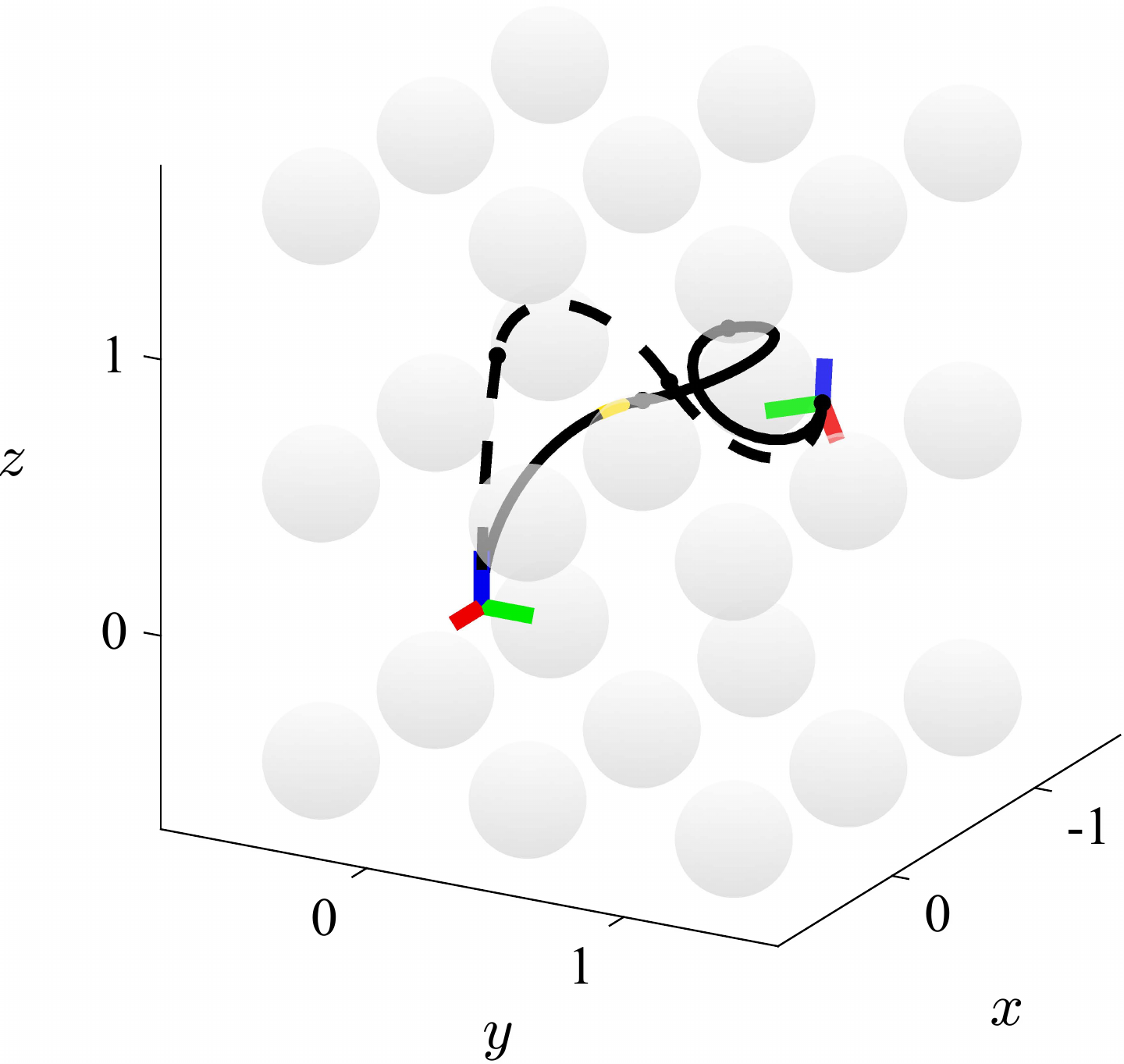}\label{fig_4_solutions_1}}
  \subfigure[]{
  \includegraphics[width = 0.42 \linewidth]{./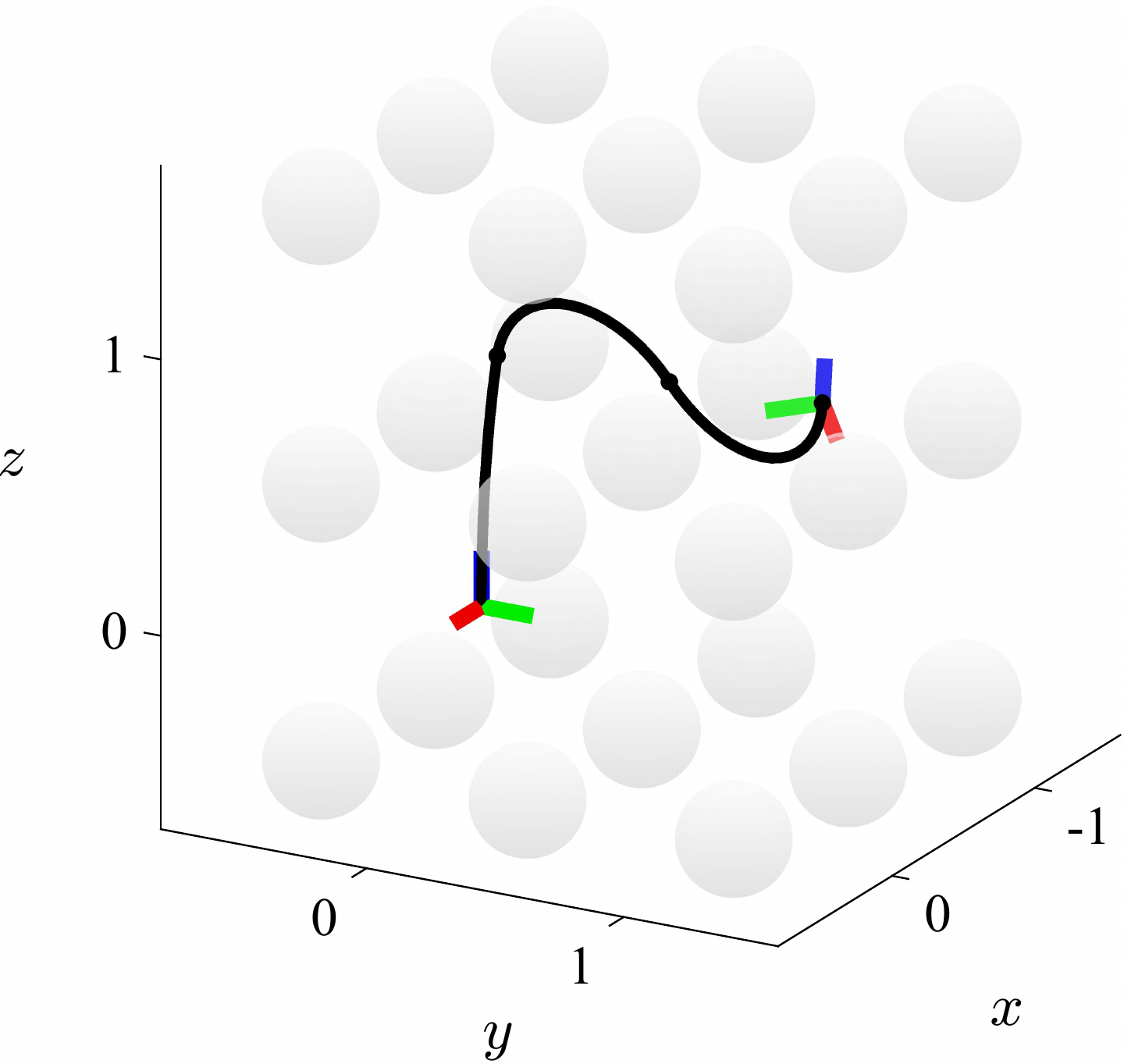}\label{fig_4_solutions_2}}
  \subfigure[]{
  \includegraphics[width = 0.42 \linewidth]{./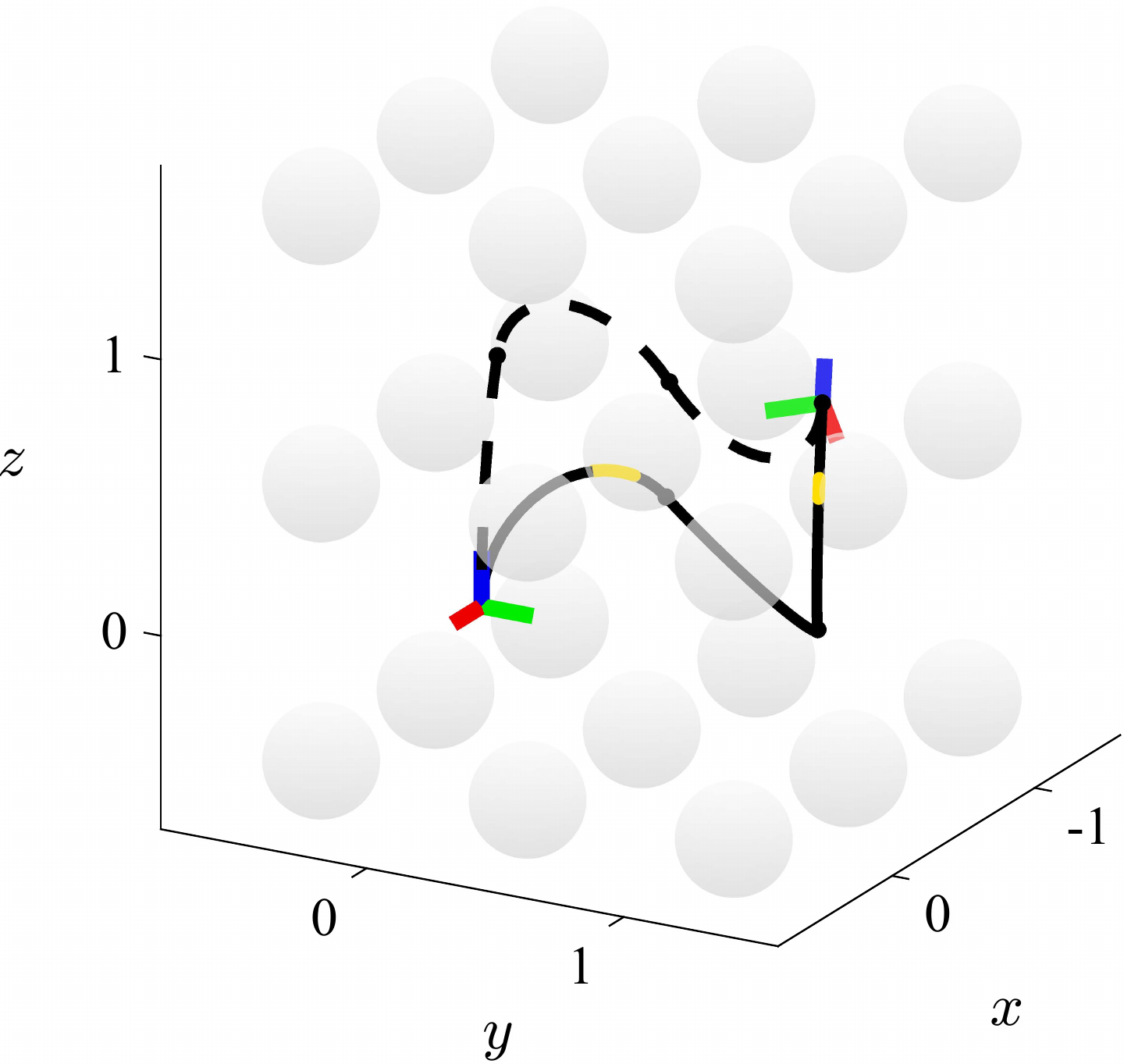}\label{fig_4_solutions_3}}
  \subfigure[]{
  \includegraphics[width = 0.42 \linewidth]{./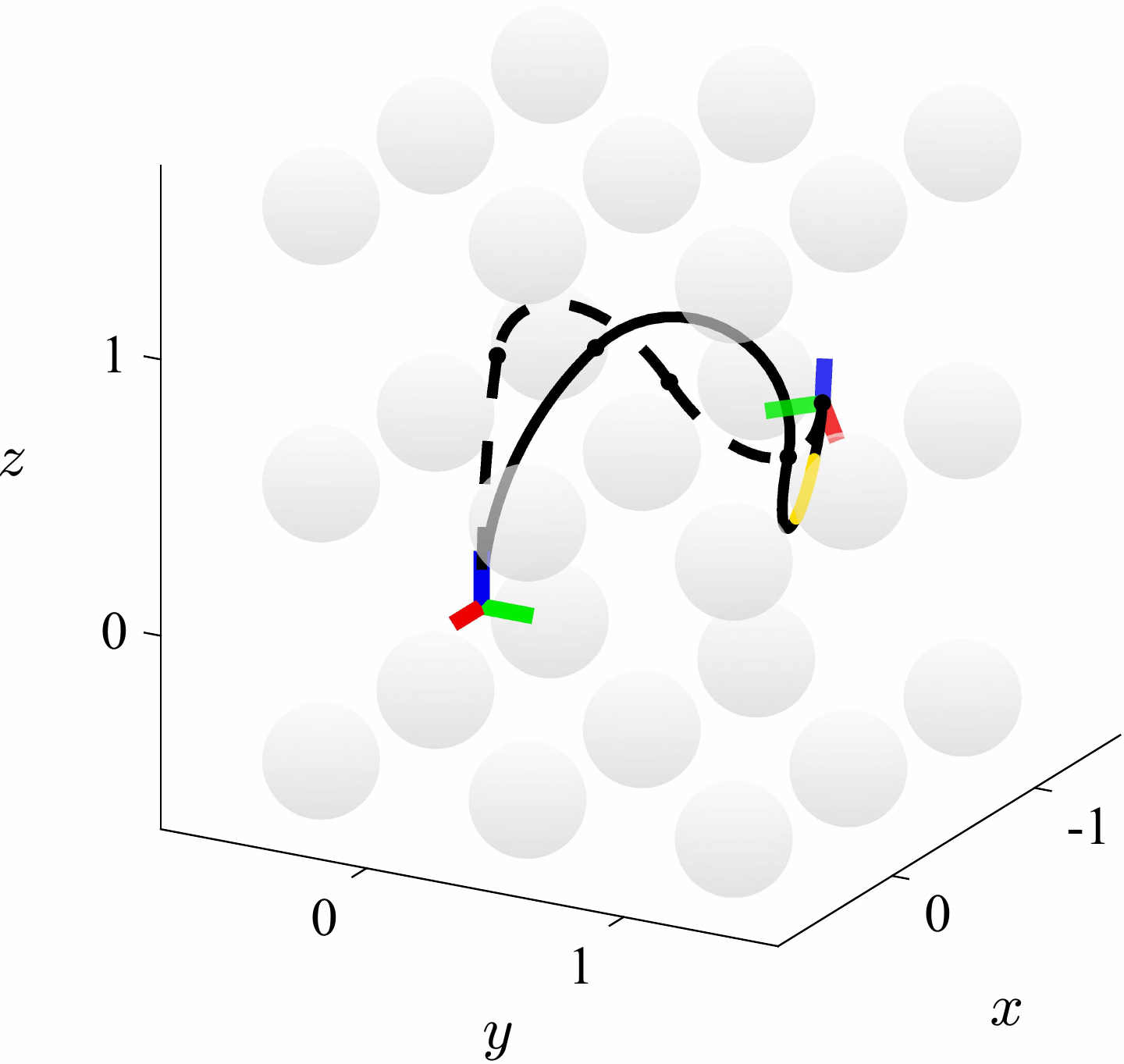}\label{fig_4_solutions_4}}
  \caption{We choose an end rotation $q = \cos\pbrac{\alpha / 2} + \sin\pbrac{\alpha / 2} \bsym{\omega}$ and translation $\bsym{r} = \pmat{-0.4 & 1.1 & 0.8}^\trans$, where $\alpha = 15 \pi / 16, \bsym{\omega} = 0.48 \bsym{i} + 0.1\sqrt{3} \bsym{j} -0.86 \bsym{k}$. With our solver, we obtain (a) the error curve with a total of 4 equilibria, which are marked in boxes; (b-e) multiple solutions corresponding to equilibria from left to right. Collisions are painted yellow.}
  \label{fig_4_solutions}
\end{figure}

\begin{figure}[t]
  \centering
  \subfigure[]{
  \includegraphics[width = 0.4 \linewidth]{./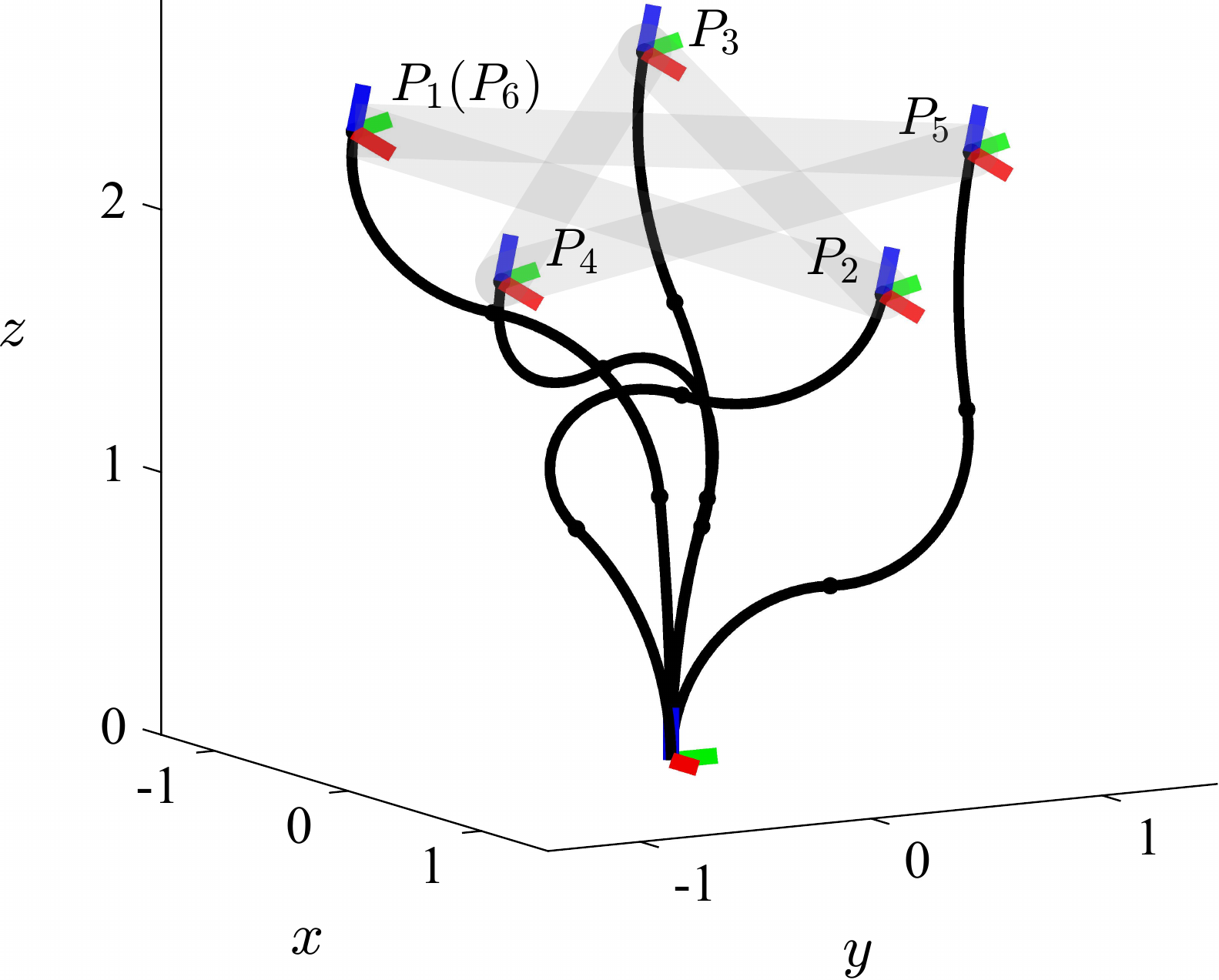}
  \label{fig_path_star_grad}}
  \subfigure[]{
  \includegraphics[width = 0.4 \linewidth]{./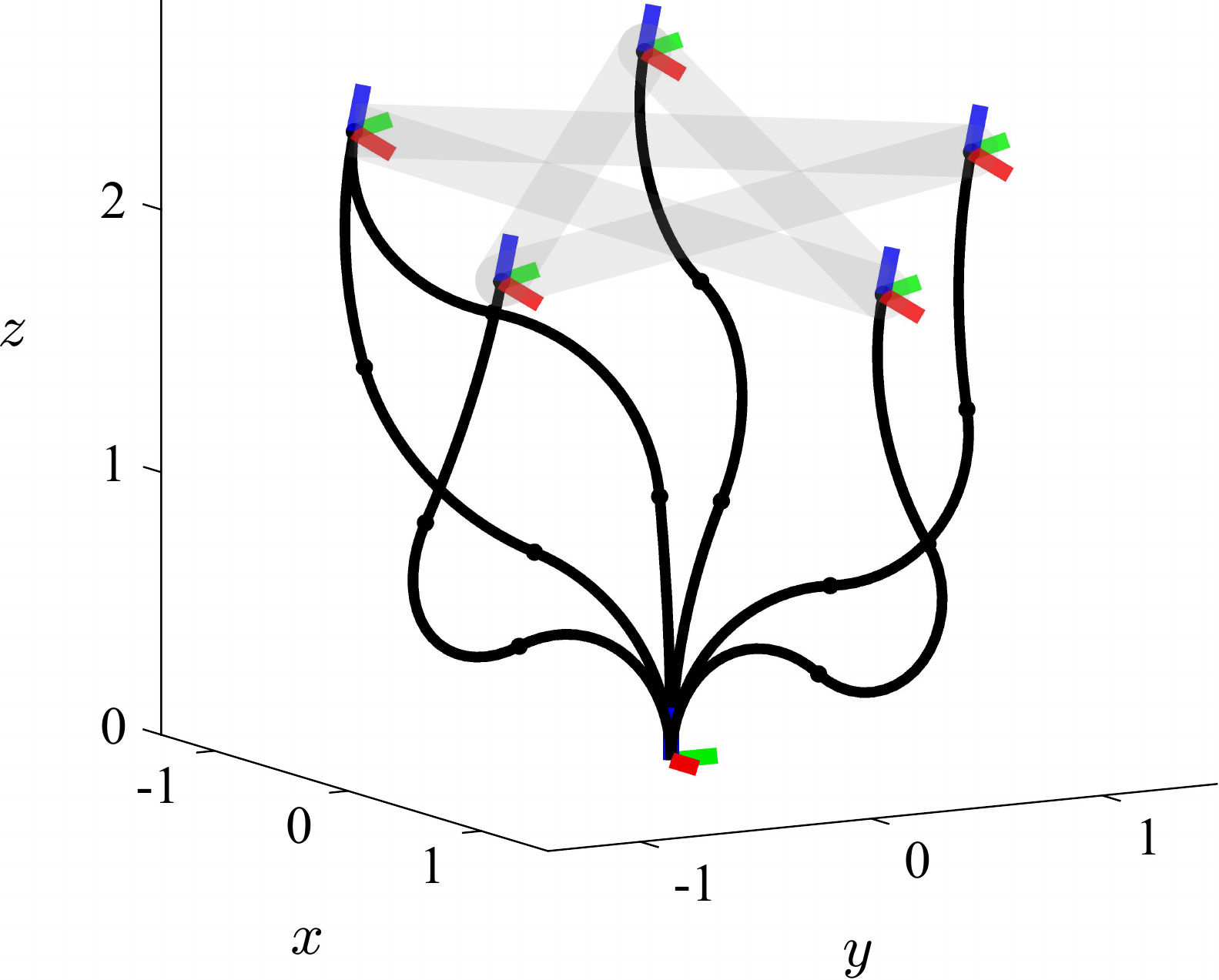}
  \label{fig_path_star_newton}}
  \subfigure[]{
  \includegraphics[width = 0.4 \linewidth]{./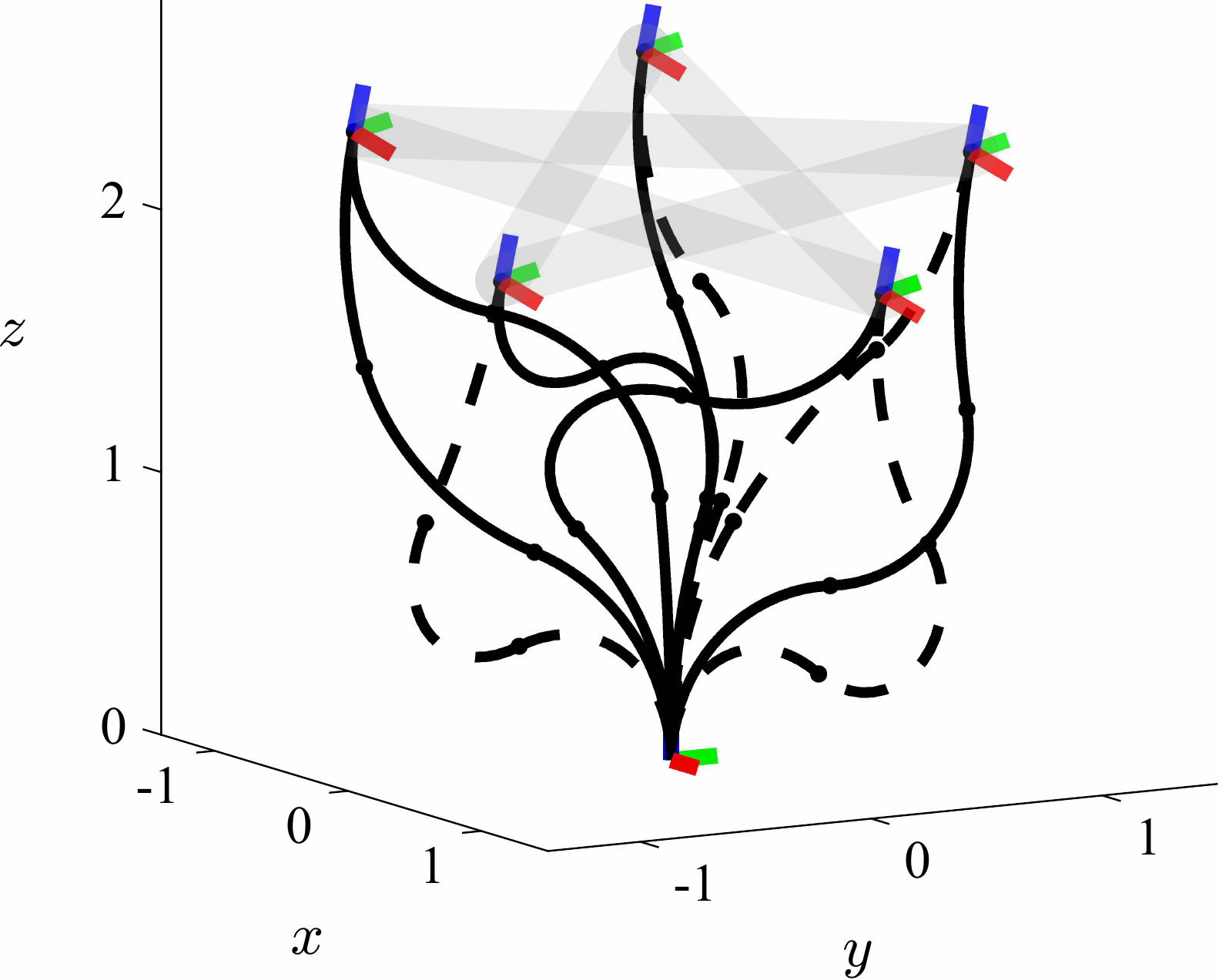}
  \label{fig_path_star_ours}}
  \subfigure[]{
  \includegraphics[width = 0.4 \linewidth]{./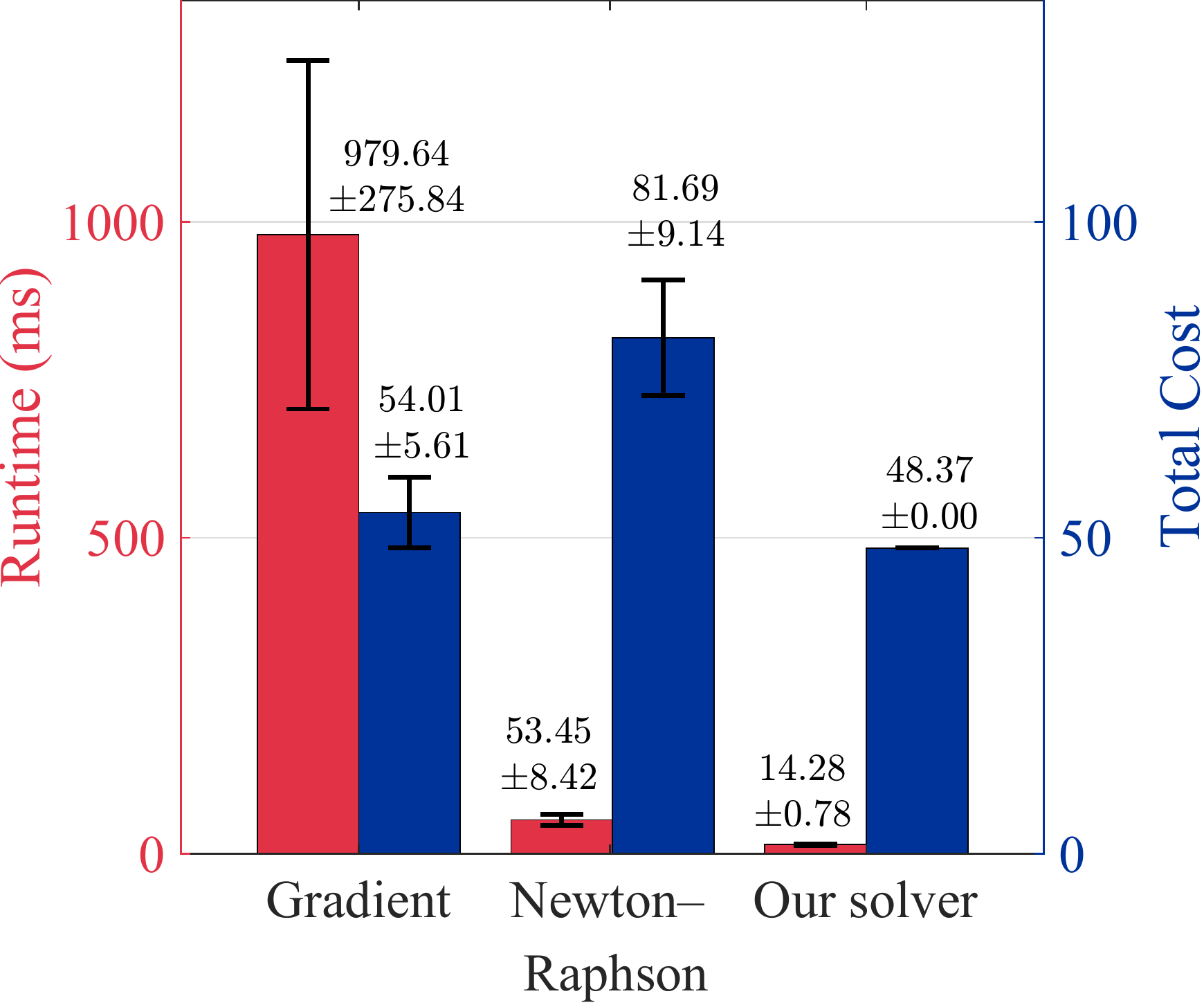}
  \label{fig_path_star_bar}}
  \caption{Planning Results of a pentagram-shaped path $\pbrac{P_1, P_2, \dots, P_6}$ using (a) the gradient method, (b) the Newton-Raphson method, (c) our solver. The dashed lines represent the multiple solutions. Using different random seeds, the runtime and total cost are summarised in (d).}
  \label{fig_path_star}
\end{figure}

\subsection{Path Planning}\label{path_planning}

In this scenario, instead of random sampling, we specify a path composed of a sequence of via points by giving the position and orientation of each point.

\textbf{Pentagram-Shaped Path.} Figure~\ref{fig_path_star} illustrates the planning result with 3 different methods. The robot passes through points $P_1, P_2, \dots, P_6$ in order. For the gradient method and Newton-Raphson method, the solution of the current pose would be used as the initial value for the next pose. For our solver, with multiple solutions obtained at each point, the one with minimal cost is chosen greedily. The cost is calculated by summing up the squared differences of model parameters.

\textbf{Comparison on Cost and Efficiency.} We repeat the experiment with 10 random seeds, and the error bars represent the 95\% Jeffreys confidence intervals \citep{cai2005onesided}. The total cost of planning varies from different seeds, revealing the degree of dependency on initial values. Hence the 0 deviation of cost reports that our algorithm is deterministic. The solver reduces about 10\% of the total cost relative to the gradient method. In terms of efficiency, the runtime is reduced by about 73\% compared with the Newton-Raphson method. Numerical methods do perform better when the previous solution is used as the initial value rather than a random one. Nevertheless, the smaller deviation of runtime reflects that our solver is more stable.

\textbf{Optimal RSS-Shaped Path.} We construct a sequence with 100 via points that form the shape of RSS. The orientations are all identical, whose $x$-axis points vertically downwards and is then rotated by $-\pi/4$ radians. A spherical obstacle with a diameter of 0.8 is placed by the side of the path. While the solver calculates the solutions at each via point, we discard those that collide with the obstacle. The cost is the same as defined in the previous path planning experiment. By applying dynamic programming, we achieve the \textit{optimal} path as shown in Figure~\ref{fig_path_rss}. For clarity, we plot the solutions every 3 points.

\section{Conclusion}\label{conclusion}

In this paper, we present an efficient multi-solution solver to deal with the inverse kinematics problem of 3-section constant-curvature robots. Our theory simplifies the problem to an equivalent one-dimensional problem and gives an error bound for the following approximation. Furthermore, a circular traversal is applied to the approximated problem, which brings the resolution completeness, then through a few steps of iterations, we arrive at the solution of the inverse kinematics problem. The experimental results demonstrate a better efficiency and higher success rate of our solver than benchmark approaches when one solution is required. In addition, the ability of multi-solution finding allows optimal path planning in a space with obstacles.

\section*{Acknowledgments}

This work was supported by the National Key R\&D Program of China (2021ZD0114500), the  National Natural Science Foundation of China (T2293724, 62173293), Key R\&D Program of Zhejiang (2022C01022), Zhejiang Provincial Natural Science Foundation of China (LD22E050007), Fundamental Research Funds for the Zhejiang Provincial Universities (2021XZZX021).


\bibliographystyle{plainnat}
\bibliography{refs.bib}

\clearpage
\appendix

\subsection{A Catalog of Common Formulae}\label{appendix_catalog_of_common_formulae}

A unit quaternion $q$ is written as a sum of real and imaginary parts in~(\ref{qabcd}), but this kind of notation is not always convenient in our proof. Equivalently, we can also represent the same quaternion $q$ in the form of a column vector, namely,
\begin{equation}
  q = \pmat{\delta \\ \bsym{\varepsilon}}, \quad \delta \in \mathbb{R}, \quad \bsym{\varepsilon} \in \mathbb{R}^3, \quad \sqrt{\delta^2 + \norm{\bsym{\varepsilon}}^2} = 1.
\end{equation}
Then the conjugate of $q$ becomes
\begin{equation}
  q^* = \pmat{\delta \\ -\bsym{\varepsilon}}.
\end{equation}
By definition, we compute the multiplication of two quaternions $q_1$ and $q_2$ as
\begin{equation}
  q_1 \otimes q_2 = \pmat{\delta_1 \delta_2 - \bsym{\varepsilon}_1^\trans \bsym{\varepsilon}_2 \\
  \delta_1 \bsym{\varepsilon}_2 + \delta_2 \bsym{\varepsilon}_1 + \bsym{\varepsilon}_1^\wedge \bsym{\varepsilon}_2},
\end{equation}
where the wedge operator $\pbrac{\cdot}^\wedge$ produces the usual skew-symmetric matrix,
\begin{equation}
  \bsym{\varepsilon}^\wedge = \pmat{0 & -\varepsilon_3 & \varepsilon_2 \\ \varepsilon_3 & 0 & -\varepsilon_1 \\ -\varepsilon_2 & \varepsilon_1 & 0}
\end{equation}
Since the quaternion multiplication is non-commutative, we introduce the left and right compound operators \citep{sola2017quaternion,barfoot2011pose} by defining $\pbrac{\cdot}^+: \mathbb{H} \to \mathbb{R}^{4 \times 4}$ and $\pbrac{\cdot}^\oplus: \mathbb{H} \to \mathbb{R}^{4 \times 4}$,
\begin{equation}
  \label{upplus}
  q_1^+ = \pmat{\delta_1 & -\bsym{\varepsilon}_1^\trans \\
  \bsym{\varepsilon}_1 & \delta_1 \bsym{I} + \bsym{\varepsilon}_1^\wedge},\quad
  q_2^\oplus = \pmat{\delta_2 & -\bsym{\varepsilon}_2^\trans \\
  \bsym{\varepsilon}_2 & \delta_2 \bsym{I} - \bsym{\varepsilon}_2^\wedge},
\end{equation}
so that the quaternion multiplication can be expressed in matrix notations, i.e.,
\begin{equation}
  q_1 \otimes q_2 = q_1^+ q_2 = q_2^\oplus q_1.
\end{equation}
It can be extended to a vector $\bsym{r} \in \mathbb{R}^3$ if it is written in the form of a pure quaternion $\bsym{r} = r_x \bsym{i} + r_y \bsym{j} + r_z \bsym{k}$, then the following product represents the rotation of $\bsym{r}$, namely,
\begin{equation}
  q \otimes \bsym{r} \otimes q^* = q^+ \pbrac{q^*}^\oplus \bsym{r} = \pbrac{\bsym{\varepsilon} \bsym{\varepsilon}^\trans + \pbrac{\delta \bsym{I} + \bsym{\varepsilon}^\wedge}^2} \bsym{r}.
\end{equation}

\subsection{Proof of Lemma~\ref{lemma1}}\label{appendix_proof_of_lemma1}

\begin{proof}
  We first reorder~(\ref{s2rot}) as $q_2 = q_1^* \otimes q$ since $q_1^{-1} = q_1^*$ for unit quaternions,
  then apply the matrix notation to get
  \begin{equation}
    q_2 = q^\oplus q_1^*,
  \end{equation}
  or
  \begin{equation}
    \label{lemma1pfeq1}
    \pmat{a_2 \\ b_2 \\ c_2 \\ d_2}
    =
    \pmat{a & -b & -c & -d \\ b &  a &  d & -c \\ c & -d &  a &  b \\ d &  c & -b & a}
    \pmat{a_1 \\ -b_1 \\ -c_1 \\ -d_1}.
  \end{equation}
  It has already been determined that $d_\lambda = 0$.
  So we separate the last row of~(\ref{lemma1pfeq1}) from another three to get two independent constraints.
  By rearranging the elements in the order of $\pmat{c_\lambda & -b_\lambda & a_\lambda}^\trans = \hbsym{r}_\lambda$,
  we obtain an identity of $\hbsym{r}_1$ from the last row,
  $$
  \pmat{b & c & d} \hbsym{r}_1 = 0,
  $$
  as well as a relationship of $\hbsym{r}_1$ and $\hbsym{r}_2$ from another three rows,
  $$
  \bsym{A} \hbsym{r}_1 = \hbsym{r}_2.
  $$
  This shows the necessity.

  Geometrically,~(\ref{lemma1eq1}) refers to a plane through the origin in the space $\mathbb{R}^3$
  with a normal vector $\bsym{n} = \pmat{b & c & d}^\trans$.
  This plane intersects the unit sphere in the space since $\hbsym{r}_1 \in \mathbb{S}^2 \subseteq \mathbb{R}^3$.
  The intersection is simply a circle
  \begin{equation}
    \label{lemma1pfeq2}
    C \pbrac{\bsym{n}} = \{ \bsym{v} \in \mathbb{S}^2: \bsym{n} \cdot \bsym{v} = 0 \}.
  \end{equation}

  Next we show the sufficiency to complete this proof.
  The proof of necessity is invertible, so we only need to verify that the solution is feasible by proving
  $\hbsym{r}_2 \in \bsym{A} C\pbrac{\bsym{n}} \subseteq \mathbb{S}^2$.
  Let $\hbsym{r}_1 = \pmat{c_1 & -b_1 & a_1}^\trans \in C\pbrac{\bsym{n}}$, then
  \begin{equation}
    \begin{aligned}
      \norm{\bsym{A} \hbsym{r}_1}^2 = &\ \hbsym{r}_1^\trans \bsym{A}^\trans \bsym{A} \hbsym{r}_1 \\
      = &\ \pbrac{1 - b^2} c_1^2 + 2 b c b_1 c_1 \\
      &\ + \pbrac{1 - c^2} b_1^2 + 2 c d a_1 b_1 \\
      &\ + \pbrac{1 - d^2} a_1^2 - 2 b d a_1 c_1 \\
      = &\ c_1^2 + b_1^2 + a_1^2 - \pbrac{b c_1 - c b_1 + d a_1}^2.
    \end{aligned}
  \end{equation}
  Since $\hbsym{r}_1 \in C\pbrac{\bsym{n}}$, we have $\norm{\hbsym{r}_1} = \sqrt{c_1^2 + b_1^2 + a_1^2} = 1$
  and $\bsym{n} \cdot \hbsym{r}_1 = b c_1 - c b_1 + d a_1 = 0$, therefore
  $$
  \norm{\hbsym{r}_2} = \norm{\bsym{A} \hbsym{r}_1} = 1.
  $$
  This shows the sufficiency.
\end{proof}

\subsection{Corollary}\label{appendix_corollary}

We observe a consequence after proving Lemma~\ref{lemma1}.

\begin{corollary}\label{cor1}
  Given an end rotation $q = \pmat{a & b & c & d}^\trans$, for those 2-section continuum robots with identical end rotation, their endpoints fall on the plane passing through the origin and with a normal vector $\bsym{n} = \pmat{b & c & d}^\trans$, i.e.,
  \begin{equation}
    \label{cor1eq1}
    \bsym{n} \cdot \bsym{r} = 0.
  \end{equation}
\end{corollary}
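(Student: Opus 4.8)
The plan is to turn the geometric claim $\bsym{n}\cdot\bsym{r}=0$ into a statement about the \emph{scalar} part of a single quaternion product, and then to make that product collapse by using the one structural fact available here: every section quaternion $q_\lambda$ has a vanishing $\bsym{k}$-component ($d_\lambda=0$). This is the same flavour of manipulation used to prove Lemma~\ref{lemma1}, which is why the result is billed as a corollary of it.

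First I would record two elementary identities. Writing $q=a+\bsym{n}$ with $\bsym{n}=\pmat{b&c&d}^\trans$ viewed as a pure quaternion, and viewing $\bsym{r}$ as a pure quaternion, the identity $\mathrm{Re}(p\otimes s)=-p\cdot s$ for pure quaternions gives
\[
  \mathrm{Re}\pbrac{q^*\otimes\bsym{r}}=\bsym{n}\cdot\bsym{r},
\]
so it suffices to show $\mathrm{Re}(q^*\otimes\bsym{r})=0$. Second, I would verify the single-section identity $q_\lambda^*\otimes\hbsym{r}_\lambda=\bsym{k}$ (equivalently $\hbsym{r}_\lambda=q_\lambda\otimes\bsym{k}$): this is a two-line computation from $q_\lambda=a_\lambda+b_\lambda\bsym{i}+c_\lambda\bsym{j}$, $\hbsym{r}_\lambda=c_\lambda\bsym{i}-b_\lambda\bsym{j}+a_\lambda\bsym{k}$ and $a_\lambda^2+b_\lambda^2+c_\lambda^2=1$. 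Then I substitute the $2$-section forward kinematics $\bsym{r}=\bsym{r}_1+q_1\otimes\bsym{r}_2\otimes q_1^*$, with $\bsym{r}_\lambda=\rho(a_\lambda,L_\lambda)\hbsym{r}_\lambda$ and $q^*=q_2^*\otimes q_1^*$, and expand:
\[
  q^*\otimes\bsym{r}=q_2^*\otimes\pbrac{q_1^*\otimes\bsym{r}_1}+\pbrac{q_2^*\otimes\bsym{r}_2}\otimes q_1^*,
\]
where in the second summand the adjacent pair $q_1^*\otimes q_1$ has cancelled. Applying the single-section identity to each parenthesis turns the right-hand side into $\rho(a_1,L_1)\,q_2^*\otimes\bsym{k}+\rho(a_2,L_2)\,\bsym{k}\otimes q_1^*$. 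Since $\mathrm{Re}(p\otimes\bsym{k})=\mathrm{Re}(\bsym{k}\otimes p)$ equals minus the $\bsym{k}$-component of $p$, and the $\bsym{k}$-components of $q_2^*$ and $q_1^*$ are $-d_2=0$ and $-d_1=0$, both real parts vanish, hence $\bsym{n}\cdot\bsym{r}=\mathrm{Re}(q^*\otimes\bsym{r})=0$.

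I do not expect a genuine obstacle; the content is bookkeeping. The one place to be careful is non-commutativity: the cancellation $q_1^*\otimes q_1=1$ is legitimate only because the factors are adjacent in the second summand, and one must resist reordering factors elsewhere. An alternative, more mechanical route would express everything through the $4\times4$ left/right compound operators $q^+,q^\oplus$ of Appendix~\ref{appendix_catalog_of_common_formulae} and read off the scalar row; I would keep the quaternion-algebra version above because the telescoping is transparent.
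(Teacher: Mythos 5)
Your proof is correct, but it takes a genuinely different route from the paper's. The paper proves the corollary by writing $\bsym{r}=\rho\pbrac{a_1,L_1}\hbsym{r}_1+\rho\pbrac{a_2,L_2}\,q_1^+\pbrac{q_1^*}^\oplus\hbsym{r}_2$ and killing the two terms separately: the first via the rotational constraint $\bsym{n}\cdot\hbsym{r}_1=0$ of Lemma~\ref{lemma1}, the second by substituting $\hbsym{r}_2=\bsym{A}\hbsym{r}_1$ from~(\ref{lemma1eq2}) and expanding $\bsym{n}\cdot q_1^+\pbrac{q_1^*}^\oplus\bsym{A}\hbsym{r}_1$ in components until it collapses to $a\,\pbrac{\bsym{n}\cdot\hbsym{r}_1}=0$ --- so the result is literally a corollary of Lemma~\ref{lemma1}. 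You bypass Lemma~\ref{lemma1} entirely: the reduction $\bsym{n}\cdot\bsym{r}=\mathrm{Re}\pbrac{q^*\otimes\bsym{r}}$ together with the single-section identity $\hbsym{r}_\lambda=q_\lambda\otimes\bsym{k}$ (which is exactly where $d_\lambda=0$ enters, and which I have checked against~(\ref{abcd})) makes the product telescope, and both surviving real parts vanish because $q_1^*$ and $q_2^*$ have no $\bsym{k}$-component. Your version is shorter, coordinate-free, and makes the mechanism transparent; the paper's version reuses machinery already established for the solver and stays in the $3\times 3$ matrix formalism used throughout the appendix. One caution on scope: the clean telescoping is special to $N=2$, since each surviving $\bsym{k}$ ends up flanked by conjugate factors on only one side; for three or more sections the middle summands are flanked on both sides, the real parts no longer reduce to a single $\bsym{k}$-component, and indeed the statement fails in general there (otherwise Lemma~\ref{lemma3} would be vacuous), so your argument correctly exploits, and is limited to, the 2-section hypothesis.
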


\begin{proof}
  Rewrite~(\ref{s2trans}) as
  \begin{equation}
    \label{cor1pfeq1}
    \begin{aligned}
      \bsym{r} &= \rho\pbrac{a_1, L_1} \hbsym{r}_1 + \rho\pbrac{a_2, L_2} q_1 \otimes \hbsym{r}_2 \otimes q_1^* \\
      &= \rho\pbrac{a_1, L_1} \hbsym{r}_1 + \rho\pbrac{a_2, L_2} q_1^+ \pbrac{q_1^*}^\oplus \hbsym{r}_2.
    \end{aligned}
  \end{equation}
  We have known that the 2-section continuum robots with identical end rotation
  must all meet the perpendicular constraint $\bsym{n} \cdot \hbsym{r}_1 = 0$ referred to in~(\ref{lemma1eq1}).
  So the first term on the right side of~(\ref{cor1pfeq1}) is perpendicular to $\bsym{n}$.
  So does the second term because if we consider the inner product $\bsym{n} \cdot q_1^+ \pbrac{q_1^*}^\oplus \hbsym{r}_2$,
  then plug~(\ref{lemma1eq2}) into this expression and it becomes 
  \begin{equation}
    \begin{aligned}
      &\ \bsym{n} \cdot q_1^+ \pbrac{q_1^*}^\oplus \bsym{A} \hbsym{r}_1 \\
      = &\ a d a_1 - a b c_1 + a c b_1 + 2 b d b_1 + 2 c d c_1 \\
      &\ + 2 \pbrac{a b c_1 - a c b_1 - b d b_1 - c d c_1} \pbrac{a_1^2 + b_1^2 + c_1^2} \\
      = &\ a \pbrac{d a_1 + b c_1 - c b_1}.
    \end{aligned}
  \end{equation}
  As we have encountered in the proof of Lemma~\ref{lemma1},
  we get $\bsym{n} \cdot q_1^+ \pbrac{q_1^*}^\oplus \hbsym{r}_2 = a \cdot \bsym{n}^\trans \hbsym{r}_1 = 0$.
  This proves $\bsym{n} \cdot \bsym{r} = 0$. Hence, the endpoint falls on that given plane.
\end{proof}

\subsection{Proof of Lemma~\ref{lemma2}}\label{appendix_proof_of_lemma2}

\begin{proof}
  Let $\bsym{w}_2 = \pmat{-c_2 & b_2 & a_2}^\trans$,
  then by using matrix notations~(\ref{upplus}) and~(\ref{upstar}), we find that
  \begin{equation}
    \label{lemma2pfeq1}
    \begin{aligned}
      &\quad q_1^+ \pbrac{q_1^*}^\oplus \hbsym{r}_2 \\
      &= \pmat{2c_1^2-1 & -2b_1c_1 & 2a_1c_1 \\
      -2b_1c_1 & 2b_1^2-1 & -2a_1b_1 \\
      2a_1c_1 & -2a_1b_1 & 2a_1^2-1}
      \pmat{-c_2 \\ b_2 \\ a_2} \\
      &= (2 \hbsym{r}_1 \hbsym{r}_1^\trans - \bsym{I}) \bsym{w}_2.
    \end{aligned}
  \end{equation}
  For the last equality above, we use the fact that $\hbsym{r}_1^\trans \hbsym{r}_1 = \norm{\bsym{r}_1}^2 = 1$.
  Combine~(\ref{cor1pfeq1}) and~(\ref{lemma2pfeq1}) to get
  \begin{equation}
    \label{lemma2pfeq2}
    \bsym{r} - \rho\pbrac{a_1, L_1}\hbsym{r}_1
    = \rho\pbrac{a_2, L_2} \pbrac{2 \hbsym{r}_1 \hbsym{r}_1^\trans - \bsym{I}} \bsym{w}_2.
  \end{equation}
  We also notice the matrix $2 \hbsym{r}_1 \hbsym{r}_1^\trans - \bsym{I}$ is symmetric and involutory, namely,
  $$
  \pbrac{2 \hbsym{r}_1 \hbsym{r}_1^\trans - \bsym{I}}^\trans = \pbrac{2 \hbsym{r}_1 \hbsym{r}_1^\trans - \bsym{I}}, \pbrac{2 \hbsym{r}_1 \hbsym{r}_1^\trans - \bsym{I}}^2 = \bsym{I}.
  $$
  So if we multiply~(\ref{lemma2pfeq2}) by $2 \hbsym{r}_1 \hbsym{r}_1^\trans - \bsym{I}$
  and then denote $\bsym{v} = \bsym{r} - \rho\pbrac{a_1, L_1}\hbsym{r}_1$, we can get
  \begin{equation}
    \label{lemma2pfeq3}
    \pbrac{2 \hbsym{r}_1 \hbsym{r}_1^\trans - \bsym{I}} \bsym{v} = \rho\pbrac{a_2, L_2} \bsym{w}_2.
  \end{equation}
  This equation is equivalent to~(\ref{s2trans}) because the derivation above can be inverted.
  Note that the left side only involves variables of the first arc section and the right side only of the second.
  To make it solvable, a necessary condition is the norm of both sides are equal.
  According to the definition of $\bsym{w}_2$,
  the norm of $\bsym{w}_2$ equals to the norm of $\bsym{r}_2$, which is exactly $1$, so we must have
  \begin{equation}
    \label{lemma2pfeq4}
    \norm{\bsym{v}} = \rho\pbrac{a_2, L_2}.
  \end{equation}
  Now $a_2$ can be calculated from the last row (the third dimension) of~(\ref{lemma2pfeq3}) by
  \begin{equation}
    \label{lemma2pfeq5}
    a_2 = \frac{\bsym{u}\cdot\bsym{v}}{\rho\pbrac{a_2, L_2}} = \frac{\bsym{u}\cdot\bsym{v}}{\norm{\bsym{v}}},
  \end{equation}
  where $\bsym{u} = \pmat{2 a_1 c_1 & -2 a_1 b_1 & 2 a_1^2 - 1}^\trans$ is the transpose of the third row of the matrix $2 \hbsym{r}_1 \hbsym{r}_1^\trans - \bsym{I}$, as defined in~(\ref{lemma3eq2}). So the inner product $\bsym{u}\cdot\bsym{v}$ extracts the last component of the vector of the left part of~(\ref{lemma2pfeq3}).
  
  Substitute~(\ref{lemma2pfeq5}) into~(\ref{lemma2pfeq4}) yields~(\ref{lemma2eq1}).
  Substitute~(\ref{lemma2pfeq4}) into~(\ref{lemma2pfeq3}) and then we can calculate $\bsym{w}_2$ by
  \begin{equation}
    \bsym{w}_2 = \pbrac{2 \bsym{r}_1 \bsym{r}_1^\trans - \bsym{I}} \frac{\bsym{v}}{\norm{\bsym{v}}},
  \end{equation}
  so $\hbsym{r}_2$ can be recovered from $\bsym{w}_2$ using elementary operations.
  This shows~(\ref{lemma2eq4}).
\end{proof}

\subsection{Proof of Lemma~\ref{lemma3}}\label{appendix_proof_of_lemma3}

We list some identities before we turn to the proof.

\begin{proposition}\label{prop2}
  Let $\bsym{B}$ be defined as~(\ref{lemma3eq2}),
  and denote $\bsym{m} = \pmat{c & -b & a}^\trans$, then for $\lambda = 1, 2, 3$, we have
  \begin{enumerate}[\rm\bfseries(a)]
    \item $\quad \hbsym{r}_\lambda^\trans \bsym{B} \hbsym{r}_\lambda = \hbsym{r}_\lambda^\trans \bsym{B}^\trans \hbsym{r}_\lambda = d$.
    \item $\quad \hbsym{r}_\lambda^\trans \pbrac{\bsym{B}^\trans \bsym{B} + \bsym{m} \bsym{m}^\trans} \hbsym{r}_\lambda = 1$.
  \end{enumerate}
\end{proposition}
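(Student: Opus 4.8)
The plan is to avoid a coordinate-by-coordinate expansion and instead split $\bsym{B}$ into its scalar and skew-symmetric parts. Writing $\bsym{m} = \pmat{c & -b & a}^\trans$ and using the wedge operator of Appendix~\ref{appendix_catalog_of_common_formulae}, one checks directly from~(\ref{lemma3eq2}) that
\begin{equation}
  \bsym{B} = d\,\bsym{I} - \bsym{m}^\wedge, \qquad \bsym{B}^\trans = d\,\bsym{I} + \bsym{m}^\wedge .
\end{equation}
Two standard facts about $3\times 3$ skew-symmetric matrices then do all the work: for any $\bsym{v}\in\mathbb{R}^3$ one has $\bsym{v}^\trans \bsym{m}^\wedge \bsym{v} = 0$ (the quadratic form of a skew matrix vanishes, equivalently $\bsym{v}\cdot\pbrac{\bsym{m}\times\bsym{v}} = 0$), and $\pbrac{\bsym{m}^\wedge}^2 = \bsym{m}\bsym{m}^\trans - \norm{\bsym{m}}^2\bsym{I}$.

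For part \textbf{(a)}, since $\hbsym{r}_\lambda\in\mathbb{S}^2$ we get
\begin{equation}
  \hbsym{r}_\lambda^\trans \bsym{B}\,\hbsym{r}_\lambda = d\,\hbsym{r}_\lambda^\trans\hbsym{r}_\lambda - \hbsym{r}_\lambda^\trans \bsym{m}^\wedge \hbsym{r}_\lambda = d\norm{\hbsym{r}_\lambda}^2 = d,
\end{equation}
and the identical computation with $\bsym{B}^\trans = d\bsym{I} + \bsym{m}^\wedge$ gives $\hbsym{r}_\lambda^\trans \bsym{B}^\trans \hbsym{r}_\lambda = d$. For part \textbf{(b)}, multiply the two decompositions,
\begin{equation}
  \bsym{B}^\trans\bsym{B} = \pbrac{d\,\bsym{I}+\bsym{m}^\wedge}\pbrac{d\,\bsym{I}-\bsym{m}^\wedge} = d^2\bsym{I} - \pbrac{\bsym{m}^\wedge}^2,
\end{equation}
and apply the skew-symmetric identity to obtain $\bsym{B}^\trans\bsym{B} = \pbrac{d^2 + \norm{\bsym{m}}^2}\bsym{I} - \bsym{m}\bsym{m}^\trans$. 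Because $q$ is a unit quaternion, $d^2 + \norm{\bsym{m}}^2 = a^2+b^2+c^2+d^2 = 1$, so $\bsym{B}^\trans\bsym{B} + \bsym{m}\bsym{m}^\trans = \bsym{I}$; sandwiching this identity between $\hbsym{r}_\lambda^\trans$ and $\hbsym{r}_\lambda$ yields $\hbsym{r}_\lambda^\trans\pbrac{\bsym{B}^\trans\bsym{B}+\bsym{m}\bsym{m}^\trans}\hbsym{r}_\lambda = \norm{\hbsym{r}_\lambda}^2 = 1$.

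There is essentially no obstacle here: the statement is a pair of elementary linear-algebra identities, and the argument is uniform in $\lambda$ because the only property of $\hbsym{r}_\lambda$ used is $\norm{\hbsym{r}_\lambda}=1$. The single point requiring care is the bookkeeping in $\bsym{B} = d\bsym{I} - \bsym{m}^\wedge$ --- fixing the sign and the coordinate order of $\bsym{m}$ --- together with remembering to invoke \emph{both} unit-norm constraints: that of $q$ (for $d^2+\norm{\bsym{m}}^2 = 1$) and that of $\hbsym{r}_\lambda$. A fully routine alternative, if one prefers not to rely on the skew-symmetric identities, is to expand $\hbsym{r}_\lambda^\trans\bsym{B}\hbsym{r}_\lambda$ and the entries of $\bsym{B}^\trans\bsym{B}+\bsym{m}\bsym{m}^\trans$ directly and collect terms; this also works but obscures why the off-diagonal contributions cancel.
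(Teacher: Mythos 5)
Your proof is correct, and it takes a cleaner route than the paper's. First, the key identification checks out against the paper's conventions: with $\bsym{m} = \pmat{c & -b & a}^\trans$ and the wedge operator of Appendix~\ref{appendix_catalog_of_common_formulae}, one indeed has $d\,\bsym{I} - \bsym{m}^\wedge = \pmat{d & a & b \\ -a & d & c \\ -b & -c & d} = \bsym{B}$, so the sign and ordering bookkeeping you flagged is right. The paper instead proves both identities by direct coordinate expansion: for \textbf{(a)} it expands $\hbsym{r}_\lambda^\trans \bsym{B} \hbsym{r}_\lambda$ to get $d\pbrac{a_\lambda^2+b_\lambda^2+c_\lambda^2} = d$ and then invokes antisymmetry of $\bsym{B}-\bsym{B}^\trans$ for the transposed version (the one structural observation it does make, which your decomposition subsumes since $\bsym{B}-\bsym{B}^\trans = -2\bsym{m}^\wedge$); for \textbf{(b)} it expands the quadratic form and observes the factorisation $\pbrac{a^2+b^2+c^2+d^2}\pbrac{a_\lambda^2+b_\lambda^2+c_\lambda^2}$. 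Your argument buys two things the paper's does not make explicit: it proves the stronger matrix identity $\bsym{B}^\trans\bsym{B}+\bsym{m}\bsym{m}^\trans = \bsym{I}$ (not merely its quadratic form on unit vectors), and it explains \emph{why} the cross terms vanish, via $\bsym{v}^\trans\bsym{m}^\wedge\bsym{v}=0$ and $\pbrac{\bsym{m}^\wedge}^2 = \bsym{m}\bsym{m}^\trans - \norm{\bsym{m}}^2\bsym{I}$, rather than watching them cancel numerically. The paper's expansion is shorter to state but opaque; both are complete proofs.
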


\begin{proof}
  The proof of the identities is straightforward using the fact that $q$ is a unit quaternion and $\hbsym{r}_\lambda$ is a unit vector.
  \textbf{(a)} Expand $\hbsym{r}_\lambda^\trans \bsym{B} \hbsym{r}_\lambda$ and we have $\hbsym{r}_\lambda^\trans \bsym{B} \hbsym{r}_\lambda = d \pbrac{a_\lambda^2 + b_\lambda^2 + c_\lambda^2} = d$. Since $\bsym{B} - \bsym{B}^\trans$ is antisymmetric, we have $\hbsym{r}_\lambda^\trans \pbrac{\bsym{B} - \bsym{B}^\trans} \hbsym{r}_\lambda = 0$, so $\hbsym{r}_\lambda^\trans \bsym{B} \hbsym{r}_\lambda = \hbsym{r}_\lambda^\trans \bsym{B}^\trans \hbsym{r}_\lambda = d$.
  \textbf{(b)} Expand the left and we have $\hbsym{r}_\lambda^\trans \pbrac{\bsym{B}^\trans \bsym{B} + \bsym{m} \bsym{m}^\trans} \hbsym{r}_\lambda = \pbrac{a^2 + b^2 + c^2 + d^2} \pbrac{a_\lambda^2 + b_\lambda^2 + c_\lambda^2} = 1 \cdot 1 = 1$.
\end{proof}

Knowing this, the proof of Lemma~\ref{lemma3} can be completed.

\begin{proof}[Proof of Lemma~\ref{lemma3}]
  As mentioned, we partition the three sections into two groups.
  There are only two ways to make the partition,
  namely, $\{1, 2\}\cup\{3\}$ and $\{1\}\cup\{2, 3\}$.
  We will see soon that the former leads to the case $\lambda = 3$,
  and the latter to the case $\lambda = 1$.
  First we show the case $\lambda = 3$. Let
  \begin{equation}
    q_e = q_1 \otimes q_2 = q \otimes q_3^{-1} = q \otimes q_3^*,
  \end{equation}
  then through basic matrix multiplications we find
  \begin{equation}
    \label{lemma3pfeq1}
    q_e = q^+ q_3^* = \pmat{\delta \delta_3 + \bsym{\varepsilon}^\trans \bsym{\varepsilon}_3 \\ \delta_3 \bsym{\varepsilon} - \delta \bsym{\varepsilon}_3 - \bsym{\varepsilon}^\wedge \bsym{\varepsilon}_3} = \pmat{\bsym{m}^\trans \hbsym{r}_3 \\ \bsym{B} \hbsym{r}_3}.
  \end{equation}
  Here we keep adopting the symbols $\bsym{m}$ and $\bsym{B}$ defined in Proposition~\ref{prop2}.
  According to Corollary~\ref{cor1}, we know that the endpoint of the first two
  arc sections lies on the plane passing the origin with a normal vector $\bsym{n}_e = \pmat{b_e & c_e & d_e}^\trans = \bsym{B} \hbsym{r}_3$.
  Denote
  \begin{equation}
    \label{lemma3pfeq21}
    \bsym{r}_e = \bsym{r} - q_e \otimes \bsym{r}_3 \otimes q_e^*,
  \end{equation}
  then Lemma~\ref{lemma1} tells us that
  \begin{equation}
    \label{lemma3pfeq22}
    \bsym{n}_e \cdot \bsym{r}_e = 0.
  \end{equation}
  Replace $\bsym{r}_e$ in~(\ref{lemma3pfeq22}) with~(\ref{lemma3pfeq21}) we get
  \begin{equation}
    \bsym{r}^\trans \bsym{B} \hbsym{r}_3 - \rho\pbrac{a_3, L_3}
    \bsym{n}_e \cdot q_e \otimes \bsym{r}_3 \otimes q_e^* = 0,
  \end{equation}
  which is close to the result we want. It suffices to show that
  \begin{equation}
    \bsym{n}_e \cdot q_e \otimes \bsym{r}_3 \otimes q_e^* = d.
  \end{equation}
  Note that
  \begin{equation}
    \begin{aligned}
      &\quad q_e \otimes \bsym{r}_3 \otimes q_e^* \\
      &= \pbrac{\bsym{B} \hbsym{r}_3} \pbrac{\bsym{B} \hbsym{r}_3}^\trans \hbsym{r}_3 + \pbrac{\bsym{m}^\trans \hbsym{r}_3}^2 \hbsym{r}_3 \\
      &\quad + 2 \pbrac{\bsym{m}^\trans \hbsym{r}_3} \pbrac{\bsym{B} \hbsym{r}_3}^\wedge \hbsym{r}_3 + \pbrac{\bsym{B} \hbsym{r}_3}^\wedge \pbrac{\pbrac{\bsym{B} \hbsym{r}_3}^\wedge \hbsym{r}_3},
    \end{aligned}
  \end{equation}
  and therefore we have
  \begin{equation}
    \begin{aligned}
      &\quad\pbrac{\bsym{B} \hbsym{r}_3} \cdot q_e \otimes \bsym{r}_3 \otimes q_e^* \\
      &= \pbrac{\bsym{B} \hbsym{r}_3}^\trans \pbrac{\bsym{B} \hbsym{r}_3} \pbrac{\hbsym{r}_3^\trans \bsym{B}^\trans \hbsym{r}_3}
      + \pbrac{\bsym{m}^\trans \hbsym{r}_3}^2 \pbrac{\hbsym{r}_3^\trans \bsym{B}^\trans \hbsym{r}_3} + 0 + 0 \\
      &= \pbrac{\hbsym{r}_3^\trans \bsym{B}^\trans \bsym{B} \hbsym{r}_3
      + \hbsym{r}_3^\trans \bsym{m} \bsym{m}^\trans \hbsym{r}_3}
      \pbrac{\hbsym{r}_3^\trans \bsym{B}^\trans \hbsym{r}_3} \\
      &= d
    \end{aligned}
  \end{equation}
  The last equality follows from Proposition~\ref{prop2}. So~(\ref{lemma3eq1}) holds for $\lambda = 3$.

  The proof of case $\lambda = 1$ is simpler. We use the partition $\{1\}\cup\{2, 3\}$. Let
  \begin{equation}
    q_e' = q_2 \otimes q_3 = q_1^{-1} \otimes q = q_1^* \otimes q,
  \end{equation}
  and we can find a matrix $\bsym{B}'$ such that
  \begin{equation}
    \label{lemma3pfeq23}
    q_e' = \pbrac{q_1^*}^+ q = \pmat{\delta_1 \delta + \bsym{\varepsilon}_1^\trans \bsym{\varepsilon} \\ \delta_1 \bsym{\varepsilon} - \delta \bsym{\varepsilon}_1 - \bsym{\varepsilon}_1^\wedge \bsym{\varepsilon}} = \pmat{\cdot \\ \bsym{B}' \hbsym{r}_1}.
  \end{equation}
  Rearrange~(\ref{s3trans}) and we have
  \begin{equation}
    \label{lemma3pfeq3}
    q_1^* \otimes \pbrac{\bsym{r} - \bsym{r}_1} \otimes q_1
    = \bsym{r}_2 + q_2 \otimes \bsym{r}_3 \otimes q_2^*.
  \end{equation}
  Again, Corollary~\ref{cor1} specifies a plane through the origin with a normal vector $\bsym{n}'_e = \bsym{B}' \hbsym{r}_1$, and says the vector in $\mathbb{R}^3$ to the left of~(\ref{lemma3pfeq3}) lies on that plane. If $\bsym{R}_1^* \in \mathsf{SO}(3)$ is the rotation matrix of $q_1^*$, then we have
  \begin{equation}
    \label{lemma3pfeq4}
    \pbrac{\bsym{B}' \hbsym{r}_1} \cdot \bsym{R}_1^* \pbrac{\bsym{r} - \bsym{r}_1} = 0.
  \end{equation}
  Since $\pbrac{q_1^*}^{-1} = q_1$, we know that the corresponding rotatioin matrix of $q_1$ is $\bsym{R}_1 = \pbrac{\bsym{R}_1^*}^{-1} = \pbrac{\bsym{R}_1^*}^\trans$.
  So we have
  \begin{equation}
    \label{lemma3pfeq51}
    \pbrac{\bsym{R}_1^*}^\trans \bsym{B}' \hbsym{r}_1 = \bsym{R}_1 \bsym{B}' \hbsym{r}_1 = q_1 \otimes \pbrac{\bsym{B}' \hbsym{r}_1} \otimes q_1^*.
  \end{equation}
  Using~(\ref{lemma3pfeq23}) we find that
  \begin{equation}
    \label{lemma3pfeq52}
    \begin{aligned}
      &\quad q_1 \otimes \pbrac{\bsym{B}' \hbsym{r}_1} \otimes q_1^* \\
      &= \pbrac{\bsym{\varepsilon}_1 \bsym{\varepsilon}_1^\trans + \pbrac{\delta_1 \bsym{I} + \bsym{\varepsilon}_1^\wedge}^2} \pbrac{\delta_1 \bsym{\varepsilon} - \delta \bsym{\varepsilon}_1 - \bsym{\varepsilon}_1^\wedge \bsym{\varepsilon}} \\
      &= \delta_1 \bsym{\varepsilon} - \delta \bsym{\varepsilon}_1 - \bsym{\varepsilon}^\wedge \bsym{\varepsilon}_1 \\
      &= \bsym{B} \hbsym{r}_1.
    \end{aligned}
  \end{equation}
  So~(\ref{lemma3pfeq51}) and~(\ref{lemma3pfeq52}) leads to
  \begin{equation}
    \label{lemma3pfeq53}
    \pbrac{\bsym{R}_1^*}^\trans \bsym{B}' \hbsym{r}_1 = \bsym{B} \hbsym{r}_1.
  \end{equation}
  We combine~(\ref{lemma3pfeq4}) and~(\ref{lemma3pfeq53}) and apply Proposition~\ref{prop2}(a) to deduce that
  \begin{equation}
    \begin{aligned}
      0 &= \pbrac{\bsym{B}' \hbsym{r}_1} \cdot \bsym{R}_1^* \pbrac{\bsym{r} - \bsym{r}_1} \\
      &= \bsym{r}^\trans \pbrac{\bsym{R}_1^*}^\trans \bsym{B}' \hbsym{r}_1
      - \bsym{r}_1^\trans \pbrac{\bsym{R}_1^*}^\trans \bsym{B}' \hbsym{r}_1 \\
      &= \bsym{r}^\trans \bsym{B} \hbsym{r}_1
      - \rho\pbrac{a_1, L_1} \hbsym{r}_1^\trans \bsym{B} \hbsym{r}_1 \\
      &= \bsym{r}^\trans \bsym{B} \hbsym{r}_1
      - \rho\pbrac{a_1, L_1} d.
    \end{aligned}
  \end{equation}
  Therefore~(\ref{lemma3eq1}) also holds for $\lambda = 1$.
\end{proof}

\subsection{Proof of Theorem~\ref{theorem2}}\label{appendix_proof_of_theorem2}

\begin{proof}
  Exactly, $\hbsym{r}_3$ satisfies the equation
  \begin{equation}
    \bsym{n}_0 \cdot \hbsym{r}_3 - \rho\pbrac{a_3, L_3} d = 0.
  \end{equation}
  Subtract this from~(\ref{r3approx}) and we have the projection error
  \begin{equation}
    \begin{aligned}
    \abs{\hbsym{n}_0 \cdot \pbrac{\hbsym{r}_3^{(0)} - \hbsym{r}_3}} &= \frac{\gamma L_3 \abs{d} - \rho\pbrac{a_3, L_3} d}{\norm{\bsym{n}_0}}\\
    &= \frac{L_3 \abs{d}}{\norm{\bsym{B}^\trans \bsym{r}}} \abs{\gamma - \frac{\sqrt{1-a_3^2}}{\arccos{a_3}}}.
    \end{aligned}
  \end{equation}
  We find that
  \begin{equation}
    \min_\gamma \left\{\max_{a_3 \in [0, 1]} \left\{\abs{\gamma - \frac{\sqrt{1-a_3^2}}{\arccos{a_3}}}\right\}\right\} = \frac{1}{2} - \frac{1}{\pi}
  \end{equation}
  when taking $\gamma = \frac{1}{2} + \frac{1}{\pi}$, so we get
  \begin{equation}
    \label{esteq1}
    \abs{\hbsym{n}_0 \cdot \pbrac{\hbsym{r}_3^{(0)} - \hbsym{r}_3}} \le \frac{\pbrac{\pi - 2} L_3 \abs{d}}{2 \pi \norm{\bsym{B}^\trans \bsym{r}}}.
  \end{equation}
  
  Next we turn to find the bound of the approximation of $\bsym{r}_1$. The exact equation is
  \begin{equation}
    \begin{cases}
      \bsym{n}_e \cdot \hbsym{r}_1 = 0,\\
      \bsym{n}_0 \cdot \hbsym{r}_1 - \rho\pbrac{a_1, L_1} d = 0,
    \end{cases}
  \end{equation}
  where $\bsym{n}_e = \bsym{B} \hbsym{r}_3, \bsym{n}_0 = \bsym{B}^\trans \bsym{r}$. Since both $\hbsym{r}_1^{(0)}$ and $\hbsym{r}_1$ are elements in $\mathbb{S}^2$ and perpendicular with $\bsym{n}_e$, we assume
  \begin{equation}
    \hbsym{r}_1 = \exp\pbrac{\bsym{n}_e^\wedge \beta} \hbsym{r}_1^{(0)},
  \end{equation}
  and therefore,
  \begin{equation}
    \label{esteq2}
    \norm{\hbsym{r}_1^{(0)} - \hbsym{r}_1} \le \abs{\beta}.
  \end{equation}
  By following the pattern of proving~(\ref{esteq1}), we have the similar bounded projection error for $\hbsym{r}_1$
  \begin{equation}
    \label{esteq3}
    \abs{\hbsym{n}_0 \cdot \pbrac{\hbsym{r}_1^{(0)} - \hbsym{r}_1}} \le \frac{\pbrac{\pi - 2} L_1 \abs{d}}{2 \pi \norm{\bsym{B}^\trans \bsym{r}}}.
  \end{equation}
  According to Figure~{\ref{figestr1v}}, we have
  \begin{equation}
    \label{esteq4}
    \begin{aligned}
    \abs{\beta} = &\ \arccos\pbrac{\norm{\gamma L_1 \bsym{r}_0} - \frac{\pbrac{\pi - 2} L_1 \abs{d}}{2 \pi \norm{\bsym{B}^\trans \bsym{r}}}}\\
    &\ - \arccos\min\left\{1, \norm{\gamma L_1 \bsym{r}_0} + \frac{\pbrac{\pi - 2} L_1 \abs{d}}{2 \pi \norm{\bsym{B}^\trans \bsym{r}}}\right\}.
    \end{aligned}
  \end{equation}
  If we assume the distance of translation $\bsym{r}$ is larger than the length of the first section $L_1$, i.e., $\norm{\bsym{r}} > L_1$, then we have
  \begin{equation}
    \label{esteq5}
    \begin{aligned}
    \norm{\bsym{B}^\trans \bsym{r}}
    &= \sqrt{\norm{\bsym{r}}^2 d^2 + \pbrac{\cdot}^2 + \pbrac{\cdot}^2 + \pbrac{\cdot}^2}\\
    &\ge \norm{\bsym{r}}\abs{d} > L_1 \abs{d}.
    \end{aligned}
  \end{equation}
  Thus, combining~(\ref{esteq2}), (\ref{esteq4}) and~(\ref{esteq5}) we know that the error of $\hbsym{r}_1$ is also bounded, namely,
  \begin{equation}
    \norm{\hbsym{r}_1^{(0)} - \hbsym{r}_1} \le \arccos{\frac{2L_1\abs{d}}{\pi\norm{\bsym{B}^\trans \bsym{r}}}} - \arccos{\frac{L_1\abs{d}}{\norm{\bsym{B}^\trans \bsym{r}}}}.
  \end{equation}
\end{proof}

\subsection{Jacobian Matrix}\label{appendix_jacobian}

According to \citep{chirikjian2011stochastic}, the left Jacobian and the right Jacobian for $\mathsf{SO}(3)$ are computed as
\begin{equation}
  \begin{aligned}
    &\bsym{J}_l\pbrac{\bsym{\omega}} = \bsym{I}+
    \frac{1-\cos{\norm{\bsym{\omega}}}}{\norm{\bsym{\omega}}^2}\bsym{\omega}^\wedge
    +\frac{\norm{\bsym{\omega}}-\sin{\norm{\bsym{\omega}}}}{\norm{\bsym{\omega}}^3}\pbrac{\bsym{\omega}^\wedge}^2,\\
    &\bsym{J}_r\pbrac{\bsym{\omega}} = \bsym{J}_l^\trans\pbrac{\bsym{\omega}}.
  \end{aligned}
\end{equation}
Hence the right Jacobian for $\mathsf{SE}(3)$ is
\begin{equation}
  \bsym{J}\pbrac{\bsym{\xi}} = \pmat{\bsym{J}_r\pbrac{\bsym{\omega}} & 0_{3\times 3} \\
  \exp{\pbrac{-\bsym{\omega}^\wedge}} \pdiff{}{\bsym{\omega}}\pbrac{\bsym{J}_l\pbrac{\bsym{\omega}}\bsym{v}}
  & \bsym{J}_r\pbrac{\bsym{\omega}}}.
\end{equation}

\end{document}